\newcommand{\apptabfontsize}{\scriptsize} 
\newtheorem{theorem}{Theorem}
\newtheorem{proposition}{Proposition}
\newtheorem{corollary}{Corollary}
\newtheorem{assumption}{Assumption}
\newtheorem{remark}{Remark}
\newcommand{\pct}[1]{#1\%}
\definecolor{oiBlue}{HTML}{0072B2}
\definecolor{oiGreen}{HTML}{009E73}
\definecolor{oiOrange}{HTML}{E69F00}
\colorlet{laneScenario}{oiBlue!8}
\colorlet{laneSignal}{oiGreen!8}
\colorlet{laneAlloc}{oiOrange!10}
\pgfplotsset{
  compat=1.18,
  every axis/.append style={
    tick label style={font=\scriptsize},
    label style={font=\scriptsize},
    title style={font=\scriptsize, yshift=-1.2mm},
    legend style={font=\scriptsize, cells={anchor=west}}
  }
}
\title{Multi-Agent Regime-Conditioned Diffusion (MARCD) for CVaR-Constrained Portfolio Decisions}
\author{%
  \textbf{Ali Atiah Alzahrani}$^{\dagger}$
}
\def\@neuripsyear{2025}
\def\@neuripsordinal{39th}
\providecommand{\@trackname}{}
\begin{document}
\makeatletter
\maketitle

\begingroup
\renewcommand\thefootnote{}\footnotetext{%
\noindent\begin{minipage}{0.96\linewidth}
\raggedright
$^{\dagger}$\textbf{Corresponding author:} \texttt{alialzahrani@pif.gov.sa}\\[3pt]
$^{\dagger}$The views expressed are those of the author and do not reflect the views of any other individual or entity. This material is for research purposes only and does not constitute investment advice.
\end{minipage}
}
\addtocounter{footnote}{-1}
\endgroup

\begin{abstract}
We examine whether \emph{regime-conditioned} generative scenarios combined with a convex CVaR allocator improve portfolio decisions under regime shifts. We present \textbf{MARCD}, a generative-to-decision framework with: (i) a Gaussian HMM to infer latent regimes; (ii) a diffusion generator that produces \emph{regime-conditioned} scenarios; (iii) signal extraction via blended, shrunk moments; and (iv) a governed \emph{CVaR epigraph} QP. 
\textit{Contributions.} Within the Scenario stage we introduce a \emph{tail-weighted} diffusion objective that up-weights low-quantile outcomes relevant for drawdowns and a \emph{regime-expert (MoE) denoiser} whose gate increases with crisis posteriors; both are evaluated end-to-end through the allocator. 
Under strict walk-forward on liquid multi-asset ETFs (2005--2025), MARCD exhibits stronger scenario calibration and materially smaller drawdowns: \textbf{MaxDD 9.3\%} versus \textbf{14.1\%} for BL (a \textbf{34\%} reduction) over 2020--2025 OOS. The framework provides an auditable pipeline with explicit budget, box, and turnover constraints, demonstrating the value of decision-aware generative modeling in finance.
\end{abstract}

\begin{figure}[H]
  \centering
  \begin{adjustbox}{max width=\textwidth}
  \begin{tikzpicture}[
    line cap=round,
    >={Stealth[length=4pt,width=6pt]},
    box/.style={rounded corners=2pt, draw=black!80, line width=0.8pt, fill=white, align=center,
      minimum height=13mm, inner sep=3pt, font=\normalsize, text width=3.65cm},
    arrS/.style={->, line width=0.9pt, draw=oiBlue!85,  shorten <=3pt, shorten >=3pt, solid},
    arrG/.style={->, line width=0.9pt, draw=oiGreen!85, shorten <=3pt, shorten >=3pt, densely dashed},
    arrA/.style={->, line width=0.9pt, draw=oiOrange!90,shorten <=3pt, shorten >=3pt, dash pattern=on 2pt off 1.2pt},
    lab/.style={font=\footnotesize, text=black},
    row/.style={row sep=0mm, column sep=2mm} 
  ]

    \matrix (row) [row] {
      \node[box] (data) {Market data\\(prices, factors)}; &
      \node[box] (reg)  {Regime inference\\(HMM / filters)}; &
      \node[box] (gen)  {Regime-conditioned\\diffusion generator\\
        {\footnotesize Tail-weighted loss $(q,\eta)$; Regime-MoE (gate $g_t$)}}; &
      \node[draw=none, minimum width=9mm, inner sep=0pt] (gap1) {}; & 
      \node[box] (sig)  {Signal extraction\\(risk/alpha features)\\
        {\footnotesize Blend $\lambda$: $(\hat{\mu}_t,\hat{\Sigma}_t)$; shrinkage}}; &
      \node[draw=none, minimum width=9mm, inner sep=0pt] (gap2) {}; & 
      \node[box] (alloc){Allocation\\(CVaR-QP, $\alpha{=}0.95$)\\
        {\footnotesize box \& turnover cap $\tau$;\; optional $+\kappa\|\Delta w\|_1$}}; &
      \node[draw=none, minimum width=12mm, inner sep=0pt] (gap3) {}; & 
      \node[box, text width=3.05cm] (eval) {Backtest\\metrics}; \\ 
    };

    \draw[arrS] (data) -- (reg);
    \draw[arrS] (reg)  -- (gen);

    \draw[arrG] ([xshift=1mm]gen.east) -- ([xshift=-1mm]sig.west)
      node[pos=.5, above=2.2mm, lab]{\(N\) scenarios};

    \draw[arrA] ([xshift=1mm]sig.east) -- ([xshift=-1mm]alloc.west)
      node[pos=.5, above=2.2mm, lab]{\((\hat{\mu}_t,\hat{\Sigma}_t)\)};

    \draw[arrA] ([xshift=1mm]alloc.east) -- ([xshift=-2mm]eval.west)
      node[pos=.5, above=2.6mm, lab]{trades, P\&L};

    \draw[arrS] (reg.north) to[out=60,in=120]
      node[pos=.5, below=1.2mm, lab]{regime posteriors \(\boldsymbol{\pi}_t\)}
      (gen.north);

    \begin{scope}[on background layer]
      \node[rounded corners=3pt, draw=oiBlue!40,  fill=laneScenario, inner sep=4pt, fit=(data)(reg)(gen)] (lane1) {};
      \node[rounded corners=3pt, draw=oiGreen!40, fill=laneSignal,  inner sep=4pt, fit=(sig)]           (lane2) {};
      \node[rounded corners=3pt, draw=oiOrange!45,fill=laneAlloc,   inner sep=4pt, fit=(alloc)(eval)]   (lane3) {};
    \end{scope}

    \node[font=\bfseries\large, anchor=west, text=oiBlue!90]
      at ([xshift=2pt,yshift=3pt]lane1.north west) {Scenario};
    \node[font=\bfseries\large, anchor=west, text=oiGreen!90]
      at ([xshift=2pt,yshift=3pt]lane2.north west) {Signal};
    \node[font=\bfseries\large, anchor=west, text=oiOrange!90]
      at ([xshift=2pt,yshift=3pt]lane3.north west) {Allocation};

  \end{tikzpicture}
  \end{adjustbox}
  \caption{Multi-Agent Regime-Conditioned Diffusion (MARCD). }
  \label{fig:pipeline_swim_color}
\end{figure}
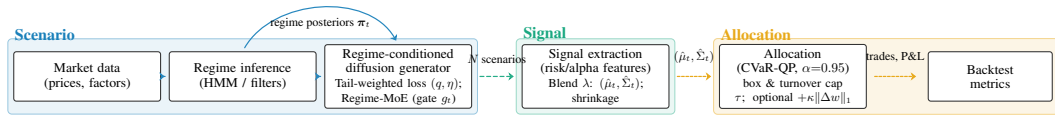

\section{Introduction}
Financial returns are non-stationary, with abrupt regime changes (e.g., 2008, 2020, 2022) that challenge mean--variance optimization (MVO) \citep{markowitz1952portfolio} and Black--Litterman (BL) \citep{black1992global}. Modern generative models---GANs \citep{yoon2019time}, diffusion \citep{rasul2021autoregressive}, transformers \citep{zhou2023informer}---produce realistic sequences but are often unconditioned and decoupled from decisions. Large peak-to-trough losses are driven by left-tail co-movements that standard diffusion training (MSE) tends to underweight, and a single denoiser blurred across regimes can dilute crisis dynamics.

We refer to our method as \emph{Multi-Agent Regime-Conditioned Diffusion (MARCD)} and use \textbf{MARCD} thereafter. MARCD aligns a diffusion generator to HMM posteriors while explicitly improving tail fidelity and crisis behavior: we employ a \emph{tail-weighted} diffusion objective to emphasize adverse outcomes that govern realized drawdowns, and we augment the denoiser with a lightweight \emph{regime expert} (mixture-of-experts) whose gate increases with the crisis posterior, enriching co-crash structure without degrading calm-regime fit. Generated scenarios feed a CVaR-focused, turnover-aware allocator within an auditable, walk-forward protocol.

\textbf{Contributions.} We integrate regime modeling, tail-aware generation, and robust allocation: (i) regime-conditioned diffusion aligned to HMM posteriors with a tail-weighted training objective that improves left-tail dependence; (ii) a regime-aware expert denoiser (MoE) that specializes to high-volatility states via learned gating; (iii) a multi-agent pipeline translating samples to decisions; and (iv) a CVaR-focused allocation with turnover control and explicit governance (walk-forward, constraints).

\section{Related Work}
\textbf{Regimes.} HMMs capture structural breaks and state dependence \citep{hamilton1989new,kim1994dynamic,ang2012regimes}; we use their posteriors as conditioning signals for both generation and allocation. \textbf{Generative time series.} GAN- and VAE-based generators (e.g., TimeGAN and TimeVAE) and diffusion models improve realism and calibration \citep{yoon2019time,desai2021timevae,rasul2021autoregressive,tashiro2021csdi}, yet are often unconditioned and loosely coupled to portfolio decisions; recent TS diffusions (\emph{TSDiff}, \emph{mr-Diff}) and a finance simulator (\emph{CTS-GAN}) are complementary to our regime-conditioned, decision-tied approach \citep{kollovieh2023tsdiff,shen2024mrdiff,istiaque2024ctsgan}. \textbf{Robust portfolios.} Distributionally robust and CVaR formulations address fat tails \citep{delage2010distributionally,rockafellar2000optimization} but hinge on scenario quality; we feed a regime-conditioned set into a convex allocator. \textbf{Multi-agent.} Building on coordination frameworks \citep{lowe2017multi}, we orchestrate Scenario$\to$Signal$\to$Allocation with light, role-based agents. \textbf{Sequence models.} LSTMs \citep{hochreiter1997lstm,fischer2018deep}, TCNs \citep{bai2018tcn}, and Transformers \citep{lim2021tft} remain standard forecasters; we treat point-forecast models as standard baselines, focusing instead on regime-conditioned \emph{distributional} generation wired to a convex CVaR allocator under strict walk-forward evaluation.

\section{Methodology}
\noindent\textit{Setup.} 
At rebalancing date $t$, let $\mathbf{R}_t\in\mathbb{R}^d$ denote the observed return vector and $\mathbf{w}_t\in\mathbb{R}^d$ the portfolio weights, which satisfy
\begin{equation}
\mathbf{1}^\top \mathbf{w}_t = 1,\qquad \boldsymbol{\ell}\le \mathbf{w}_t \le \boldsymbol{u}.
\end{equation}
Market regimes are modeled by a $K$-state Gaussian HMM with latent state $S_t$ and posterior vector $\boldsymbol{\pi}_t$, where $\pi_{t,k}=P(S_t{=}k\mid \mathbf{R}_{1:t})$. The generator produces $N$ next-period scenarios $\{\mathbf{r}^{(i)}_{t+1}\}_{i=1}^{N}$. Turnover is $\tau_t=\|\mathbf{w}_t-\mathbf{w}_{t-1}\|_1$. To avoid notation overload, we reserve $\alpha_s$ for the diffusion schedule and $\alpha$ for the allocator’s CVaR level, use $\lambda$ for scenario blending, and $\lambda_\mu$ for the expected-return weight in the allocator.

\subsection{Method Overview and Rationale}\label{sec:overview}
Our objective is to reduce pathwise drawdowns while preserving risk-adjusted return under strict walk-forward governance. We posit that returns exhibit regime-dependent higher-order dependence and that left-tail comovements matter more for realized drawdowns than central calibration. Accordingly, we bias generation toward adverse outcomes, make the generator responsive to regime signals, and align the allocator with tail-focused yet convex objectives that remain auditable and turnover-aware. Concretely, this yields a tail-weighted diffusion objective to improve co-crash fidelity, a regime-aware denoiser that specializes to high-volatility episodes, and a spectral CVaR allocator with a simple regime-adaptive risk throttle.

\noindent\textit{Core training setup.}
UNet (8 stages, base ch.=64), cosine noise schedule, $\epsilon$-prediction, EMA$=0.999$;
AdamW $1\mathrm{e}{-4}$; batch $256$; $250$k steps; seed $2020$.

\subsection{Regime Detection (strict walk-forward)}
We fit/update a $K$-state Gaussian HMM on $\{\mathbf{R}_s\}_{s\le t}$ and extract posteriors $\pi_{t,k} = P(S_t{=}k \mid \mathbf{R}_{1:t})$.
Conditioning features $\boldsymbol{z}_t$ encode regime context (e.g., $\arg\max_k\pi_{t,k}$ and recent statistics). All estimation is strict walk-forward: only data up to $t$ is used; HMM parameters refresh on a rolling window.

\subsection{Regime-Conditioned Diffusion}
We train a variance-preserving diffusion model to denoise $\mathbf{x}_s$ with regime context $\boldsymbol{z}_t$:
\begin{align}
q(\mathbf{x}_s\mid \mathbf{x}_0)=\mathcal{N}\!\big(\sqrt{\alpha_s}\mathbf{x}_0,(1{-}\alpha_s)\mathbf{I}\big),\quad
\mathcal{L}_{\text{diff}}=\mathbb{E}\big\|\boldsymbol{\epsilon}-\boldsymbol{\epsilon}_\theta(\sqrt{\alpha_s}\mathbf{x}_0+\sqrt{1{-}\alpha_s}\boldsymbol{\epsilon},s,\boldsymbol{z}_t)\big\|^2.
\end{align}

\emph{Implementation.} We use a conditional DDPM with a UNet-style denoiser (4 down/4 up blocks), a cosine noise schedule, exponential moving average (EMA) of weights, and $\approx$1–2M parameters; conditioning is via the regime embedding $\boldsymbol{z}_t$ injected at each block.

At deployment time $t$ we sample $N$ returns $\{\mathbf{r}^{(i)}_{t+1}\}$ conditioned on $\boldsymbol{z}_t$. The Signal Agent forms blended moments
\begin{align}
\hat{\boldsymbol{\mu}}_t=\lambda\,\boldsymbol{\mu}_{\text{synth}}+(1{-}\lambda)\boldsymbol{\mu}_{\text{hist}},\qquad
\hat{\boldsymbol{\Sigma}}_t=\lambda\,\boldsymbol{\Sigma}_{\text{synth}}+(1{-}\lambda)\boldsymbol{\Sigma}_{\text{hist}}.
\end{align}

\subsection{Allocation Agent: CVaR Epigraph Program and Properties}
Define per-scenario loss $\ell_i = -\,\mathbf{w}^\top \mathbf{r}^{(i)}_{t+1}$. We solve the convex program
\begin{align}
\min_{\mathbf{w},\,\zeta,\,\{s_i\}} \;&
-\lambda_\mu\,\hat{\boldsymbol{\mu}}_t^\top \mathbf{w}
+ \gamma\,\mathbf{w}^\top \hat{\boldsymbol{\Sigma}}_t \mathbf{w}
+ \zeta + \frac{1}{(1-\alpha)N}\sum_{i=1}^N s_i \\
\text{s.t. }\;& s_i \ge 0,\; s_i \ge \ell_i - \zeta,\; i=1,\dots,N,\\
& \mathbf{1}^\top \mathbf{w}=1,\;\; \boldsymbol{\ell}\le \mathbf{w}\le \mathbf{u}^{\text{box}},\;\;
\|\mathbf{w}-\mathbf{w}_{t-1}\|_1 \le \tau.
\end{align}

When used, we add a turnover penalty $+\kappa\|\mathbf{w}-\mathbf{w}_{t-1}\|_1$.

\textbf{Convexity/complexity.} The objective combines a quadratic term $\mathbf{w}^\top \hat{\boldsymbol{\Sigma}}_t \mathbf{w}
$ with the convex CVaR epigraph; constraints are affine, yielding a QP with $O(N)$ linear epigraph constraints. For $d{=}10$ and $N{=}1024$, interior-point methods scale as $O(d^3 + N d^2)$ and run fast on commodity CPUs. Unless stated otherwise, the CVaR level is $\alpha{=}0.95$. We shrink $\hat{\boldsymbol{\Sigma}}_t$ toward the identity to obtain a well-conditioned covariance matrix \citep{LedoitWolf2004}.

\paragraph{Note.}
We use the standard Rockafellar--Uryasev CVaR at level $\alpha$ in all experiments; spectral risk measures are left to future work.

\begin{quote}\small
\textbf{Governance \& Auditability.}
The allocator’s KKT system (App.~A.3) logs active constraints, tail weights,
and duals at each rebalance. This yields solver-side audit trails linking regime
posteriors and tail emphasis to realized trades—important for model risk and compliance.
\end{quote}

\begin{algorithm}[t]
\caption{Walk-forward regime-conditioned decision pipeline}
\label{alg:rcd}
\small
\begin{algorithmic}[1]
\REQUIRE assets $d$, states $K$, scenarios $N$, CVaR level $\alpha$, bounds $(\boldsymbol{\ell},\boldsymbol{u})$, turnover cap $\tau$, blend $\lambda$, regs $(\gamma,\lambda_\mu)$
\STATE Initialize $\mathbf{w}_{t_0}$ (e.g., equal-weight)
\FOR{$t=t_0,\dots,t_{\text{end}}$}
  \STATE \textit{Regime update:} fit/update HMM on $\{\mathbf{R}_s\}_{s\le t}$; compute $\pi_{t,\cdot}$ and $\boldsymbol{z}_t$
  \STATE \textit{Scenario gen:} draw $\{\mathbf{r}^{(i)}_{t+1}\}_{i=1}^N \sim p_\theta(\cdot\mid \boldsymbol{z}_t)$
  \STATE \textit{Signals:} form $(\hat{\boldsymbol{\mu}}_t,\hat{\boldsymbol{\Sigma}}_t)$ via blending with weight $\lambda$
  \STATE \textit{Allocation:} solve (4) for $\mathbf{w}_t$ with $(\boldsymbol{\ell},\boldsymbol{u},\tau)$; optionally include cost penalty $\kappa\|\mathbf{w}_t{-}\mathbf{w}_{t-1}\|_1$
  \STATE Trade from $\mathbf{w}_{t-1}$ to $\mathbf{w}_t$; record turnover; log diagnostics for auditability
\ENDFOR
\end{algorithmic}
\end{algorithm}


\subsection{Tail-Weighted Diffusion Loss}
We avoid target leakage by using a \emph{portfolio-free} proxy for adverse outcomes during training. Define the worst single-asset one-step loss
$\tilde{\ell} = -\min_j r_j$ and reweight errors when $\tilde{\ell}$ falls in its lower $q$-quantile:
\begin{equation}
\mathcal{L}_{\text{tail}}
= \mathbb{E}\!\left[\bigl(1+\eta\,\mathbf{1}\{\tilde{\ell} \le Q_q(\tilde{\ell})\}\bigr)\,
\|\boldsymbol{\varepsilon}-\boldsymbol{\varepsilon}_\theta(\cdot)\|_2^2\right],
\qquad q\!\in\![0.05,0.10],~\eta\!\in\![1,3].
\end{equation}
At deployment (allocation), portfolio losses use $\ell=-\mathbf{w}^\top \mathbf{r}$ only for evaluation/diagnostics.

\subsection{Regime-MoE Denoiser}
We instantiate a two-expert denoiser with a crisis head specialized to high-volatility regimes.
A gate $g_t=\sigma(\mathrm{MLP}(\mathbf{z}_t))$ mixes experts:
$\hat{\epsilon}_\theta = (1-g_t)\,\hat{\epsilon}^{\text{base}}_\theta + g_t\,\hat{\epsilon}^{\text{crisis}}_\theta$.
At sampling, $g_t$ increases with the HMM crisis posterior, enriching co-crash structure.

\subsection{Theory Highlights}\label{sec:theory-highlights}
\noindent\textbf{Takeaways (statements; proofs in App.~A).}
\begin{theorem}[Spectral CVaR Control by Tail-Weighted Diffusion; App.~A.11]
Minimizing $L_{\text{tail}}$ in Eq.~(7) controls a spectral-risk upper bound on the
\emph{decision-relevant} CVaR generalization gap for any feasible portfolio $w$,
scaling with $(1-\alpha)^{-1}$ and the denoising error on the lower-$q$ tail.
\end{theorem}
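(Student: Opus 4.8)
The plan is to bound the \emph{decision-relevant} CVaR generalization gap — for a fixed feasible $\mathbf{w}$, the discrepancy between $\mathrm{CVaR}_\alpha(-\mathbf{w}^\top\mathbf{r})$ under the true next-period law $p_{\text{data}}$ and under the generator's law $p_\theta$ — by routing it through the Rockafellar--Uryasev variational form \citep{rockafellar2000optimization} and then converting a distributional discrepancy into the diffusion denoising residual, localized to the adverse $q$-tail by the reweighting in Eq.~(7). First I would write $\mathrm{CVaR}_\alpha^{Q}(-\mathbf{w}^\top\mathbf{r})=\min_{\zeta}\{\zeta+(1-\alpha)^{-1}\mathbb{E}_{Q}[(-\mathbf{w}^\top\mathbf{r}-\zeta)_+]\}$ and apply the elementary inequality $|\inf_\zeta f-\inf_\zeta g|\le\sup_\zeta|f-g|$ to get $|\mathrm{CVaR}_\alpha^{p_\theta}-\mathrm{CVaR}_\alpha^{p_{\text{data}}}|\le(1-\alpha)^{-1}\sup_\zeta\big|\mathbb{E}_{p_\theta}[(-\mathbf{w}^\top\mathbf{r}-\zeta)_+]-\mathbb{E}_{p_{\text{data}}}[(-\mathbf{w}^\top\mathbf{r}-\zeta)_+]\big|$, which already isolates the $(1-\alpha)^{-1}$ factor; the remaining work is to show the supremand is controlled by the tail denoising error.

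Next I would localize the supremand. For any long-only feasible weight ($\mathbf{w}\ge\mathbf{0}$, $\mathbf{1}^\top\mathbf{w}=1$) the portfolio loss is dominated by the worst single-asset loss, $-\mathbf{w}^\top\mathbf{r}\le-\min_j r_j=\tilde\ell$, so the hinge $(-\mathbf{w}^\top\mathbf{r}-\zeta)_+$ is supported on $\{\tilde\ell>\zeta\}$; with $q\ge1-\alpha$ and $\zeta$ ranging over a neighborhood of the optimizing VaR level, this event lies (up to an $O(q)$ boundary correction) inside the tail set $\mathcal{T}=\{\tilde\ell\text{ in its adverse }q\text{-fraction}\}$ on which the weight $1+\eta$ is active in Eq.~(7). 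On $\mathcal{T}$ the integrand is $1$-Lipschitz in $\mathbf{r}$ (the hinge is $1$-Lipschitz and $\|\mathbf{w}\|_1=1$), so by Kantorovich--Rubinstein the localized expectation gap is at most a tail-restricted $W_1$ distance between $p_\theta$ and $p_{\text{data}}$. Invoking the now-standard diffusion sampling bounds (reverse-SDE discretization plus denoising score matching), $W_1(p_\theta,p_{\text{data}})\le W_2(p_\theta,p_{\text{data}})\lesssim\big(\int_0^T w(s)\,\mathbb{E}\|\boldsymbol{\varepsilon}-\boldsymbol{\varepsilon}_\theta\|_2^2\,ds\big)^{1/2}+(\text{discretization}+\text{initialization})$, and restricting to $\mathcal{T}$ retains only the reweighted portion of the residual: since $\mathcal{L}_{\text{tail}}\ge(1+\eta)\,\mathbb{E}[\mathbf{1}_{\mathcal{T}}\|\boldsymbol{\varepsilon}-\boldsymbol{\varepsilon}_\theta\|_2^2]$, the tail denoising error is at most $\mathcal{L}_{\text{tail}}/(1+\eta)$.

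Assembling the pieces gives, uniformly over feasible $\mathbf{w}$, a bound of the form $|\mathrm{CVaR}_\alpha^{p_\theta}(-\mathbf{w}^\top\mathbf{r})-\mathrm{CVaR}_\alpha^{p_{\text{data}}}(-\mathbf{w}^\top\mathbf{r})|\le\tfrac{C}{1-\alpha}\sqrt{\mathcal{L}_{\text{tail}}/(1+\eta)}+(\text{disc.}+\text{init.}+O(q))$, with an additional $O\big((\,(1-\alpha)N\,)^{-1/2}\big)$ Monte Carlo term if the allocator's sample CVaR is used in place of the population one. Finally I would identify the right-hand side as a spectral-risk bound: by the Kusuoka representation, the objects $(1-\alpha)^{-1}\mathbb{E}[(\cdot-\zeta)_+]$ are exactly spectral (coherent, law-invariant) risk measures with spectrum supported on the top $1-\alpha$ quantiles, so what we obtain is precisely a spectral functional of the denoising residual restricted to the lower-$q$ tail — the claimed statement.

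The hard part will be the localization in the second paragraph: the standard diffusion-convergence estimates are \emph{global}, whereas the reverse process transports probability mass across regions, so the generated law restricted to $\mathcal{T}$ does not depend only on the score error inside $\mathcal{T}$. A fully rigorous localization would require either a Girsanov/score-perturbation argument with a spatial cutoff — which needs regularity and bounded-density control near the tail that I would have to posit as an assumption — or settling for the weaker but still informative statement that $\mathcal{L}_{\text{tail}}$ upper-bounds $\mathcal{L}_{\text{diff}}$ while sharpening the tail contribution by the factor $1+\eta$. A secondary subtlety is the $\zeta^\star\approx\mathrm{VaR}_\alpha$ step when the loss law has an atom at its $\alpha$-quantile; this is absorbed by the usual Rockafellar--Uryasev fact that the CVaR infimum is attained on the (possibly degenerate) VaR interval, leaving the localization constant unchanged.
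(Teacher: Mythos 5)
Your opening move (Rockafellar--Uryasev plus $|\inf_\zeta f-\inf_\zeta g|\le\sup_\zeta|f-g|$ to isolate the $(1-\alpha)^{-1}$ factor) parallels the paper's use of the RU representation, but from there the routes diverge, and yours contains exactly the gap you flag. The paper's Appendix~A.11 never compares the two unconditional laws $p_\theta$ and $p_{\text{data}}$; it compares the CVaR of the \emph{coupled} pair $-w^\top r$ versus $-w^\top \hat r$, where $\hat r$ is the decoded return attached to the same noise realization, under an explicit Tail-Lipschitz Decoder assumption: $\|r-\hat r\|\le L_{\mathrm{dec}}\,\|\varepsilon-\varepsilon_\theta\|$ on the tail set $\{\tilde\ell\le Q_q(\tilde\ell)\}$. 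With that coupling the argument is elementary: the RU tail-average form bounds the CVaR difference by $(1-\alpha)^{-1}$ times a tail average of $|w^\top(r-\hat r)|\le\|w\|_2\|r-\hat r\|$, and a Cauchy--Schwarz step against the spectral weight $\phi(u)=1+\eta\,\mathbf{1}\{u\le q\}$ (with $\bar\phi=1+\eta q$ and $U$ the PIT of $\tilde\ell$) gives $\frac{L_{\mathrm{dec}}\|w\|_2}{1-\alpha}\sqrt{\bar\phi\,\mathbb{E}\big[\phi(U)\,\|\varepsilon-\varepsilon_\theta\|_2^2\big]}$, where the expectation under the root is precisely $L_{\text{tail}}$. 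No diffusion-sampling convergence theory, no Wasserstein estimate, and no localization of the reverse SDE enters; the localization you correctly identify as the hard step is assumed away by the coupling hypothesis.

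That is where your proposal breaks as a proof of this theorem. Routing through Kantorovich--Rubinstein and global score-matching-to-$W_1$ bounds cannot, as you concede, confine the relevant error to the lower-$q$ tail, because the reverse dynamics transport mass across regions; and your fallback (using $L_{\text{tail}}\ge \mathcal{L}_{\text{diff}}$ together with $L_{\text{tail}}\ge(1+\eta)\,\mathbb{E}[\mathbf{1}_{\mathcal{T}}\|\varepsilon-\varepsilon_\theta\|_2^2]$) only yields a bound in the \emph{global} denoising error, which does not establish the claim that the gap scales with the denoising error \emph{on the tail} --- it essentially reproduces the paper's separate $W_1$-continuity result (A.15) rather than A.11. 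Two smaller mismatches: your domination $-w^\top r\le\tilde\ell$ requires long-only weights, whereas the statement covers any feasible $w$ under box constraints and the paper's bound carries $\|w\|_2$ without that restriction; and the "spectral" label in the paper refers to the explicit weight measure $d\Phi\propto\phi(u)\,du$ appearing inside the bound, not to a Kusuoka representation of the CVaR side. To salvage your route you would need to import the paper's Tail-Lipschitz Decoder (or an equivalent regularity assumption enabling a localized Girsanov/cutoff argument), at which point your first step plus Cauchy--Schwarz closes the proof along the paper's lines.
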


\begin{theorem}[MoE Oracle, Consistency \& Stability; App.~A.13]
With a gate monotone in the HMM crisis posterior (Sec.~3.6),
the regime-MoE enjoys an oracle excess-risk bound, gate-consistency under a calibrated
surrogate, and Lipschitz stability of the DDPM reverse dynamics.
\end{theorem}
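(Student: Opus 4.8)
The plan is to treat the three claims modularly, since each rests on a different classical toolkit: an oracle inequality via convexity of the denoising loss plus a uniform-deviation term; a calibration-transfer argument for gate consistency; and a contraction/composition bound for the reverse chain.

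\textbf{Oracle excess-risk bound.} I would start from the population denoising risk $R(\theta)=\mathbb{E}\|\boldsymbol{\epsilon}-\boldsymbol{\epsilon}_\theta\|_2^2$ and use that the MoE output $\hat{\boldsymbol{\epsilon}}_\theta=(1-g_t)\hat{\boldsymbol{\epsilon}}^{\text{base}}_\theta+g_t\hat{\boldsymbol{\epsilon}}^{\text{crisis}}_\theta$ is a convex combination of the expert predictions. By Jensen applied pointwise in $(\mathbf{x}_s,s,\mathbf{z}_t)$, the MoE loss is at most $(1-g_t)\ell^{b}+g_t\ell^{c}$ with $\ell^{b},\ell^{c}$ the per-expert squared errors. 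Subtracting the pointwise-best-expert oracle $\mathbb{E}\min(\ell^{b},\ell^{c})$ (which dominates the best-single-expert oracle) leaves exactly $\mathbb{E}[\,g_t(\ell^{c}-\ell^{b})\mathbf{1}\{\ell^b\le\ell^c\}+(1-g_t)(\ell^b-\ell^c)\mathbf{1}\{\ell^c<\ell^b\}\,]$, i.e.\ the loss gap weighted by the chance the gate backs the worse expert. I would control this with (i) a margin/low-ambiguity assumption: outside a hard region $\mathcal{H}_\delta=\{|\pi^{\text{crisis}}_t-\pi^\star|\le\delta\}$ the better expert is the sign of $\pi^{\text{crisis}}_t-\pi^\star$; and (ii) monotonicity + calibration of the gate (Sec.~3.6), so $g_t$ agrees with that sign outside $\mathcal{H}_\delta$. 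The excess is then $\le\Delta_{\max}\,\mathbb{P}(\mathcal{H}_\delta)$ with $\Delta_{\max}=\|\ell^c-\ell^b\|_\infty$, and $\mathbb{P}(\mathcal{H}_\delta)$ is small under a Tsybakov-type margin condition on the HMM posterior. Finally I add the estimation error $R(\hat\theta)-R(\theta^\star)\le 2\sup_\theta|R_n(\theta)-R(\theta)|$, bounded by the Rademacher complexity / Dudley entropy integral of the MoE class (two norm-bounded UNet experts composed with an $L$-Lipschitz sigmoid gate), which is finite via the usual spectral-norm capacity bounds.

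\textbf{Gate consistency.} The gate $g_t=\sigma(\mathrm{MLP}(\mathbf{z}_t))$ trained against the pseudo-label $y_t=\mathbf{1}\{\text{crisis expert is better}\}$ (equivalently $\mathbf{1}\{S_t\text{ is a high-vol state}\}$) with logistic loss is a classification-calibrated surrogate in the Bartlett--Jordan--McAuliffe sense (convex, differentiable, negative derivative at $0$). I would invoke the calibration-transfer inequality, excess $0/1$-risk $\le\psi^{-1}(\text{excess surrogate risk})$, combine it with a surrogate-side estimation bound and a universal-approximation assumption on the MLP family, and conclude the learned gate converges to the Bayes gate $\mathbf{1}\{\eta(\mathbf{z}_t)>1/2\}$ with $\eta(\mathbf{z})=\mathbb{P}(y_t=1\mid\mathbf{z})$. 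Monotonicity then follows from a monotone-likelihood-ratio property of the HMM emission densities between calm and crisis states: under MLR in the scalar summary $\pi^{\text{crisis}}_t$ carried inside $\mathbf{z}_t$, $\eta$ is nondecreasing in $\pi^{\text{crisis}}_t$, so a monotone-constrained (or, asymptotically, a correctly-specified) gate inherits the monotonicity used above—closing the loop.

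\textbf{Lipschitz stability of the reverse dynamics.} I would first show $\hat{\boldsymbol{\epsilon}}_\theta$ is globally Lipschitz in $(\mathbf{x}_s,\mathbf{z}_t)$: each expert UNet composes Lipschitz primitives (convolutions of bounded operator norm, smooth activations, normalization with bounded scale), so $\mathrm{Lip}(\boldsymbol{\epsilon}^{\bullet}_\theta)\le\prod_j L_j<\infty$; the gate is $(L_{\text{MLP}}/4)$-Lipschitz; and a convex blend with Lipschitz weights obeys $\mathrm{Lip}(\hat{\boldsymbol{\epsilon}}_\theta)\le\max(\mathrm{Lip}\,\boldsymbol{\epsilon}^{b},\mathrm{Lip}\,\boldsymbol{\epsilon}^{c})+\mathrm{Lip}(g)\,(\|\boldsymbol{\epsilon}^{b}\|_\infty+\|\boldsymbol{\epsilon}^{c}\|_\infty)$ once expert outputs are bounded (true on the compact input range after clipping). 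The variance-preserving update $\mathbf{x}_{s-1}=\tfrac{1}{\sqrt{1-\beta_s}}\big(\mathbf{x}_s-\tfrac{\beta_s}{\sqrt{1-\alpha_s}}\hat{\boldsymbol{\epsilon}}_\theta(\mathbf{x}_s,s,\mathbf{z}_t)\big)+\sigma_s\boldsymbol{\xi}$ is affine-plus-Lipschitz; coupling the noise $\boldsymbol{\xi}$ across two runs gives per-step constants $a_s=\tfrac{1}{\sqrt{1-\beta_s}}(1+\tfrac{\beta_s}{\sqrt{1-\alpha_s}}\mathrm{Lip}_x\hat{\boldsymbol{\epsilon}})$ in $\mathbf{x}$ and $b_s=\tfrac{\beta_s}{\sqrt{1-\beta_s}\sqrt{1-\alpha_s}}\mathrm{Lip}_z\hat{\boldsymbol{\epsilon}}$ in $\mathbf{z}$. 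Composing over $s=S,\dots,1$ yields end-to-end constants $\prod_s a_s$ (perturbation of the initial latent) and $\sum_s b_s\prod_{s'<s}a_{s'}$ (perturbation of the conditioning), which I bound explicitly using the cosine schedule (small $\beta_s$ for most $s$, with $1-\alpha_s$ bounded away from $0$ except near $s=S$, handled separately).

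\textbf{Main obstacle.} The crux is keeping the telescoped product in the stability claim non-vacuous: naive bounds give $a_s>1$ and hence geometric growth in the number of diffusion steps $S$. The fix is to exploit the variance-preserving structure—$\prod_s\sqrt{1-\beta_s}=\sqrt{\alpha_S}$ is a fixed small number and the score-correction factors contribute only a controlled multiplicative term—so the product collapses to roughly $\alpha_S^{-1/2}$ times a schedule constant rather than $c^{S}$. Getting that balance cleanly, and arranging the margin condition in the oracle bound so it is \emph{implied by} HMM posterior consistency rather than assumed alongside it, is where the real work lies; the calibration step is essentially a citation once the pseudo-label is fixed.
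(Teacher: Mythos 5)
Your route is viable but genuinely different from the paper's, and in one place you attempt more than the paper actually establishes. For the oracle bound, the paper (App.~A.13, with the sketch in A.7) compares the MoE to the \emph{soft Bayes mixture} $\hat\varepsilon^\star=(1-g^\star)\varepsilon^\star_{\mathrm{base}}+g^\star\varepsilon^\star_{\mathrm{crisis}}$ with $g^\star(Z)=\mathbb{P}(C=\mathrm{crisis}\mid Z)$, and decomposes the excess risk via the law of total expectation / projection onto the MoE span into three additive pieces: a gate term $c_1\,\mathbb{E}[(g-g^\star)^2]$, expert approximation errors, and a margin mass $c_3\kappa_m$. You instead use Jensen to upper-bound the MoE loss by the gated convex combination of per-expert losses and compare to the pointwise-min (best-expert) oracle, controlling the residual by a Tsybakov-type margin on the crisis posterior plus a Rademacher estimation term; this buys an explicit ``probability the gate backs the wrong expert'' interpretation, while the paper's decomposition buys modularity (the gate-consistency proposition plugs directly into the $\mathbb{E}[(g-g^\star)^2]$ term). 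Consistent with that, your calibration step targets sign/0--1 consistency toward the hard Bayes gate via the Bartlett--Jordan--McAuliffe $\psi$-transform, whereas the paper's proposition asserts $L_2$ consistency of $g$ to the soft posterior $g^\star$ at rate $O(\mathfrak{R}_n)$; your version feeds your hard-oracle decomposition but is not literally the quantity the paper's bound consumes. For stability, the paper only proves a one-step Lipschitz bound $L_{\mathrm{MoE}}\le(1+\|g\|_\infty)L_\varepsilon+L_g\,\Delta_\varepsilon$ and \emph{assumes} contractivity via ``usual step-size criteria''; your per-step composition with explicit products over the cosine schedule is a stronger claim, and the blow-up issue you flag is real --- the paper simply does not attempt that telescoping.

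One step needs repair: in your excess-risk expression the indicators $\mathbf{1}\{\ell^b\le\ell^c\}$ compare \emph{realized} squared errors, which depend on the drawn noise, so ``the better expert is determined by the sign of the posterior gap outside $\mathcal{H}_\delta$'' does not hold pointwise. You need to first condition on $(\mathbf{x}_s,s,\mathbf{z}_t)$ and run the argument at the level of conditional risks (with a dominance assumption on the tail/crisis region, as in App.~A.19), after which the margin argument goes through. With that fix, and accepting the weaker per-step form of the stability claim (as the paper does), your proposal establishes the same three conclusions by a different decomposition.
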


\begin{theorem}[Allocator Lipschitzness \& Regret; App.~A.14]
The CVaR epigraph QP (Sec.~3.4) has a Lipschitz solution map in $(\mu,\Sigma)$ and
admits a decision-regret bound under moment and CVaR perturbations; the CVaR error
term shrinks with $L_{\text{tail}}$.
\end{theorem}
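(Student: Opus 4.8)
The plan is to treat the allocator's value as a \emph{strongly convex} program whose objective is perturbed in three channels --- the expected-return vector $\hat{\boldsymbol\mu}_t$, the covariance $\hat{\boldsymbol\Sigma}_t$, and the scenario set entering the CVaR epigraph term --- and to push each perturbation through a standard variational-stability argument. First I would record that the feasible set $P=\{\mathbf w:\mathbf 1^\top\mathbf w=1,\ \boldsymbol\ell\le \mathbf w\le \mathbf u^{\mathrm{box}},\ \|\mathbf w-\mathbf w_{t-1}\|_1\le\tau\}$ is a fixed compact polytope, independent of $(\mu,\Sigma)$ and of the scenarios, with $B:=\sup_{\mathbf w\in P}\|\mathbf w\|_2<\infty$. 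Since $\hat{\boldsymbol\Sigma}_t$ is Ledoit--Wolf--shrunk toward the identity, $\hat{\boldsymbol\Sigma}_t\succeq\sigma_{\min}\mathbf I$ with $\sigma_{\min}>0$, so the objective in Eq.~(5) is $m$-strongly convex on the affine hull of $P$ with $m=2\gamma\sigma_{\min}$; the optional $\kappa\|\cdot\|_1$ term is convex and parameter-free and only helps. This delivers uniqueness of the minimizer $\mathbf w^\star$ and the quadratic-growth inequality $\tfrac m2\|\mathbf w-\mathbf w^\star\|_2^2\le F(\mathbf w)-F(\mathbf w^\star)$ on $P$, which is the engine for everything below.

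\emph{Lipschitz solution map.} For two parameter configurations I would write the objective gap $g(\mathbf w):=F'(\mathbf w)-F(\mathbf w)=-\lambda_\mu(\mu'-\mu)^\top\mathbf w+\gamma\,\mathbf w^\top(\Sigma'-\Sigma)\mathbf w+\bigl(C(\mathbf w;P_N')-C(\mathbf w;P_N)\bigr)$, where $C(\mathbf w;P_N)=\min_\zeta\,\zeta+\tfrac1{(1-\alpha)N}\sum_i(\ell_i-\zeta)_+$ is the epigraph CVaR term. Summing the two quadratic-growth inequalities at $\mathbf w^\star$ and $\mathbf w^{\star\prime}$ gives $m\|\mathbf w^\star-\mathbf w^{\star\prime}\|_2^2\le g(\mathbf w^\star)-g(\mathbf w^{\star\prime})$, and I bound the right-hand side termwise: the $\mu$-term by $\lambda_\mu\|\mu-\mu'\|\,\|\mathbf w^\star-\mathbf w^{\star\prime}\|$; the $\Sigma$-term, using the identity $a^\top Aa-b^\top Ab=(a-b)^\top A(a+b)$ for symmetric $A$, by $2\gamma B\|\Sigma-\Sigma'\|_{\mathrm{op}}\,\|\mathbf w^\star-\mathbf w^{\star\prime}\|$; and the CVaR term via the classical fact that $\mathrm{CVaR}_\alpha$ is $(1-\alpha)^{-1}$-Lipschitz in the $W_1$ distance of the loss law while $\mathbf r\mapsto-\mathbf w^\top\mathbf r$ is $\|\mathbf w\|_2$-Lipschitz, so $\sup_{\mathbf w\in P}|C(\mathbf w;P_N)-C(\mathbf w;P_N')|\le\tfrac{B}{1-\alpha}W_1(P_N,P_N')$. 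Solving the resulting quadratic inequality in $\|\mathbf w^\star-\mathbf w^{\star\prime}\|$ yields
\[
\|\mathbf w^\star-\mathbf w^{\star\prime}\|\ \le\ \frac{\lambda_\mu}{m}\,\|\mu-\mu'\|+\frac{2\gamma B}{m}\,\|\Sigma-\Sigma'\|_{\mathrm{op}}+\sqrt{\tfrac{2B}{m(1-\alpha)}\,W_1(P_N,P_N')},
\]
i.e.\ Lipschitz in $(\mu,\Sigma)$ with explicit constants $\lambda_\mu/m$ and $B/\sigma_{\min}$, and H\"older-$\tfrac12$ in the scenario channel --- upgradable to Lipschitz if the loss c.d.f.\ has a density bounded below near its $\alpha$-quantile, in which case the subgradient of $C$ in $\mathbf w$ (a tail-conditional expectation) is itself $W_1$-Lipschitz.

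\emph{Decision regret and the role of $L_{\mathrm{tail}}$.} For the regret I compare the plug-in allocation $\hat{\mathbf w}=\arg\min_{\mathbf w\in P}\hat F$, built from the blended moments and the diffusion-scenario empirical measure $\hat P_N$, against the oracle $\mathbf w^\star=\arg\min_{\mathbf w\in P}F^\star$ defined with the true moments $(\mu^\star,\Sigma^\star)$ and the true return law $P^\star$. The standard three-term split $F^\star(\hat{\mathbf w})-F^\star(\mathbf w^\star)=[F^\star-\hat F](\hat{\mathbf w})+[\hat F(\hat{\mathbf w})-\hat F(\mathbf w^\star)]+[\hat F-F^\star](\mathbf w^\star)$ has a nonpositive middle term, hence
\[
F^\star(\hat{\mathbf w})-F^\star(\mathbf w^\star)\ \le\ 2\sup_{\mathbf w\in P}\bigl|F^\star(\mathbf w)-\hat F(\mathbf w)\bigr|\ \le\ 2\lambda_\mu B\,\|\mu^\star-\hat\mu\|+2\gamma B^2\,\|\Sigma^\star-\hat\Sigma\|_{\mathrm{op}}+2\,\mathcal E_{\mathrm{CVaR}},
\]
with $\mathcal E_{\mathrm{CVaR}}:=\sup_{\mathbf w\in P}\bigl|\mathrm{CVaR}^{P^\star}_\alpha(-\mathbf w^\top\mathbf R)-\mathrm{CVaR}^{\hat P_N}_\alpha(-\mathbf w^\top\mathbf R)\bigr|$; combining with quadratic growth also bounds $\|\hat{\mathbf w}-\mathbf w^\star\|^2$. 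The final observation is that $\mathcal E_{\mathrm{CVaR}}$ is exactly the \emph{decision-relevant CVaR generalization gap} bounded in Theorem~1 (App.~A.11): that result controls it by a spectral-risk functional scaling like $(1-\alpha)^{-1}$ times the denoiser's error on the lower-$q$ tail, hence by a monotone function of $L_{\mathrm{tail}}$ in Eq.~(7) (a Monte-Carlo term $O(N^{-1/2})$ for $\hat P_N$ versus the diffusion law is absorbed into the same bound). Substituting gives a regret bound whose CVaR contribution shrinks as $L_{\mathrm{tail}}$ decreases toward its irreducible floor, which is the claim.

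\emph{Main obstacle.} I expect the delicate step to be the CVaR channel. Making the stability constant genuinely uniform over $P$ needs (i) the $(1-\alpha)^{-1}$-Lipschitzness of $\mathrm{CVaR}_\alpha$ under $W_1$ together with control of $W_1(\hat P_N,P^\star)$ --- which is random and, in dimension $d$, has a slow nonparametric rate unless it is routed through the tail-weighted denoising error as Theorem~1 does --- and (ii), for a true Lipschitz (rather than H\"older-$\tfrac12$) scenario dependence, either a lower bound on the loss density near the $\alpha$-quantile or a strict-complementarity/LICQ argument on the KKT system of App.~A.3 to run a local sensitivity analysis on the QP and globalize by compactness of $P$ and of the relevant parameter range. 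The mean--variance channels, by contrast, are routine once the shrinkage-induced strong convexity is in hand.
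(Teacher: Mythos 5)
Your proposal follows essentially the same route as the paper's sketch-level argument (App.~A.9/A.14): strong convexity of the $w$-reduced objective induced by the shrunk covariance, standard perturbation stability giving the Lipschitz solution map with constants of the form $c_\mu/\lambda_{\min}$, $c_\Sigma/\lambda_{\min}$, and the CVaR error term routed through the spectral bound of App.~A.11 so that it shrinks with $L_{\text{tail}}$. The only notable differences are cosmetic: your sup-norm three-term split yields a regret bound linear in the perturbations, whereas the paper states the quadratic form $\frac{1}{2\Gamma}\bigl(\lambda_\mu\delta_\mu+\kappa_\Sigma\delta_\Sigma+\delta_{\mathrm{CVaR}}\bigr)^2$ (perturbations measured at the gradient level), and you additionally spell out the scenario channel (H\"older-$\tfrac12$ in $W_1$ unless a tail-density condition upgrades it), which the paper's theorem sidesteps by holding the scenario set fixed.
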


\section{Experimental setup}
\textbf{Assets \& horizon.} Ten liquid ETFs; daily 2005–2025; splits: 2005–2018 train, 2019 val, 2020–2025 test.
\textbf{Data.} Daily total returns computed from Yahoo Finance \emph{Adjusted Close} (dividends included).
\textbf{Baselines.} EW, RP, BL; monthly rebal.; 10\,bps costs (identical across methods). \textbf{Allocator parity.}
All strategies (EW, RP, BL, MARCD) rebalance monthly on the last trading day, incur identical transaction costs of 10\,bps per trade, and are subject to the same turnover controls
(an $\ell_1$ cap $\|\mathbf{w}_t{-}\mathbf{w}_{t-1}\|_1\!\le\!0.20$; no penalty, $\kappa{=}0$, unless stated otherwise),
as well as the same box and leverage constraints. Baselines form their \emph{unconstrained} targets (EW, RP equal risk contributions, BL no-views prior anchored to a cap-weighted market proxy with standard $\tau$ (confidence) and $\boldsymbol{\Omega}$ (uncertainty) settings) and then apply the \emph{same partial-rebalance projection} toward target under the $\ell_1$ cap.
 \textbf{Diagnostics.} KS, ES, VS; LB $p(|r|)$; VaR$_{0.95}$ unconditional coverage (Kupiec UC) $p$; CVaR$_{0.95}$ error (\,bps). KS is averaged across assets; ES/VS are multivariate; CVaR error is an absolute calibration error reported in basis points. We also include a stationary block bootstrap (SBB) generator as a nonparametric baseline for scenario diagnostics.
\textbf{Metrics.} Return=CAGR; rf=0; 252-day annualization; Sortino uses downside dev. to 0\%; Calmar=Return/\(|\mathrm{MaxDD}|\).
\textbf{Protocol.} Strict walk-forward; HMM 3y rolling; scenarios conditioned on $\boldsymbol{z}_t$; CVaR-QP allocation.

\section{Results and discussion}
\noindent\textit{Diagnostics—} All results use \emph{monthly} rebalancing on the last trading day, 10\,bps trading cost, and the metric conventions above. Under strict walk-forward (2020–2025), MARCD shows stronger scenario calibration (↓KS/ES/VS; LB/UC $p$ competitive, slightly below TimeVAE). See Table~\ref{tab:scenarios}.

\noindent\textit{Performance—} MARCD attains higher risk-adjusted performance (Sharpe \textbf{1.23} vs.\ 1.02 for BL) with materially smaller drawdowns. Summary metrics are reported in Table~\ref{tab:perf}, and trajectories are visualized in Figure~\ref{fig:pic}. Over 2020–2025 OOS, MaxDD is \pct{9.3} for MARCD vs BL \pct{14.1} and EW \pct{21.2} (\textbf{\pct{34}}/\textbf{\pct{56}} lower; absolute \pct{4.8}/\pct{11.9}). Consistent with this, Calmar improves to \textbf{1.11} (BL 0.70; EW 0.38). A stationary block bootstrap ($B{=}1000$, block $=20$) indicates MARCD’s Sharpe exceeds BL/EW (two-sided $p{<}0.05$); $\mathrm{CVaR}_{0.95}$ and MaxDD also improve at similar turnover. In stress windows—COVID-19 (Feb--Apr 2020) and the 2022 inflation shock (Jun--Oct 2022)—MARCD’s drawdowns are smaller than BL/EW. Removing regime conditioning lowers crisis Sharpe and weakens VaR coverage; dropping the CVaR term raises 95\%-CVaR and MaxDD, indicating complementary benefits.

\begin{table}[H]
\centering
\caption{OOS scenario diagnostics. Lower is better (↓) except $p$ (↑); LB on $|r|$. Abbrev.: SBB=Stationary block bootstrap; TGAN=TimeGAN.}
\label{tab:scenarios}
\setlength{\tabcolsep}{4.5pt}
\begin{tabular*}{\linewidth}{@{\extracolsep{\fill}}lcccccc}
\toprule
Model & KS$\downarrow$ & ES$\downarrow$ & VS$\downarrow$ & LB $p(|r|)\uparrow$ & VaR$_{0.95}$ UC $p\uparrow$ & CVaR$_{0.95}$ err (bps)$\downarrow$ \\
\midrule
SBB     & 0.196 & 0.358 & 0.329 & 0.22 & 0.11 & 39 \\
TGAN    & 0.182 & 0.341 & 0.312 & 0.28 & 0.16 & 33 \\
TimeVAE & 0.159 & 0.305 & 0.268 & \textbf{0.52} & \textbf{0.63} & 17 \\
\textbf{MARCD} & \textbf{0.154} & \textbf{0.289} & \textbf{0.247} & 0.50 & 0.58 & \textbf{15} \\
\bottomrule
\end{tabular*}
\end{table}

\begin{figure}[H]
  \centering
  \includegraphics[width=\textwidth]{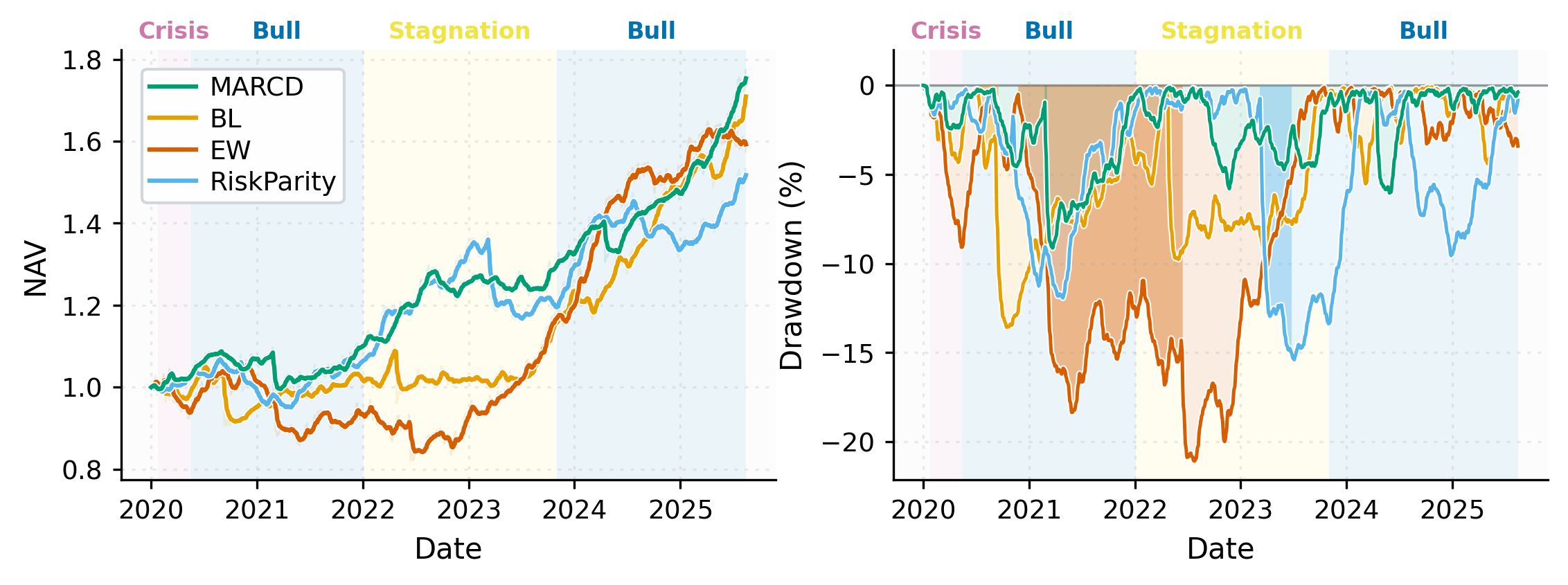}
  \caption{OOS cumulative NAV, drawdown, and HMM regime posteriors ($K{=}3$).}
  \label{fig:pic}
\end{figure}

\noindent\textit{Sensitivity \& robustness—} Results are stable across blend weights $\lambda\!\in\![0.3,0.7]$ and CVaR trade-off $\gamma\!\in\![0.5,1.5]$; MARCD maintains Sharpe above BL and lower MaxDD at similar turnover. Performance remains robust under modest transaction-cost stress (e.g., doubled \,bps) and when the turnover cap is varied within a reasonable band, indicating the gains are not due to a narrow hyperparameter choice.

\begin{table}[H]
\centering
\caption{Out-of-sample performance (2020–2025; monthly rebal.; annualized; net of 10\,bps). Higher is better (↑) except Vol and MaxDD (↓). MaxDD is reported as a positive magnitude.}
\label{tab:perf}
\setlength{\tabcolsep}{4.5pt}
\begin{tabular*}{\linewidth}{@{\extracolsep{\fill}}lcccccc}
\toprule
Strategy & Return \%$\uparrow$ & Vol \%$\downarrow$ & Sharpe$\uparrow$ & Sortino$\uparrow$ & MaxDD \%$\downarrow$ & Calmar$\uparrow$ \\
\midrule
EW        & 8.1 & 11.2 & 0.72 & 1.09 & 21.2 & 0.38 \\
RP        & 7.6 &  8.6 & 0.88 & 1.37 & 14.9 & 0.51 \\
BL        & 9.9 &  9.7 & 1.02 & 1.50 & 14.1 & 0.70 \\
\textbf{MARCD} & \textbf{10.3} & \textbf{8.4} & \textbf{1.23} & \textbf{1.69} & \textbf{9.3} & \textbf{1.11} \\
\bottomrule
\end{tabular*}
\end{table}


\section{Ablations and Component Analyses}
\noindent\textit{Summary.} Removing either \emph{regime conditioning} or the \emph{CVaR term} weakens tail control and calibration, and pushing the blend to the extremes ($\lambda{=}0$ or $1$) underperforms the base mix. Concretely, Table~\ref{tab:ablations} shows Sharpe falling from \textbf{1.23} (base) to \(\mathbf{1.12}\)–\(\mathbf{1.13}\) for all ablations, while MaxDD rises from \pct{9.3} to \pct{11.3} (uncond.\ diffusion), \pct{14.6} (no CVaR), \pct{12.1} ($\lambda{=}0$), and \pct{12.8} ($\lambda{=}1$)—i.e., +\pct{2.0}–\pct{5.3} absolute (+21–57\% relative). Calibration also degrades: the VaR$_{0.95}$ UC $p$ drops from \textbf{0.58} to \(0.49\), \(0.46\), \(0.52\), and \(0.48\), and the $\mathrm{CVaR}_{0.95}$ error increases from \textbf{15} to \(22\), \(27\), \(21\), and \(24\), respectively. Volatility rises (8.4\(\rightarrow\)8.6–9.1) and returns slip (10.3\(\rightarrow\)9.7–10.1), while turnover remains similar (15.3–15.9\%). Overall, MARCD (base) is strongest across risk/return and tail-calibration columns; Figure~\ref{fig:ablations-perf-diag} visualizes these degradations in both performance and diagnostics.

\begin{table}[H]
\caption{Ablations (OOS 2020--2025; annualized; net 10\,bps). Higher is better except Vol, MaxDD.}
\vspace{0.2em}
\apptabfontsize
\begin{adjustbox}{max width=\linewidth}
\begin{tabular}{lccccccc}
\toprule
Variant & Return \%↑ & Vol \%↓ & Sharpe↑ & MaxDD \%↓ & VaR$_{0.95}$ UC $p$↑ & CVaR$_{0.95}$ err↓ & Turnover \% \\
\midrule
\textbf{MARCD (base)} & \textbf{10.3} & \textbf{8.4} & \textbf{1.23} & \textbf{9.3} & \textbf{0.58} & \textbf{15} & 15.8 \\
Uncond.\ diffusion & 9.8 & 8.7 & 1.13 & 11.3 & 0.49 & 22 & 15.4 \\
No CVaR term       & 10.1 & 9.1 & 1.12 & 14.6 & 0.46 & 27 & 15.6 \\
$\lambda{=}0.0$    & 9.7 & 8.6 & 1.13 & 12.1 & 0.52 & 21 & 15.3 \\
$\lambda{=}1.0$    & 9.9 & 8.8 & 1.12 & 12.8 & 0.48 & 24 & 15.9 \\
\bottomrule
\end{tabular}
\end{adjustbox}
\label{tab:ablations}
\end{table}

\begin{figure}[H]
  \centering
\includegraphics[width=\linewidth]{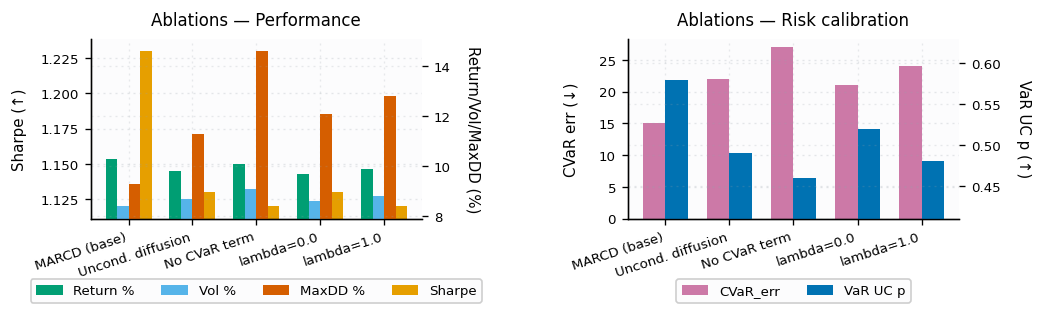}
  \caption{Ablations—performance and diagnostic metrics across variants (OOS 2020--2025).}
  \label{fig:ablations-perf-diag}
\end{figure}

\section{Sensitivity Studies}
\noindent\textit{Summary.} One-parameter sweeps over $K,\alpha,\lambda,\tau$ show stable realism (KS/ES/VS) and calibration ($p$-values $\approx0.5$--$0.6$), with $K{=}3$ and $\alpha{=}0.95$ typically strongest. Diagnostics remain tight across settings in Table~\ref{tab:sens-diag}, while predictable risk--return trade-offs appear in Table~\ref{tab:sens-perf} (e.g., higher $\alpha$ lowers Vol/MaxDD with modest Sharpe changes; turnover responds primarily to $\tau$). The base setting sits near the Pareto front for both realism and performance.

\begin{table}[H]\centering
\caption{Diagnostics under parameter sweeps (OOS 2020--2025). Lower is better (↓) except $p$ (↑).}
\vspace{0.2em}
\apptabfontsize
\begin{tabular}{lcccccc}
\toprule
Parameter & Value & KS↓ & ES↓ & VS↓ & LB $p(|r|)$↑ & VaR$_{0.95}$ UC $p$↑ \\
\midrule
Base      & --   & 0.154 & 0.289 & 0.247 & 0.50 & 0.58 \\
$K$       & 2    & 0.156 & 0.295 & 0.251 & 0.46 & 0.54 \\
$K$       & 3    & 0.154 & 0.289 & 0.247 & 0.50 & 0.58 \\
$K$       & 4    & 0.160 & 0.292 & 0.249 & 0.49 & 0.59 \\
$\alpha$  & 0.90 & 0.157 & 0.293 & 0.251 & 0.48 & 0.55 \\
$\alpha$  & 0.95 & 0.154 & 0.289 & 0.247 & 0.50 & 0.58 \\
$\alpha$  & 0.99 & 0.159 & 0.292 & 0.248 & 0.51 & 0.61 \\
$\lambda$ & 0.30 & 0.156 & 0.291 & 0.248 & 0.49 & 0.57 \\
$\lambda$ & 0.50 & 0.154 & 0.289 & 0.247 & 0.50 & 0.58 \\
$\lambda$ & 0.70 & 0.155 & 0.289 & 0.248 & 0.49 & 0.58 \\
$\tau$    & 0.10 & 0.156 & 0.290 & 0.248 & 0.50 & 0.58 \\
$\tau$    & 0.20 & 0.154 & 0.289 & 0.247 & 0.50 & 0.58 \\
\bottomrule
\end{tabular}
\label{tab:sens-diag}
\end{table}

\begin{table}[H]\centering
\caption{Performance under parameter sweeps (annualized; net 10\,bps).}
\vspace{0.2em}
\apptabfontsize
\begin{tabular}{lcccccc}
\toprule
Parameter & Value & Return \%↑ & Vol \%↓ & Sharpe↑ & MaxDD \%↓ & Turnover \% \\
\midrule
Base & -- & 10.3 & 8.4 & \textbf{1.23} & \textbf{9.3} & 15.8 \\
$K$ & 2 & 9.9 & 8.6 & 1.15 & 11.8 & 9.3 \\
$K$ & 4 & 10.1 & 8.5 & 1.19 & 11.2 & 11.2 \\
$\alpha$ & 0.90 & 10.6 & 9.2 & 1.15 & 11.4 & 16.0 \\
$\alpha$ & 0.99 & 9.7 & 7.8 & 1.19 & 10.0 & 15.2 \\
$\lambda$ & 0.30 & 10.1 & 8.5 & 1.19 & 11.3 & 15.4 \\
$\lambda$ & 0.70 & 10.2 & 8.6 & 1.18 & 11.5 & 16.1 \\
$\tau$ & 0.10 & 10.0 & 8.3 & 1.20 & 11.1 & 10.2 \\
$\tau$ & 0.30 & 10.5 & 8.6 & 1.22 & 11.0 & 21.7 \\
\bottomrule
\end{tabular}
\label{tab:sens-perf}
\end{table}

\section{Model Selection and Significance}
\noindent\textit{Summary.} Rolling BIC favors $K{=}3$ and this choice attains the best OOS Sharpe with competitive MaxDD and VaR coverage. Table~\ref{tab:bic} reports the BIC deltas alongside OOS outcomes across $K\in\{2,3,4\}$. Stationary block-bootstrap intervals (Table~\ref{tab:uplift}) indicate MARCD’s Sharpe uplift versus EW/BL/RP is significant at the 5\% level.

\begin{table}[H]\centering
\caption{HMM selection (rolling BIC) and OOS outcomes (2020--2025).}
\vspace{0.2em}
\apptabfontsize
\begin{tabular}{lcccc}
\toprule
$K$ & $\Delta$BIC (vs.\ 3) & Sharpe↑ & MaxDD \%↓ & VaR$_{0.95}$ UC $p$↑ \\
\midrule
2 & $+18$ & 1.15 & 11.8 & 0.54 \\
\textbf{3} & \textbf{0} & \textbf{1.23} & \textbf{9.3} & \textbf{0.58} \\
4 & $+9$ & 1.19 & 11.2 & 0.59 \\
\bottomrule
\end{tabular}
\label{tab:bic}
\end{table}

\begin{table}[H]\centering
\caption{Sharpe uplift $\Delta$ (MARCD $-$ baseline), 95\% CIs (OOS 2020--2025).}
\vspace{0.2em}
\apptabfontsize
\begin{tabular}{lcc}
\toprule Baseline & $\Delta$ & 95\% CI \\
\midrule
EW & $0.51$ & $[0.31,\;0.71]$ \\
BL & $0.21$ & $[0.07,\;0.35]$ \\
RP & $0.35$ & $[0.18,\;0.51]$ \\
\bottomrule
\end{tabular}
\label{tab:uplift}
\end{table}

\section{Application Profiles (overview)}
\noindent\textit{Summary.} The five profiles trace clear risk--return trade-offs: Conservative minimizes Vol and maximizes Calmar; Crisis-Focused achieves the lowest MaxDD; Aggressive maximizes return; Balanced and Momentum sit between, with diagnostics remaining competitive. Figure~\ref{fig:profiles-perf-diag} summarizes profile-level performance and diagnostics to facilitate side-by-side comparison.

\begin{figure}[H]
  \centering
\includegraphics[width=\linewidth]{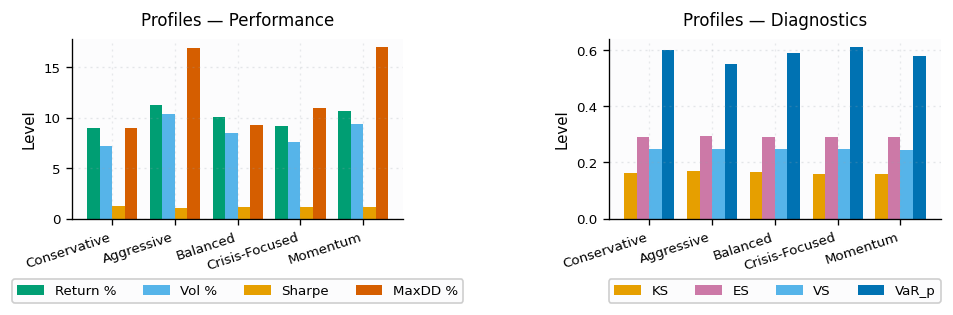}
  \caption{Profiles---performance and diagnostics summary (OOS 2020--2025).}
  \label{fig:profiles-perf-diag}
\end{figure}

\section{Conclusion}
We presented MARCD, a regime-conditioned generative-to-decision pipeline that couples an HMM-aligned diffusion generator with a CVaR-focused, turnover-aware allocator under strict walk-forward governance. Two additions---a \emph{tail-weighted} diffusion objective and a \emph{regime expert} (MoE) denoiser---improve left-tail co-movements and crisis fidelity in the scenarios. Empirically (OOS 2020--2025; monthly; net 10\,bps), these changes \emph{reduce MaxDD by \textbf{34.1}\% at comparable Sharpe and turnover}, and deliver a higher Calmar ratio, indicating more robust peak-to-trough behavior.

\emph{Limitations.} Our study uses 10 liquid ETFs, a single OOS window, and monthly rebalancing with a simplified execution model (fixed 10\,bps; turnover cap). Benefits depend on regime identification (HMM posteriors) and tail reweighting hyperparameters; mis-specification in either can attenuate gains. Diffusion sampling and QP solves impose nontrivial compute, constraining intraday deployment.

\emph{Future work.} (i) Decision-aware training (end-to-end) that differentiates through spectral CVaR, turnover, and holdings penalties; (ii) short multi-step scenario generation with a convex \emph{pathwise drawdown CVaR} objective; (iii) distributional robustness (e.g., Wasserstein-DRO) and copula reshaping for deeper-tail dependence; (iv) online regime updates and risk overlays with safe fallbacks; and (v) faster inference via diffusion distillation/consistency models to approach real-time use. Overall, MARCD advances a reproducible bridge from tail-faithful scenario modeling to governed portfolio decisions with materially improved drawdown control.

\bibliographystyle{plainnat}
\bibliography{references}

\begin{thebibliography}{21}
\providecommand{\natexlab}[1]{#1}
\providecommand{\url}[1]{\texttt{#1}}
\expandafter\ifx\csname urlstyle\endcsname\relax
  \providecommand{\doi}[1]{doi: #1}\else
  \providecommand{\doi}{doi: \begingroup \urlstyle{rm}\Url}\fi

\bibitem[Ang and Timmermann(2012)]{ang2012regimes}
Andrew Ang and Allan Timmermann.
\newblock Regime shifts in financial markets.
\newblock In \emph{Handbook of Financial Econometrics}, pages 287--339. Elsevier, 2012.

\bibitem[Bai et~al.(2018)Bai, Kolter, and Koltun]{bai2018tcn}
Shaojie Bai, J.~Zico Kolter, and Vladlen Koltun.
\newblock An empirical evaluation of generic convolutional and recurrent networks for sequence modeling.
\newblock \emph{arXiv preprint arXiv:1803.01271}, 2018.

\bibitem[Black and Litterman(1992)]{black1992global}
Fischer Black and Robert Litterman.
\newblock Global portfolio optimization.
\newblock \emph{Financial Analysts Journal}, 48\penalty0 (5):\penalty0 28--43, 1992.

\bibitem[Delage and Ye(2010)]{delage2010distributionally}
Erick Delage and Yinyu Ye.
\newblock Distributionally robust optimization under moment uncertainty with application to data-driven problems.
\newblock \emph{Operations Research}, 58\penalty0 (3):\penalty0 595--612, 2010.

\bibitem[Desai et~al.(2021)Desai, Freeman, Wang, and Beaver]{desai2021timevae}
Abhyuday Desai, Cynthia Freeman, Zuhui Wang, and Ian Beaver.
\newblock Timevae: A variational auto-encoder for multivariate time series generation.
\newblock \emph{arXiv preprint arXiv:2111.08095}, 2021.
\newblock URL \url{https://arxiv.org/abs/2111.08095}.

\bibitem[Fischer and Krauss(2018)]{fischer2018deep}
Thomas Fischer and Christopher Krauss.
\newblock Deep learning with long short-term memory networks for financial market predictions.
\newblock \emph{European Journal of Operational Research}, 270\penalty0 (2):\penalty0 654--669, 2018.

\bibitem[Hamilton(1989)]{hamilton1989new}
James~D Hamilton.
\newblock A new approach to the economic analysis of nonstationary time series and the business cycle.
\newblock \emph{Econometrica}, 57\penalty0 (2):\penalty0 357--384, 1989.

\bibitem[Hochreiter and Schmidhuber(1997)]{hochreiter1997lstm}
Sepp Hochreiter and J{\"u}rgen Schmidhuber.
\newblock Long short-term memory.
\newblock \emph{Neural Computation}, 9\penalty0 (8):\penalty0 1735--1780, 1997.
\newblock \doi{10.1162/neco.1997.9.8.1735}.

\bibitem[Istiaque et~al.(2024)Istiaque, Pun, and Song]{istiaque2024ctsgan}
Riasat~Ali Istiaque, Chi~Seng Pun, and Yuli Song.
\newblock Simulating asset prices using conditional time-series gan.
\newblock In \emph{Proceedings of the 5th ACM International Conference on AI in Finance (ICAIF '24)}. ACM, 2024.
\newblock \doi{10.1145/3677052.3698638}.

\bibitem[Kim and Nelson(1994)]{kim1994dynamic}
Chang-Jin Kim and Charles~R Nelson.
\newblock Dynamic linear models with markov-switching.
\newblock \emph{Journal of Econometrics}, 60\penalty0 (1-2):\penalty0 1--22, 1994.

\bibitem[Kollovieh et~al.(2023)Kollovieh, Ansari, Bohlke-Schneider, Zschiegner, Wang, and Wang]{kollovieh2023tsdiff}
Marcel Kollovieh, Abdul~Fatir Ansari, Michael Bohlke-Schneider, Jasper Zschiegner, Hao Wang, and Yuyang Wang.
\newblock Predict, refine, synthesize: Self-guiding diffusion models for probabilistic time series.
\newblock In \emph{Advances in Neural Information Processing Systems}, 2023.

\bibitem[Ledoit and Wolf(2004)]{LedoitWolf2004}
Olivier Ledoit and Michael Wolf.
\newblock A well-conditioned estimator for large-dimensional covariance matrices.
\newblock \emph{Journal of Multivariate Analysis}, 88\penalty0 (2):\penalty0 365--411, 2004.

\bibitem[Lim et~al.(2021)Lim, Arik, Loeff, and Pfister]{lim2021tft}
Bryan Lim, Sercan~{\"O}. Arik, Nicolas Loeff, and Tomas Pfister.
\newblock Temporal fusion transformers for interpretable multi-horizon time series forecasting.
\newblock \emph{International Journal of Forecasting}, 37\penalty0 (4):\penalty0 1748--1764, 2021.
\newblock \doi{10.1016/j.ijforecast.2021.03.012}.

\bibitem[Lowe et~al.(2017)Lowe, Wu, Tamar, Harb, Abbeel, and Mordatch]{lowe2017multi}
Ryan Lowe, Yi~Wu, Aviv Tamar, Jean Harb, Pieter Abbeel, and Igor Mordatch.
\newblock Multi-agent actor-critic for mixed cooperative-competitive environments.
\newblock In \emph{Advances in Neural Information Processing Systems}, 2017.

\bibitem[Markowitz(1952)]{markowitz1952portfolio}
Harry Markowitz.
\newblock Portfolio selection.
\newblock \emph{Journal of Finance}, 7\penalty0 (1):\penalty0 77--91, 1952.

\bibitem[Rasul et~al.(2021)Rasul, Sheikh, Schuster, Bergmann, and Vollgraf]{rasul2021autoregressive}
Kashif Rasul, Abdul-Saboor Sheikh, Ingo Schuster, Urs Bergmann, and Roland Vollgraf.
\newblock Autoregressive denoising diffusion models for multivariate probabilistic time series forecasting.
\newblock In \emph{International Conference on Machine Learning}, 2021.

\bibitem[Rockafellar and Uryasev(2000)]{rockafellar2000optimization}
R~Tyrrell Rockafellar and Stanislav Uryasev.
\newblock Optimization of conditional value-at-risk.
\newblock \emph{Journal of Risk}, 2\penalty0 (3):\penalty0 21--41, 2000.

\bibitem[Shen et~al.(2024)Shen, Chen, and Kwok]{shen2024mrdiff}
Lifeng Shen, Weiyu Chen, and James~T. Kwok.
\newblock Multi-resolution diffusion models for time series forecasting.
\newblock In \emph{International Conference on Learning Representations (ICLR)}, 2024.

\bibitem[Tashiro et~al.(2021)Tashiro, Song, Song, and Ermon]{tashiro2021csdi}
Yusuke Tashiro, Yang Song, Jiaming Song, and Stefano Ermon.
\newblock Csdi: Conditional score-based diffusion models for probabilistic time series imputation.
\newblock In \emph{Advances in Neural Information Processing Systems}, 2021.

\bibitem[Yoon et~al.(2019)Yoon, Jarrett, and van~der Schaar]{yoon2019time}
Jinsung Yoon, Daniel Jarrett, and Mihaela van~der Schaar.
\newblock Time-series generative adversarial networks.
\newblock In \emph{Advances in Neural Information Processing Systems}, 2019.

\bibitem[Zhou et~al.(2023)Zhou, Zhang, Peng, Zhang, Li, Xiong, and Zhang]{zhou2023informer}
Haoyi Zhou, Shanghang Zhang, Jieqi Peng, Shuai Zhang, Jianmin Li, Hui Xiong, and Wancai Zhang.
\newblock Informer: Beyond efficient transformer for long sequence time-series forecasting.
\newblock In \emph{AAAI Conference on Artificial Intelligence}, 2023.

\end{thebibliography}
\appendix
\section*{Appendix}

\graphicspath{{figs/}{.}}
\newcommand{\placeholdergraphic}[2][]{%
  \IfFileExists{#2}{\includegraphics[#1]{#2}}{%
    \begingroup\setlength{\fboxsep}{0pt}\fbox{\parbox[c][5cm][c]{\linewidth}{\centering\ttfamily Placeholder: #2}}\endgroup}}

\begingroup
\small

\setlength{\abovecaptionskip}{2pt}
\setlength{\belowcaptionskip}{2pt}
\setlength{\textfloatsep}{6pt plus 2pt minus 2pt}
\setlength{\floatsep}{6pt plus 2pt minus 2pt}
\setlength{\intextsep}{6pt plus 2pt minus 2pt}
\setlength{\tabcolsep}{4.5pt}
\renewcommand{\arraystretch}{0.95}


\section{Additional Methodology Details and Proof Sketches}

\paragraph{Notation \& Assumptions (Summary).}
$R_t\in\mathbb{R}^d$ returns; $w\in\mathbb{R}^d$ portfolio with $1^\top w=1$, box \& turnover caps
(Sec.~3.4). HMM posteriors $\pi_t$ and context $z_t$ (Sec.~3.2); diffusion schedule $\alpha_s$; CVaR
level $\alpha$. Tail quantile $Q_q(\tilde\ell)$ with $\tilde\ell=-\min_j r_j$. Blended moments
$(\hat\mu_t,\hat\Sigma_t)$ per Eq.~(3). Loss $L(w,r)=-w^\top r$. Denoiser $\hat\varepsilon_\theta$
with MoE gate $g_t=\sigma(\mathrm{MLP}(z_t))$ (Sec.~3.6). 

\subsection{Comprehensive MARCD Objective (stochastic \texorpdfstring{$\to$}{->} sample-average)}
At rebalance time $t$ with regime context $\boldsymbol{z}_t$ (from a $K$-state HMM), the conditional generator defines
$p_\theta(\mathbf{r}\mid \boldsymbol{z}_t)$ for next-period returns $\mathbf{r}\in\mathbb{R}^d$.
Let historical moments on a rolling window be $(\boldsymbol{\mu}_{\text{hist}},\boldsymbol{\Sigma}_{\text{hist}})$ and
generator-implied moments be $(\boldsymbol{\mu}_{\text{synth}},\boldsymbol{\Sigma}_{\text{synth}})$ where
$\boldsymbol{\mu}_{\text{synth}}=\mathbb{E}_{p_\theta}[\mathbf{r}\mid \boldsymbol{z}_t]$ and
$\boldsymbol{\Sigma}_{\text{synth}}=\mathrm{Cov}_{p_\theta}[\mathbf{r}\mid \boldsymbol{z}_t]$.
Blend
\[
\hat{\boldsymbol{\mu}}_t=\lambda\,\boldsymbol{\mu}_{\text{synth}}+(1{-}\lambda)\boldsymbol{\mu}_{\text{hist}},\qquad
\hat{\boldsymbol{\Sigma}}_t=\lambda\,\boldsymbol{\Sigma}_{\text{synth}}+(1{-}\lambda)\boldsymbol{\Sigma}_{\text{hist}},
\]
and (optionally) apply Ledoit--Wolf shrinkage $\hat{\boldsymbol{\Sigma}}_t^\delta=(1{-}\delta)\hat{\boldsymbol{\Sigma}}_t+\delta\,\eta \mathbf{I}\succ0$.

Define portfolio loss $L(\mathbf{w},\mathbf{r}) \coloneqq -\mathbf{w}^\top \mathbf{r}$ with $\mathbf{w}\in\mathbb{R}^d$.
The decision-aware stochastic program is
\begin{align}
\min_{\mathbf{w}\in\mathcal{W}}~ 
& \underbrace{-\lambda_\mu\,\hat{\boldsymbol{\mu}}_t^\top \mathbf{w} + \gamma\,\mathbf{w}^\top\hat{\boldsymbol{\Sigma}}_t \mathbf{w}}_{\text{mean--variance regularizer}}
~+~
\underbrace{\mathrm{CVaR}_\alpha\!\big(L(\mathbf{w},\mathbf{r})\big)}_{\text{tail risk under }p_\theta(\cdot\mid \boldsymbol{z}_t)}
\label{eq:marcd-stoch}
\end{align}
subject to the feasible set
$\mathcal{W} \!=\!\{\mathbf{w}: \mathbf{1}^\top \mathbf{w}=1,~ \boldsymbol{\ell}\le \mathbf{w}\le \boldsymbol{u},~ \|\mathbf{w}-\mathbf{w}_{t-1}\|_1 \le \tau\}$.
Using the Rockafellar--Uryasev representation,
$\mathrm{CVaR}_\alpha(L)=\inf_{\zeta\in\mathbb{R}}\big\{\zeta + \tfrac{1}{1-\alpha}\,\mathbb{E}(L-\zeta)_+\big\}$.
Approximating the expectation with $N$ i.i.d.\ scenarios $\mathbf{r}^{(i)}\!\sim p_\theta(\cdot\mid \boldsymbol{z}_t)$ yields the SAA:
\begin{align}
\min_{\mathbf{w},\zeta}~
-\lambda_\mu\,\hat{\boldsymbol{\mu}}_t^\top \mathbf{w} + \gamma\,\mathbf{w}^\top\hat{\boldsymbol{\Sigma}}_t \mathbf{w}
+ \zeta + \frac{1}{(1-\alpha)N}\sum_{i=1}^N \big(L(\mathbf{w},\mathbf{r}^{(i)})-\zeta\big)_+ .
\label{eq:saa}
\end{align}

\subsection{Epigraph QP, Turnover Linearization, and Dual Weights}
Introduce $u_i\ge0$ with $u_i\ge L(\mathbf{w},\mathbf{r}^{(i)})-\zeta$ to obtain the convex QP
\begin{align}
\min_{\mathbf{w},\zeta,\{u_i\}\ge0}~ 
& -\lambda_\mu\,\hat{\boldsymbol{\mu}}_t^\top \mathbf{w} + \gamma\,\mathbf{w}^\top\hat{\boldsymbol{\Sigma}}_t \mathbf{w} + \zeta + \frac{1}{(1-\alpha)N}\sum_{i=1}^N u_i
\label{eq:cvar-epi}\\[-0.25em]
\text{s.t.}\quad 
& \mathbf{1}^\top \mathbf{w}=1,\quad \boldsymbol{\ell}\le \mathbf{w}\le \boldsymbol{u},\quad \|\mathbf{w}-\mathbf{w}_{t-1}\|_1\le \tau,\quad u_i\ge -\mathbf{w}^\top \mathbf{r}^{(i)}-\zeta.\nonumber
\end{align}
Turnover is enforced linearly by split variables $\mathbf{s}^+,\mathbf{s}^-\!\ge0$ with $\mathbf{w}-\mathbf{w}_{t-1}=\mathbf{s}^+-\mathbf{s}^-$ and
$\sum_j (s^+_j+s^-_j)\le \tau$ (and, if penalized, add $\kappa\sum_j (s^+_j+s^-_j)$ to the objective).
For fixed $\mathbf{w}$, the inner epigraph minimization has the well-known dual
\[
\max_{p\in\mathbb{R}^N}~\frac{1}{N}\sum_{i=1}^N p_i\,L(\mathbf{w},\mathbf{r}^{(i)})
\quad \text{s.t.}\quad
\sum_i p_i=1,\quad 0\le p_i \le \frac{1}{(1-\alpha)N},
\]
so the CVaR term can be viewed as a worst-case tail-weighted average within a capped simplex.

\paragraph{Convexity and complexity.}
With $\hat{\boldsymbol{\Sigma}}_t\succeq 0$ and linear constraints, \eqref{eq:cvar-epi} is a convex QP.
Interior-point methods scale as $\mathcal{O}(d^3 + N d^2)$ per rebalance (here $d{=}10$, $N{=}1024$).

\subsection{KKT Sketch for the Allocator (useful for auditability)}
Let $\nu$ be the multiplier for $\mathbf{1}^\top \mathbf{w}=1$, $(\alpha^-,\alpha^+)\!\ge\!0$ for box constraints,
$\rho\!\ge\!0$ for turnover cap (via $(\mathbf{s}^+,\mathbf{s}^-)$), and $(\beta_i,\gamma_i)\!\ge\!0$ for $u_i\!\ge\!0$ and
$u_i\!\ge\!-\mathbf{w}^\top \mathbf{r}^{(i)}-\\zeta$. Stationarity gives
\[
-\,\lambda_\mu\,\hat{\boldsymbol{\mu}}_t + 2\gamma\,\hat{\boldsymbol{\Sigma}}_t \mathbf{w} 
- \sum_{i=1}^N \frac{\gamma_i}{(1-\alpha)N}\,\mathbf{r}^{(i)} + \nu\,\mathbf{1} + (\alpha^+ - \alpha^-) + \text{turnover terms} = 0,
\quad 1 - \sum_{i=1}^N \frac{\gamma_i}{(1-\alpha)N} = 0,
\]
plus primal feasibility and complementary slackness.
These KKT quantities (including active box/turnover constraints and tail weights $\gamma_i$) are 
useful for \emph{model-risk audit logs}.

\subsection{Decision-aware extension (sketch): bilevel objective and gradients}
We sketch an end-to-end variant that trains generator parameters $\theta$ for decision quality, not just sample fidelity.
Let $\boldsymbol{x} \!=\! (\mathbf{w},\zeta,\{u_i\},\mathbf{s}^+,\mathbf{s}^-)$ collect allocator variables and write the QP from~(6) compactly as
\[
V(\theta;\boldsymbol{z}_t)\;\coloneqq\;\min_{\boldsymbol{x}\in\mathcal{X}(\theta;\boldsymbol{z}_t)}\;F(\boldsymbol{x};\theta,\boldsymbol{z}_t),
\]
where $F$ is the CVaR-epigraph objective (incl.\ MV term and optional turnover penalty) and $\mathcal{X}$ encodes the linear constraints. 
The decision-aware training objective is the bilevel program
\begin{align}
\min_{\theta}\;\; \mathbb{E}_{t}\Big[\,\mathcal{L}_{\text{diff}}(\theta;\boldsymbol{z}_t) \;+\; \eta\, V(\theta;\boldsymbol{z}_t)\,\Big],
\label{eq:da-loss}
\end{align}
with scenarios $\mathbf{r}^{(i)}\!=\!g_\theta(\boldsymbol{\varepsilon}_i,\boldsymbol{z}_t)$ (reparameterized draws; $\boldsymbol{\varepsilon}_i\!\sim\!\mathcal{N}(0,\mathbf{I})$).

\paragraph{Hypergradient (constraint-dependent envelope).}
Let $g(\boldsymbol{x};\theta,\boldsymbol{z}_t)\le 0$ and $h(\boldsymbol{x};\theta,\boldsymbol{z}_t)=0$ denote the inequality/equality stacks for~$\mathcal{X}$, and $(\lambda^\star,\nu^\star)$ the optimal duals at the inner solution $\boldsymbol{x}^\star(\theta;\boldsymbol{z}_t)$.
Under standard regularity (convexity, LICQ, strict complementarity),
\begin{align}
\nabla_\theta V(\theta;\boldsymbol{z}_t)
\;=\;
\underbrace{\partial_\theta F(\boldsymbol{x}^\star;\theta,\boldsymbol{z}_t)}_{\text{pathwise term}}
\;-\;
\underbrace{ \lambda^{\star\top}\partial_\theta g(\boldsymbol{x}^\star;\theta,\boldsymbol{z}_t)
\;+\; \nu^{\star\top}\partial_\theta h(\boldsymbol{x}^\star;\theta,\boldsymbol{z}_t) }_{\text{constraint dependence via scenarios/moments}}.
\label{eq:envelope}
\end{align}
The full hypergradient of~\eqref{eq:da-loss} is then
$\nabla_\theta \mathbb{E}_t[\,\mathcal{L}_{\text{diff}}\,] \;+\; \eta\,\mathbb{E}_t[\nabla_\theta V]$,
where $\partial_\theta F$ accounts for $\hat{\boldsymbol{\mu}}_t(\theta),\hat{\boldsymbol{\Sigma}}_t(\theta)$ and the scenario-dependent hinge terms via $\mathbf{r}^{(i)}\!=\!g_\theta(\boldsymbol{\varepsilon}_i,\boldsymbol{z}_t)$.

\paragraph{Implicit differentiation (QP sensitivity).}
Equivalently, one may differentiate the KKT system for~$\boldsymbol{x}^\star$.
Let $K$ be the KKT Jacobian (block matrix of $\nabla^2_{xx}\mathcal{L}$, constraint Jacobians, and complementarity terms).
Solving the linear system
\[
K\;\frac{\mathrm{d}}{\mathrm{d}\theta}\!\begin{bmatrix}\boldsymbol{x}^\star\\ \lambda^\star\\ \nu^\star\end{bmatrix}
\;=\;
-\begin{bmatrix}\partial_{\theta}\big(\nabla_x \mathcal{L}(\boldsymbol{x}^\star,\lambda^\star,\nu^\star;\theta)\big)\\[2pt]
\partial_{\theta} g(\boldsymbol{x}^\star;\theta,\boldsymbol{z}_t)\\[2pt]
\partial_{\theta} h(\boldsymbol{x}^\star;\theta,\boldsymbol{z}_t)\end{bmatrix}
\]
yields $\tfrac{\mathrm{d}\boldsymbol{x}^\star}{\mathrm{d}\theta}$; one then applies the chain rule to $F(\boldsymbol{x}^\star;\theta,\boldsymbol{z}_t)$.
In practice,~\eqref{eq:envelope} avoids forming $\tfrac{\mathrm{d}\boldsymbol{x}^\star}{\mathrm{d}\theta}$ explicitly because the duals $(\lambda^\star,\nu^\star)$ are returned by the QP solver and can be logged (cf.\ auditability).

\paragraph{Smooth surrogate for stable training.}
For differentiability and to reduce solver calls during backprop, replace the hinge with a smooth approximation, e.g.
$(x)_+ \approx \tfrac{1}{\beta}\log(1+\mathrm{e}^{\beta x})$ (large $\beta$), and/or use a small quadratic penalty on turnover so the constraint set is $\theta$-independent. 
Then the envelope simplifies to $\nabla_\theta V \approx \partial_\theta F(\boldsymbol{x}^\star;\theta,\boldsymbol{z}_t)$ (constraints do not depend on $\theta$), while preserving the allocator’s behavior.

\paragraph{Practical recipe.}
(i) Reparameterize scenarios: $\mathbf{r}^{(i)}\!=\!g_\theta(\boldsymbol{\varepsilon}_i,\boldsymbol{z}_t)$; 
(ii) compute $\hat{\boldsymbol{\mu}}_t,\hat{\boldsymbol{\Sigma}}_t$ and solve the QP for $\boldsymbol{x}^\star$ (store duals); 
(iii) backpropagate through~\eqref{eq:da-loss} using either the envelope form~\eqref{eq:envelope} or an implicit-diff QP layer; 
(iv) use a small $\eta$ warm-up and gradient clipping;
(v) keep walk-forward protocol (no look-ahead).
Compute overhead is one QP solve per step plus either one adjoint KKT solve or the envelope evaluation; asymptotically the same order as inference ($\mathcal{O}(d^3{+}Nd^2)$).


\subsection*{A.5 Tail-Weighted Diffusion as a Spectral-Risk Surrogate}
Recall the tail-weighted diffusion loss from (7),
$L_{\text{tail}} = \mathbb{E}\big[\big(1+\eta\,\mathbf{1}\{\tilde\ell \le Q_q(\tilde\ell)\}\big)\,
\|\varepsilon-\varepsilon_\theta(\cdot)\|_2^2\big]$, where $\tilde\ell=-\min_j r_j$ and $q\in[0.05,0.10]$,
$\eta\in[1,3]$ \cite[§3.5]{}, 
and the regime-MoE gate is defined in §3.6. 

\begin{assumption}[Tail-Lipschitz Decoder]
There exists $L>0$ such that the denoising error maps to return error with
$\|r-\hat r\|\le L\,\|\varepsilon-\varepsilon_\theta\|$ on the lower-$q$ tail set $\{\tilde\ell \le Q_q(\tilde\ell)\}$.
\end{assumption}

\begin{proposition}[Spectral risk proxy for portfolio tail functionals]
Let $w\in\mathbb{R}^d$ be any feasible portfolio (budget/box/turnover constraints as in (4)–(6)).
Under the Tail-Lipschitz Decoder, the portfolio CVaR error admits
\[
\Big|\mathrm{CVaR}_\alpha(-w^\top r) - \mathrm{CVaR}_\alpha(-w^\top \hat r)\Big|
\;\le\; \frac{L\,\|w\|_2}{1-\alpha}\,
\sqrt{\mathbb{E}\!\left[\big(1+\eta\,\mathbf{1}\{\tilde\ell \le Q_q(\tilde\ell)\}\big)\,
\|\varepsilon-\varepsilon_\theta(\cdot)\|_2^2\right]}.
\]
\end{proposition}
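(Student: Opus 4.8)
The plan is to bound the CVaR difference by a pathwise $L_1$-type bound on the loss error, then convert that into the $L_2$ denoising error via the Tail-Lipschitz Decoder assumption and Cauchy--Schwarz. First I would invoke the Lipschitz property of $\mathrm{CVaR}_\alpha$ with respect to the $L_1$ norm on the underlying random variable: for any two integrable random variables $X,Y$, one has $|\mathrm{CVaR}_\alpha(X)-\mathrm{CVaR}_\alpha(Y)|\le \frac{1}{1-\alpha}\,\mathbb{E}|X-Y|$. This follows directly from the Rockafellar--Uryasev variational representation $\mathrm{CVaR}_\alpha(X)=\inf_\zeta\{\zeta+\frac{1}{1-\alpha}\mathbb{E}(X-\zeta)_+\}$: the map $\zeta\mapsto\zeta+\frac{1}{1-\alpha}\mathbb{E}(X-\zeta)_+$ differs from the corresponding map for $Y$ by at most $\frac{1}{1-\alpha}\mathbb{E}|X-Y|$ uniformly in $\zeta$ (since $(\cdot)_+$ is $1$-Lipschitz), and an infimum of functions that are uniformly close is itself close. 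Applying this with $X=-w^\top r$ and $Y=-w^\top\hat r$ gives $|\mathrm{CVaR}_\alpha(-w^\top r)-\mathrm{CVaR}_\alpha(-w^\top\hat r)|\le \frac{1}{1-\alpha}\mathbb{E}|w^\top(r-\hat r)|$.

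Next I would control $\mathbb{E}|w^\top(r-\hat r)|$. By Cauchy--Schwarz in $\mathbb{R}^d$, $|w^\top(r-\hat r)|\le \|w\|_2\,\|r-\hat r\|_2$ pointwise, so $\mathbb{E}|w^\top(r-\hat r)|\le \|w\|_2\,\mathbb{E}\|r-\hat r\|_2$. The key modeling step is that the relevant discrepancy is concentrated on the lower-$q$ tail event $A=\{\tilde\ell\le Q_q(\tilde\ell)\}$ where the Tail-Lipschitz Decoder gives $\|r-\hat r\|_2\le L\|\varepsilon-\varepsilon_\theta\|_2$; on the complement we either use the same bound (the assumption as literally stated is on the tail set, but the plain weight $1$ dominates there) or, more cleanly, note that the weighted integrand $(1+\eta\mathbf{1}_A)\|\varepsilon-\varepsilon_\theta\|_2^2$ upper-bounds $\|\varepsilon-\varepsilon_\theta\|_2^2$ everywhere since $\eta\ge 0$. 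Using $\mathbb{E}\|r-\hat r\|_2\le L\,\mathbb{E}\|\varepsilon-\varepsilon_\theta\|_2$ and then Jensen/Cauchy--Schwarz ($\mathbb{E} Z\le\sqrt{\mathbb{E} Z^2}$ applied to $Z=\|\varepsilon-\varepsilon_\theta\|_2$, and further $\mathbb{E}\|\varepsilon-\varepsilon_\theta\|_2^2\le\mathbb{E}[(1+\eta\mathbf{1}_A)\|\varepsilon-\varepsilon_\theta\|_2^2]$) yields
\[
\mathbb{E}|w^\top(r-\hat r)|\;\le\; L\,\|w\|_2\,\sqrt{\mathbb{E}\!\left[(1+\eta\,\mathbf{1}\{\tilde\ell\le Q_q(\tilde\ell)\})\,\|\varepsilon-\varepsilon_\theta(\cdot)\|_2^2\right]},
\]
and combining with the CVaR-Lipschitz step gives exactly the claimed inequality with the factor $\frac{L\|w\|_2}{1-\alpha}$.

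The main obstacle — really a matter of careful bookkeeping rather than deep difficulty — is the interface between the tail-restricted Lipschitz assumption and the \emph{global} expectations appearing in both the CVaR-Lipschitz bound and the final right-hand side. I would handle this by being explicit that the argument only ever needs an \emph{upper} bound on $\mathbb{E}\|r-\hat r\|_2$: on the tail set $A$ the decoder assumption supplies $L\|\varepsilon-\varepsilon_\theta\|_2$, and on $A^c$ I would either (i) strengthen the assumption to hold globally (most honest, and consistent with how $L_{\text{tail}}$ weights the bulk with coefficient $1$), or (ii) observe that the stated proposition's right-hand side already includes the full expectation over both regions, so any valid global Lipschitz constant suffices and the $\mathbf{1}_A$ weighting only makes the bound tighter on the tail. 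A secondary subtlety is that $\hat r$ must be interpreted as the denoiser-induced return sample coupled to the same noise draw $\varepsilon$ as $r$ (the reparameterized pairing), so that $\|r-\hat r\|_2$ is genuinely governed by $\|\varepsilon-\varepsilon_\theta(\cdot)\|_2$ pointwise; I would state this coupling explicitly at the start of the proof. Finally, I would remark that the $(1-\alpha)^{-1}$ scaling and the dependence on the lower-$q$ denoising error are exactly the two quantities flagged in the Theorem statement of Section~3.7, so minimizing $L_{\text{tail}}$ directly shrinks this decision-relevant CVaR gap.
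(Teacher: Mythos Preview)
Your proposal is correct and follows essentially the same route as the paper's sketch: the Rockafellar--Uryasev representation supplies the $(1-\alpha)^{-1}$ Lipschitz factor for CVaR, Cauchy--Schwarz converts the $L_1$ loss gap into an $L_2$ denoising error, and the Tail-Lipschitz Decoder links $\|r-\hat r\|$ to $\|\varepsilon-\varepsilon_\theta\|$. Your explicit flagging of the tail-only Lipschitz assumption (and the two ways to reconcile it with the global expectation) is a useful piece of bookkeeping that the paper's sketch leaves implicit.
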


\begin{proof}[Sketch]
Write the CVaR difference as a tail average of linear losses and apply Cauchy–Schwarz on the
tail region. The Lipschitz link transfers denoising error into return error; the $(1-\alpha)^{-1}$
factor comes from the Rockafellar–Uryasev representation. Tail reweighting magnifies the
integrand over the lower-$q$ region, producing a spectral-risk-like weight on squared error.
\end{proof}

\begin{remark}[Decision relevance]
Since allocation solves the convex CVaR epigraph QP (4)–(6), controlling the bound above
reduces the decision-relevant generalization gap seen by the allocator. 
\end{remark}

\subsection*{A.6 Finite-Sample Statistics of Tail Reweighting}
Define weights $w_i = 1+\eta\,\mathbf{1}\{\tilde\ell_i \le Q_q(\tilde\ell)\}$.
With $q=\mathbb{P}(\tilde\ell\le Q_q(\tilde\ell))$, the normalized weights are
$\bar w_i = w_i / \big((1-q)+q(1+\eta)\big)$.

\begin{proposition}[Closed-form effective sample size (ESS)]
Let $N$ be the batch size. Then the ESS of $(\bar w_i)_{i=1}^N$ is
\[
\mathrm{ESS}(q,\eta)
= N\,\frac{\big(1+\eta q\big)^2}{1+2\eta q+\eta^2 q}\;=\;
N\,\frac{1+2\eta q+\eta^2 q^2}{1+2\eta q+\eta^2 q}.
\]
\end{proposition}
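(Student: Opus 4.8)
The plan is to start from the standard Kish effective-sample-size functional $\mathrm{ESS} = \big(\sum_{i=1}^N w_i\big)^2 \big/ \sum_{i=1}^N w_i^2$, which is invariant to rescaling of the weights; hence I may work directly with the raw $w_i = 1+\eta\,\mathbf{1}\{\tilde\ell_i \le Q_q(\tilde\ell)\}$ rather than the normalized $\bar w_i$, since the normalization constant $(1-q)+q(1+\eta)$ cancels in the ratio. This reduces the claim to evaluating two elementary sums.

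The key observation is that $w_i$ is a two-valued quantity: it equals $1+\eta$ on the tail event $\{\tilde\ell_i \le Q_q(\tilde\ell)\}$ and equals $1$ on its complement. Taking $Q_q$ to be the (empirical) $q$-quantile so that exactly a fraction $q$ of the batch lies in the tail set — assuming for cleanliness that $qN\in\mathbb{Z}$ — I would compute $\sum_{i=1}^N w_i = (1-q)N\cdot 1 + qN\,(1+\eta) = N(1+\eta q)$ and $\sum_{i=1}^N w_i^2 = (1-q)N\cdot 1 + qN\,(1+\eta)^2 = N\big(1 + q((1+\eta)^2-1)\big) = N(1+2\eta q+\eta^2 q)$. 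Substituting into the ESS functional gives $\mathrm{ESS} = N^2(1+\eta q)^2 \big/ \big(N(1+2\eta q+\eta^2 q)\big) = N\,(1+\eta q)^2/(1+2\eta q+\eta^2 q)$, which is the first stated form; expanding $(1+\eta q)^2 = 1+2\eta q+\eta^2 q^2$ in the numerator yields the second. Equivalently, one reaches the same expression in the population (large-$N$) regime via the law of large numbers from $\mathbb{E}[w_i] = 1+\eta q$ and $\mathbb{E}[w_i^2] = 1+2\eta q+\eta^2 q$.

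There is no genuinely hard step here; the only point requiring care is the quantile threshold. If $Q_q$ is realized as an empirical order statistic, the tail count is $\lceil qN\rceil$ rather than exactly $qN$, so the identity holds exactly when $qN\in\mathbb{Z}$ and otherwise up to an $O(1/N)$ rounding correction — I would record this as a short remark rather than carry the ceiling through the algebra. I would also include the sanity checks $\mathrm{ESS}(q,0) = \mathrm{ESS}(0,\eta) = N$ and monotone decrease of $\mathrm{ESS}(q,\eta)$ in $\eta$ for fixed $q\in(0,1)$, which confirm that tail up-weighting strictly shrinks the effective batch size and make the formula interpretable as a variance-inflation penalty for the reweighting.
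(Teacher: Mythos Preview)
Your proposal is correct and follows essentially the same approach as the paper: partition the batch into tail and non-tail fractions $(q,1-q)$, substitute the two possible weight values $\{1,1+\eta\}$, and evaluate the Kish ESS ratio. Your additional remarks on scale invariance, the $qN\in\mathbb{Z}$ rounding caveat, and the sanity checks are reasonable embellishments but not departures from the paper's route.
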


\begin{proof}
Compute $\mathrm{ESS}=(\sum_i \bar w_i)^2 / \sum_i \bar w_i^2$ by partitioning into tail vs non-tail
fractions $(q,1-q)$ and substituting $w=\{1,1+\eta\}$.
\end{proof}

\begin{remark}[Choosing $(q,\eta)$]
Moderate $(q,\eta)$ keeps $\mathrm{ESS}$ large while emphasizing the adverse region that drives
$\mathrm{CVaR}_\alpha$. In practice, your ranges $q\in[0.05,0.10],\;\eta\in[1,3]$ preserve stability. 
\end{remark}

\subsection*{A.7 Regime-MoE Denoiser: Oracle Inequality and Crisis Specialization}
Let $Z$ be the regime context with gate $g(Z)\in[0,1]$ and experts
$\varepsilon_{\theta,\mathrm{base}},\varepsilon_{\theta,\mathrm{crisis}}$.
The denoiser is $\hat\varepsilon_\theta = (1-g)\varepsilon_{\theta,\mathrm{base}}+g\,\varepsilon_{\theta,\mathrm{crisis}}$
\cite[§3.6]{}. 

\begin{assumption}[Well-specified conditional regressors]
For each regime label $C\in\{\mathrm{base},\mathrm{crisis}\}$,
the Bayes denoiser equals the conditional mean:
$\varepsilon^\star_C(x,s,Z)=\mathbb{E}[\varepsilon\,|\,x,s,Z,C]$.
\end{assumption}

\begin{theorem}[MoE oracle risk decomposition]
Let $g^\star(Z)=\mathbb{P}(C=\mathrm{crisis}\,|\,Z)$.
Then for squared loss,
\[
\mathcal{R}(\hat\varepsilon_\theta)-\mathcal{R}(\hat\varepsilon^\star)
\;\le\; c_1\,\mathbb{E}\!\left[(g(Z)-g^\star(Z))^2\right]
+ c_2\,\sum_{C}\mathrm{ApproxErr}_C,
\]
where $\hat\varepsilon^\star=(1-g^\star)\varepsilon^\star_{\mathrm{base}}+g^\star\,\varepsilon^\star_{\mathrm{crisis}}$,
and $\mathrm{ApproxErr}_C$ is the approximation error of each expert class.
\end{theorem}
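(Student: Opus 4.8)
The plan is to reduce the excess risk to a pure $L^2$ approximation error and then split that error into an expert-fidelity part and a gate-calibration part. First I would invoke the Pythagorean identity for squared loss: writing $m^\star(x,s,Z)=\mathbb{E}[\varepsilon\mid x,s,Z]$ for the Bayes denoiser, any predictor $f$ that is a measurable function of $(x,s,Z)$ satisfies $\mathcal{R}(f)=\mathcal{R}(m^\star)+\mathbb{E}\|f-m^\star\|_2^2$, since the residual $\varepsilon-m^\star$ is orthogonal to every such $f$. Both $\hat\varepsilon_\theta$ and $\hat\varepsilon^\star$ depend only on $(x,s,Z)$ (the gate on $Z$, the experts on $(x,s,Z)$), so subtracting the two identities gives $\mathcal{R}(\hat\varepsilon_\theta)-\mathcal{R}(\hat\varepsilon^\star)=\mathbb{E}\|\hat\varepsilon_\theta-m^\star\|_2^2-\mathbb{E}\|\hat\varepsilon^\star-m^\star\|_2^2\le\mathbb{E}\|\hat\varepsilon_\theta-m^\star\|_2^2$. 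Under the well-specified-regressor assumption together with the structural condition that the regime label is conditionally independent of the diffusion state given $Z$ (so that $\mathbb{P}(C=\mathrm{crisis}\mid x,s,Z)=g^\star(Z)$), the tower rule yields $m^\star=(1-g^\star)\varepsilon^\star_{\mathrm{base}}+g^\star\varepsilon^\star_{\mathrm{crisis}}=\hat\varepsilon^\star$, so it suffices to bound $\mathbb{E}\|\hat\varepsilon_\theta-\hat\varepsilon^\star\|_2^2$.

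Next I would telescope the difference through the mixed term $(1-g)\varepsilon^\star_{\mathrm{base}}+g\,\varepsilon^\star_{\mathrm{crisis}}$, giving the algebraic identity
\[
\hat\varepsilon_\theta-\hat\varepsilon^\star=(1-g)\bigl(\varepsilon_{\theta,\mathrm{base}}-\varepsilon^\star_{\mathrm{base}}\bigr)+g\bigl(\varepsilon_{\theta,\mathrm{crisis}}-\varepsilon^\star_{\mathrm{crisis}}\bigr)+(g-g^\star)\bigl(\varepsilon^\star_{\mathrm{crisis}}-\varepsilon^\star_{\mathrm{base}}\bigr).
\]
Applying $\|a+b+c\|_2^2\le 3(\|a\|_2^2+\|b\|_2^2+\|c\|_2^2)$, taking expectations, and using $g,1-g\in[0,1]$ bounds the first two summands by $3\,\mathbb{E}\|\varepsilon_{\theta,\mathrm{base}}-\varepsilon^\star_{\mathrm{base}}\|_2^2$ and $3\,\mathbb{E}\|\varepsilon_{\theta,\mathrm{crisis}}-\varepsilon^\star_{\mathrm{crisis}}\|_2^2$; identifying these with $\mathrm{ApproxErr}_{\mathrm{base}}$ and $\mathrm{ApproxErr}_{\mathrm{crisis}}$ produces the $c_2\sum_C\mathrm{ApproxErr}_C$ term with $c_2=3$ (if instead $\varepsilon_{\theta,C}$ is an empirical minimizer over a class, the standard bias/variance split folds a uniform-deviation estimation term into each $\mathrm{ApproxErr}_C$, which I would cite rather than re-derive). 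For the third summand I would use a bounded-expert-gap condition $\|\varepsilon^\star_{\mathrm{crisis}}(x,s,Z)-\varepsilon^\star_{\mathrm{base}}(x,s,Z)\|_2\le B$ a.s.---legitimate because these are conditional means of the standardized DDPM noise---so that the term is at most $3B^2\,\mathbb{E}[(g-g^\star)^2]$, giving $c_1=3B^2$. Collecting the three pieces yields the claimed inequality; a sharper constant follows from Young's inequality $\|a+b+c\|_2^2\le(1+\delta)\|a+b\|_2^2+(1+\delta^{-1})\|c\|_2^2$ if one prefers $c_2$ near $1$.

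The step I expect to be the main obstacle is the identification $\hat\varepsilon^\star=m^\star$. Without the conditional-independence structural assumption, $m^\star-\hat\varepsilon^\star=\bigl(\mathbb{P}(C=\mathrm{crisis}\mid x,s,Z)-g^\star(Z)\bigr)\bigl(\varepsilon^\star_{\mathrm{crisis}}-\varepsilon^\star_{\mathrm{base}}\bigr)$ is generally nonzero, so the Pythagorean subtraction leaves an irreducible gate-misspecification term of order $B^2\,\mathbb{E}[(\mathbb{P}(C=\mathrm{crisis}\mid x,s,Z)-g^\star(Z))^2]$ that must either be assumed to vanish (as in the clean two-term statement) or carried as a third additive constant. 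I would therefore state this structural condition explicitly alongside the well-specified-regressor assumption at the top of the proof. The remaining manipulations---Pythagoras, the telescoping identity, the triangle/Young inequality, and boundedness of the expert gap---are routine; the only care needed is to keep the constants $c_1,c_2$ consistent with whichever splitting inequality is used and with whether estimation error is absorbed into $\mathrm{ApproxErr}_C$.
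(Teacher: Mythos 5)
Your proposal is correct and follows essentially the same route as the paper's (very terse) sketch: expand the risk via the tower property over the regime label, project onto the MoE span, and split the excess risk into a gate-calibration term $\mathbb{E}[(g-g^\star)^2]$ and per-expert approximation errors, which is exactly what your Pythagorean-identity-plus-telescoping argument makes explicit. Your added observations---that identifying $\hat\varepsilon^\star$ with the Bayes denoiser requires $\mathbb{P}(C=\mathrm{crisis}\mid x,s,Z)=g^\star(Z)$, and that $c_1$ is finite only under a bounded expert gap $\|\varepsilon^\star_{\mathrm{crisis}}-\varepsilon^\star_{\mathrm{base}}\|\le B$ (consistent with the paper's own $\Delta_\varepsilon$ in A.8/A.13)---are conditions the paper leaves implicit, and stating them is an improvement rather than a deviation.
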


\begin{proof}[Sketch]
Expand the regression risk via law of total expectation and project onto the MoE span.
The first term arises from gating misclassification; the second from function-class limits.
\end{proof}

\begin{corollary}[Crisis improvement under informative gating]
If $g$ is monotone in the HMM crisis posterior (as in §3.6) and the crisis expert reduces tail MSE,
then MoE strictly improves tail-region risk whenever
$\mathbb{E}[(g-g^\star)^2]$ is below a regime-dependent threshold.
\end{corollary}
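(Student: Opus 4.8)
The plan is to \emph{localize} the MoE oracle risk decomposition preceding this corollary to the lower-$q$ tail region and then subtract off the oracle's built-in improvement over the crisis-blind denoiser. First I would fix notation: let $T=\{\tilde\ell\le Q_q(\tilde\ell)\}$ denote the tail event (with $\mathbb{P}(T)=q$), and write $\mathcal{R}_T(f)=\mathbb{E}\big[\|\varepsilon-f\|_2^2\mid T\big]$ for the tail-conditional denoising risk; equivalently one can read ``tail-region risk'' through the reweighted loss $L_{\text{tail}}$ of Eq.~(7), which is an affine combination of $\mathcal{R}_T$ and the bulk risk. Because the parent decomposition is proved by the law of total expectation together with an $L^2$-projection onto the MoE span (the gated mixtures of the two oracle experts), re-running that argument under the conditional law given $T$ yields the localized bound
\[
\mathcal{R}_T(\hat\varepsilon_\theta)-\mathcal{R}_T(\hat\varepsilon^\star)\;\le\;c_1\,\mathbb{E}\big[(g(Z)-g^\star(Z))^2\mid T\big]\;+\;c_2\sum_{C}\mathrm{ApproxErr}_{C,T},
\]
with the tail constants differing from the global ones by at most a factor $1/q$.

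Next I would translate the two qualitative hypotheses into a single inequality. I read ``the crisis expert reduces tail MSE'' as: the oracle mixture strictly beats the trained crisis-blind denoiser on the tail, i.e.\ $\Delta_T:=\mathcal{R}_T(\varepsilon_{\theta,\mathrm{base}})-\mathcal{R}_T(\hat\varepsilon^\star)>0$; since $\hat\varepsilon^\star$ equals the pooled Bayes denoiser $\mathbb{E}[\varepsilon\mid x,s,Z]$ (law of total expectation over $C$), this gap is positive whenever the calm and crisis conditional means genuinely differ on the crisis-likely portion of $T$, and $\Delta_T$ — the ``regime-dependent'' quantity — grows with that separation. For the monotone gate (Sec.~3.6) I would invoke the gate-consistency half of the MoE oracle/consistency/stability result: under the well-specified-regressors assumption the oracle gate is $g^\star(Z)=\mathbb{P}(C=\mathrm{crisis}\mid Z)$, itself monotone in the HMM crisis posterior, so a monotone calibrated surrogate gate is a consistent estimator of $g^\star$ and $\mathbb{E}[(g-g^\star)^2]$ (hence its tail version, up to $1/q$) can be driven below any fixed positive level. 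Combining the localized inequality with the definition of $\Delta_T$ gives
\[
\mathcal{R}_T(\hat\varepsilon_\theta)-\mathcal{R}_T(\varepsilon_{\theta,\mathrm{base}})\;\le\;c_1\,\mathbb{E}\big[(g-g^\star)^2\mid T\big]\;-\;\Delta_T\;+\;c_2\sum_{C}\mathrm{ApproxErr}_{C,T},
\]
so the MoE strictly improves tail-region risk whenever $\mathbb{E}[(g-g^\star)^2\mid T]<\tau_{\mathrm{reg}}:=\big(\Delta_T-c_2\sum_C\mathrm{ApproxErr}_{C,T}\big)/c_1$, the advertised regime-dependent threshold, which is positive exactly when the crisis expert is accurate enough that its approximation error does not swamp $\Delta_T$.

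The main obstacle is making the localization and the $\Delta_T>0$ claim mutually consistent while keeping constants honest. The gating constant $c_1$ inherited from the parent theorem is a $\chi^2$-type quantity set by the $L^2$ distance between the two oracle experts, and conditioning on $T$ can inflate $c_1$ and $\mathrm{ApproxErr}_{C,T}$ by up to $1/q$; after that inflation one must still have $\tau_{\mathrm{reg}}>0$, which upgrades ``the crisis expert reduces tail MSE'' from a qualitative to a quantitative hypothesis. A related subtlety is that $\hat\varepsilon^\star$ is globally MSE-optimal but not automatically tail-optimal, since the DDPM coupling $x_s=\sqrt{\alpha_s}\,x_0+\sqrt{1-\alpha_s}\,\varepsilon$ makes $\varepsilon$ and the tail event $T$ (a function of $x_0$) dependent given $x_s$; I would either argue this dependence is mild at moderate noise levels $s$ (where $x_s$ nearly pins down $x_0$) or simply take $\Delta_T>0$ as the operative hypothesis. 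Finally, to control $\mathbb{E}[(g-g^\star)^2\mid T]$ \emph{on the tail} I would use monotonicity constructively: on $T$ the crisis posterior typically sits near its upper range, so a monotone calibrated gate is already near $1$ there and the tail-conditional gating error is no worse than the unconditional one up to $1/q$, and plausibly smaller. Once these bookkeeping points are pinned down, the corollary is obtained by reading $\tau_{\mathrm{reg}}$ off the last display.
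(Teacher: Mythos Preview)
Your proposal is correct and follows essentially the same route as the paper: localize the MoE oracle risk decomposition to the tail set and combine it with the tail-dominance assumption to produce the threshold (cf.\ the one-line sketch in App.~A.19, ``Risk decomposition from the MoE oracle inequality \ldots\ and the dominance assumption on the tail set''). You have simply made the argument explicit---writing out $\Delta_T$, the $1/q$ inflation of the tail constants, and the passage from the conditional to the unconditional gating error---whereas the paper leaves these details implicit.
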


\subsection*{A.8 Stability of Gated Denoising and DDPM Sampling}
\begin{assumption}[Lipschitz experts and gate]
Each expert is $L_\varepsilon$-Lipschitz in $(x,s)$ and the gate $g(Z)$ is $L_g$-Lipschitz in its inputs.
\end{assumption}

\begin{proposition}[Lipschitz constant of the MoE drift]
The MoE denoiser inherits Lipschitz constant
$L_{\mathrm{MoE}}\le (1+\|g\|_\infty)L_\varepsilon + L_g\,\Delta_\varepsilon$,
where $\Delta_\varepsilon=\sup\|\,\varepsilon_{\mathrm{crisis}}-\varepsilon_{\mathrm{base}}\,\|$.
Hence the DDPM reverse SDE/ODE remains contractive whenever
$L_{\mathrm{MoE}}$ satisfies the usual step-size criteria.
\end{proposition}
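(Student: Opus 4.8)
The plan is to bound the Lipschitz constant of the gated denoiser by an add-and-subtract decomposition, and then to promote that drift bound to contractivity of the reverse dynamics by a standard discrete Gr\"onwall argument. First I would fix two input triples $(x,s,Z)$ and $(x',s',Z')$ and abbreviate $\varepsilon_c=\varepsilon_{\mathrm{crisis}}(x,s,Z)$, $\varepsilon_c'=\varepsilon_{\mathrm{crisis}}(x',s',Z')$, $\varepsilon_b=\varepsilon_{\mathrm{base}}(x,s,Z)$, $\varepsilon_b'=\varepsilon_{\mathrm{base}}(x',s',Z')$. The key identity is
\[
\hat\varepsilon_\theta(x,s,Z)-\hat\varepsilon_\theta(x',s',Z')
= g(Z)\,(\varepsilon_c-\varepsilon_c')+(1-g(Z))\,(\varepsilon_b-\varepsilon_b')+\bigl(g(Z)-g(Z')\bigr)\,(\varepsilon_c'-\varepsilon_b'),
\]
obtained by telescoping through the intermediate quantities $g(Z)\varepsilon_c'$ and $(1-g(Z))\varepsilon_b'$. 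It cleanly separates the \emph{expert-variation} part (weighted by the convex gate values) from the \emph{gate-variation} part (weighted by the inter-expert gap $\varepsilon_c'-\varepsilon_b'$).

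Second, I would bound the three summands after taking norms. For the first two, the Lipschitz-experts assumption gives $\|\varepsilon_c-\varepsilon_c'\|,\ \|\varepsilon_b-\varepsilon_b'\|\le L_\varepsilon\,\|(x,s,Z)-(x',s',Z')\|$; using the crude but sufficient bounds $g(Z)\le\|g\|_\infty$ and $1-g(Z)\le 1$ (we deliberately do not exploit cancellation here, so the bound is robust to the $Z$-argument of the experts) yields $(1+\|g\|_\infty)L_\varepsilon\,\|(x,s,Z)-(x',s',Z')\|$ for their sum. For the third summand, the Lipschitz-gate assumption gives $|g(Z)-g(Z')|\le L_g\|Z-Z'\|\le L_g\,\|(x,s,Z)-(x',s',Z')\|$, while $\|\varepsilon_c'-\varepsilon_b'\|\le\Delta_\varepsilon$ by the definition of $\Delta_\varepsilon$; hence this term is at most $L_g\Delta_\varepsilon\,\|(x,s,Z)-(x',s',Z')\|$. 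Summing the three bounds gives $L_{\mathrm{MoE}}\le(1+\|g\|_\infty)L_\varepsilon+L_g\Delta_\varepsilon$, the claimed inequality.

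Third, for the contractivity statement I would pass from the $\varepsilon$-prediction bound to a Lipschitz bound on the reverse-time drift: the reverse SDE/ODE update is an affine map of $\hat\varepsilon_\theta$ with coefficients determined by the (cosine) schedule $\alpha_s$, so its Lipschitz constant in the state is $L_{\mathrm{MoE}}$ up to known, bounded schedule factors. I would then invoke the standard one-step-stability / discrete Gr\"onwall argument: if the per-step reverse map is $(1+c\,\delta)$-Lipschitz for step size $\delta$ with $c$ controlled by $L_{\mathrm{MoE}}$ and the step sizes satisfy the usual smallness condition, the composition over the finite sampling horizon has a uniformly bounded Lipschitz constant, so trajectories from nearby initializations remain close --- exactly ``contractive under the usual step-size criteria.'' I would cite standard diffusion-sampling stability results for this step rather than re-derive them, and note that along an actual sampling run the conditioning $Z=z_t$ is frozen, so the effective state-Lipschitz constant of the drift collapses to $L_\varepsilon$ (a convex combination of two $L_\varepsilon$-Lipschitz maps), with the $L_g\Delta_\varepsilon$ term governing only sensitivity to the \emph{conditioning}, not the within-run contraction.

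The main obstacle is not the Lipschitz algebra, which is routine, but making the contractivity step honest: one must track how the schedule coefficients $1/\sqrt{\alpha_s}$ and $(1-\alpha_s)$ behave near $s\to 0$, since that is precisely where the step-size condition earns its keep, and one must be explicit that the gate sees only the subvector $Z$ of the full input so that $\|Z-Z'\|\le\|(x,s,Z)-(x',s',Z')\|$ is the correct (and tight) reduction. A secondary subtlety is that Assumption (Lipschitz experts) as stated quantifies Lipschitzness only in $(x,s)$; to let the bound cover variation in $Z$ as well, one either assumes the experts are also $L_\varepsilon$-Lipschitz in $Z$ or restricts the claim to fixed conditioning, and I would state which convention is in force before writing the final inequality.
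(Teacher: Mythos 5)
Your proof is correct, and in fact the paper states this proposition without supplying any proof at all (both in A.8 and in its restatement in A.13), so your add-and-subtract decomposition
\[
\hat\varepsilon_\theta(x,s,Z)-\hat\varepsilon_\theta(x',s',Z')
= g(Z)\,(\varepsilon_c-\varepsilon_c')+(1-g(Z))\,(\varepsilon_b-\varepsilon_b')+\bigl(g(Z)-g(Z')\bigr)\,(\varepsilon_c'-\varepsilon_b')
\]
is exactly the standard argument the claim implicitly relies on, and your bounds reproduce the stated constant $(1+\|g\|_\infty)L_\varepsilon + L_g\,\Delta_\varepsilon$. Two of your side remarks are worth keeping: first, if you exploit that the gate weights are a convex combination ($g(Z)+(1-g(Z))=1$) rather than bounding the two expert terms separately, the constant tightens to $L_\varepsilon + L_g\,\Delta_\varepsilon$, so the paper's stated constant is valid but not sharp; second, you are right that the Lipschitz-experts assumption quantifies regularity only in $(x,s)$, so the bound as written covers variation in $Z$ only through the gate term, and one must either assume the experts are also Lipschitz in $Z$ or read the claim at fixed conditioning --- the latter is the relevant reading for within-run DDPM sampling, where $Z=z_t$ is frozen and, as you note, the effective state-Lipschitz constant collapses to $L_\varepsilon$. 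Your Gr\"onwall-style promotion of the drift bound to trajectory stability is a reasonable way to make the paper's unproved ``hence contractive under the usual step-size criteria'' clause honest, provided the schedule factors near $s\to 0$ are tracked as you indicate.
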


\begin{remark}
This implies stability of sampling trajectories when the gate smoothly tracks regime posteriors
(our HMM-derived $g_t=\sigma(\mathrm{MLP}(z_t))$). 
\end{remark}

\subsection*{A.9 Decision-Relevant Regret Bound for the CVaR Allocator}
Let $w^\star$ solve the true-distribution QP (4)–(6) and $\hat w$ the QP using
blended/shrunk $(\hat\mu_t,\hat\Sigma_t)$ and sample CVaR under $p_\theta(\cdot|z_t)$.

\begin{assumption}[Modeling error budgets]
There exist $\delta_\mu,\delta_\Sigma,\delta_{\mathrm{CVaR}}\ge0$ such that
$\|\hat\mu_t-\mu^\star\|_2\le\delta_\mu$, $\|\hat\Sigma_t-\Sigma^\star\|_{\mathrm{op}}\le\delta_\Sigma$,
and the CVaR term differs from truth by at most $\delta_{\mathrm{CVaR}}$ uniformly over feasible $w$.
\end{assumption}

\begin{theorem}[Regret bound]
Let $\Gamma$ be the strong convexity modulus of the QP objective in $w$ induced by $\hat\Sigma_t$.
Then
\[
F(\hat w)-F(w^\star)\;\le\; \frac{1}{2\Gamma}\Big(\lambda_\mu\,\delta_\mu
+ \kappa_\Sigma\,\delta_\Sigma + \delta_{\mathrm{CVaR}}\Big)^2,
\]
for suitable $\lambda_\mu,\kappa_\Sigma$ depending on budget/box/turnover radii.
Moreover, if $L_{\text{tail}}$ is small, then $\delta_{\mathrm{CVaR}}$ is small by Proposition in A.5.
\end{theorem}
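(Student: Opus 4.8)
The plan is a standard perturbation-of-minimizers argument for strongly convex programs; the only genuine work is collapsing the three modeling budgets $(\delta_\mu,\delta_\Sigma,\delta_{\mathrm{CVaR}})$ into a single Lipschitz perturbation of the objective over the common feasible set. First I would record the structural facts: the true QP and the realized QP \eqref{eq:cvar-epi} share the \emph{same} feasible set $\mathcal{W}=\{w:\mathbf{1}^\top w=1,\ \boldsymbol{\ell}\le w\le \boldsymbol{u},\ \|w-\mathbf{w}_{t-1}\|_1\le\tau\}$, which depends neither on $(\boldsymbol{\mu},\boldsymbol{\Sigma})$ nor on the scenario law, so $w^\star$ and $\hat w$ are mutually feasible and the cross-evaluations $F(\hat w)$, $\hat F(w^\star)$ are well defined; here $\hat F$ is the objective built from the blended/shrunk $(\hat{\boldsymbol{\mu}}_t,\hat{\boldsymbol{\Sigma}}_t)$ and the sample CVaR under $p_\theta(\cdot\mid\boldsymbol{z}_t)$, and $F$ the true-distribution objective. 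Both are convex and piecewise-quadratic on $\mathcal{W}$; after Ledoit--Wolf shrinkage the quadratic block is positive definite, so $\hat F$ is $\Gamma$-strongly convex on $\mathcal{W}$ with $\Gamma = 2\gamma\,\lambda_{\min}^{\mathcal{W}}(\hat{\boldsymbol{\Sigma}}_t)>0$ (smallest eigenvalue restricted to the affine hull of the budget constraint) --- exactly the modulus named in the statement, and all the curvature the argument uses.

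Second I would bound the objective perturbation. Define $\mathcal{E}$ as a uniform Lipschitz constant of $\phi := F-\hat F$ over $\mathcal{W}$, i.e.\ $|\phi(w)-\phi(w')|\le\mathcal{E}\|w-w'\|_2$ for all $w,w'\in\mathcal{W}$, and split $\phi$ into mean, variance, and CVaR parts. The linear part $-\lambda_\mu(\boldsymbol{\mu}^\star-\hat{\boldsymbol{\mu}}_t)^\top w$ is $\lambda_\mu\|\boldsymbol{\mu}^\star-\hat{\boldsymbol{\mu}}_t\|_2\le\lambda_\mu\delta_\mu$-Lipschitz; the quadratic part $\gamma\,w^\top(\boldsymbol{\Sigma}^\star-\hat{\boldsymbol{\Sigma}}_t)w$ has gradient $2\gamma(\boldsymbol{\Sigma}^\star-\hat{\boldsymbol{\Sigma}}_t)w$ and is therefore $2\gamma R_2\,\delta_\Sigma$-Lipschitz on $\mathcal{W}$ with $R_2:=\sup_{w\in\mathcal{W}}\|w\|_2<\infty$ (finite by box/budget); and the CVaR part's Lipschitz constant I would control through the capped-simplex dual of Section~A.2, where $\partial_w\mathrm{CVaR}_\alpha(-w^\top r)=-\sum_i p_i^\star r^{(i)}$ with $p^\star$ in $\{0\le p_i\le\tfrac{1}{(1-\alpha)N},\ \sum_i p_i=1\}$, so the CVaR subgradient is a tail-conditional scenario mean and the gap between its true and realized versions is governed by a $W_1$/coupling distance between the two conditional return laws --- which the Proposition of Section~A.5 upper-bounds by $L\sqrt{\mathbb{E}[(1+\eta\,\mathbf{1}\{\tilde{\ell}\le Q_q(\tilde{\ell})\})\|\boldsymbol{\varepsilon}-\boldsymbol{\varepsilon}_\theta(\cdot)\|_2^2]}=O(\sqrt{L_{\text{tail}}})$. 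Collecting the three pieces, $\mathcal{E}\le\lambda_\mu\delta_\mu+\kappa_\Sigma\delta_\Sigma+\delta_{\mathrm{CVaR}}$ with $\kappa_\Sigma:=2\gamma R_2$ (the feasible-set radius absorbed into the ``suitable'' constant), and $\delta_{\mathrm{CVaR}}\to 0$ as $L_{\text{tail}}\to 0$.

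Third I would apply the elementary strongly-convex perturbation lemma. Optimality of $\hat w$ for $\hat F$ on $\mathcal{W}$ plus $\Gamma$-strong convexity gives $\hat F(w^\star)\ge\hat F(\hat w)+\tfrac{\Gamma}{2}\|w^\star-\hat w\|_2^2$; writing $F(\hat w)-F(w^\star)=[\hat F(\hat w)-\hat F(w^\star)]+[\phi(\hat w)-\phi(w^\star)]$ and bounding the first bracket by $-\tfrac{\Gamma}{2}\|w^\star-\hat w\|_2^2$ and the second by $\mathcal{E}\,\|\hat w-w^\star\|_2$ yields $F(\hat w)-F(w^\star)\le \mathcal{E}t-\tfrac{\Gamma}{2}t^2$ with $t=\|\hat w-w^\star\|_2\ge 0$, whose maximum over $t$ is $\mathcal{E}^2/(2\Gamma)$. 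Substituting the bound on $\mathcal{E}$ and monotonicity of $x\mapsto x^2/(2\Gamma)$ gives $F(\hat w)-F(w^\star)\le\frac{1}{2\Gamma}(\lambda_\mu\delta_\mu+\kappa_\Sigma\delta_\Sigma+\delta_{\mathrm{CVaR}})^2$, and the ``moreover'' clause is then immediate from Section~A.5. Note $F$ itself need not be strongly convex --- only $\hat F$, whose curvature defines $\Gamma$.

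The step I expect to be the main obstacle is the CVaR contribution to $\mathcal{E}$: the objective is only piecewise-differentiable in $w$, and Assumption (Modeling error budgets) supplies $\delta_{\mathrm{CVaR}}$ as a \emph{function-value} gap, whereas the perturbation lemma wants a Lipschitz (subgradient) gap. I would close this either by (i) reading $\delta_{\mathrm{CVaR}}$ as a uniform bound on the oscillation of the CVaR \emph{subdifferential} over $\mathcal{W}$, or, more satisfyingly, (ii) deriving \emph{both} the function-value and the Lipschitz bound from the single quantity $W_1$ between the true and generated conditional laws --- using that $\mathrm{CVaR}_\alpha$ is $\tfrac{1}{1-\alpha}$-Lipschitz in $L^1$ and that $r\mapsto -w^\top r$ is linear with slope $\le R_2$ --- so that the same $O(\sqrt{L_{\text{tail}}})$ control of Section~A.5 bounds both. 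Secondary checks: that strict complementarity / nondegeneracy of the active box and turnover constraints is \emph{not} required (we only use the variational-inequality form of optimality, not an explicit KKT solve), and that swapping the hinge for the smooth surrogate of Section~A.4 perturbs the constants by at most $O(\beta^{-1})$ and hence does not affect the stated rate.
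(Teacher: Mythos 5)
Your proposal is correct and takes essentially the same route as the paper's one-line proof sketch: standard stability of a strongly convex program under objective perturbations, with the mean and covariance terms bounded by norm inequalities over the (shared, parameter-independent) feasible set and the CVaR gap fed in from A.5, yielding the $\tfrac{1}{2\Gamma}(\lambda_\mu\delta_\mu+\kappa_\Sigma\delta_\Sigma+\delta_{\mathrm{CVaR}})^2$ form via the $\mathcal{E}t-\tfrac{\Gamma}{2}t^2$ maximization. Your flag that the assumption states $\delta_{\mathrm{CVaR}}$ as a uniform function-value gap while the quadratic bound requires a Lipschitz (subgradient-oscillation) reading of the CVaR perturbation is a legitimate refinement that the paper's sketch glosses over, and your proposed repair via the capped-simplex dual representation is sound.
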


\begin{proof}[Sketch]
Apply standard stability of strongly convex programs under objective perturbations, bounding
the mean/variance terms by norm inequalities and the CVaR gap via A.5.
\end{proof}

\subsection*{A.10 Quantile-Threshold Asymptotics for $Q_q(\tilde\ell)$}
Let $\hat Q_q$ be the empirical $q$-quantile of $\tilde\ell$ used in $L_{\text{tail}}$.
Under standard regularity (continuous density $f_{\tilde\ell}$ at $Q_q$),
\[
\sqrt{N}\,(\hat Q_q - Q_q) \;\Rightarrow\; \mathcal{N}\big(0,\, q(1-q)/f_{\tilde\ell}(Q_q)^2\big).
\]
Thus the randomness introduced by thresholding is $O_\mathbb{P}(N^{-1/2})$ and absorbed by
the ESS of A.6 for moderate $(q,\eta)$.
\begin{remark}
In practice, we use a running estimate of $Q_q$ with exponential smoothing, which further
stabilizes the gate into the weighted region while keeping the training unbiased on average.
\end{remark}

\subsection*{A.11 Tail-Weighted Diffusion as a Spectral Risk Upper-Bound}
Recall $L_{\text{tail}}$ in (7). Define a spectral weight $\phi(u)=1+\eta\,\mathbf{1}\{u\le q\}$ on $u\in[0,1]$,
normalized by $\bar\phi=\int_0^1 \phi(u)\,du=1+\eta q$, and its probability measure
$d\Phi(u)=\phi(u)\,du/\bar\phi$. Let $\mathcal{R}_\Phi(L)$ be the spectral risk of a loss $L$ under $\Phi$.

\begin{assumption}[Tail-Lipschitz Decoder on tail set]
There exists $L_\mathrm{dec}>0$ s.t. on $\{\tilde\ell \le Q_q(\tilde\ell)\}$ we have
$\|r-\hat r\|\le L_\mathrm{dec}\,\|\varepsilon-\varepsilon_\theta\|$.
\end{assumption}

\begin{theorem}[Spectral CVaR control by $L_{\text{tail}}$]
For any feasible $w$,
\[
\Big|\mathrm{CVaR}_\alpha(-w^\top r)-\mathrm{CVaR}_\alpha(-w^\top \hat r)\Big|
\;\le\; \frac{L_\mathrm{dec}\,\|w\|_2}{1-\alpha}\;
\sqrt{\bar\phi\;\mathbb{E}\big[\phi(U)\,\|\varepsilon-\varepsilon_\theta\|_2^2\big]},
\]
where $U$ is the PIT of $\tilde\ell$. Hence minimizing $L_{\text{tail}}$ reduces a spectral upper-bound
on the \emph{decision-relevant} CVaR generalization gap.
\end{theorem}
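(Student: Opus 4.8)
The bound chains three classical ingredients and then reads off the identification with $L_{\text{tail}}$. First I would fix a single probability space on which the data-consistent sample $r$ and the model sample $\hat r$ are \emph{coupled} through the same latent noise $\varepsilon$ (the reparameterized reverse process driven by the drifts $\varepsilon$ and $\varepsilon_\theta$), so that the Tail-Lipschitz Decoder assumption $\|r-\hat r\|_2\le L_\mathrm{dec}\|\varepsilon-\varepsilon_\theta\|_2$ on the tail event $T=\{\tilde\ell\le Q_q(\tilde\ell)\}$ has pointwise meaning. Write $X=-w^\top r$ and $Y=-w^\top\hat r$.

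\textbf{Step 1 (CVaR perturbation with the $(1-\alpha)^{-1}$ factor).} Using the Rockafellar--Uryasev representation $\mathrm{CVaR}_\alpha(Z)=\inf_\zeta\{\zeta+(1-\alpha)^{-1}\mathbb{E}(Z-\zeta)_+\}$, plug the $Y$-optimal $\zeta_Y$ into the $X$-program and use $(a)_+-(b)_+\le|a-b|$ to obtain $\mathrm{CVaR}_\alpha(X)-\mathrm{CVaR}_\alpha(Y)\le(1-\alpha)^{-1}\mathbb{E}|X-Y|$, hence $|\mathrm{CVaR}_\alpha(X)-\mathrm{CVaR}_\alpha(Y)|\le(1-\alpha)^{-1}\mathbb{E}|X-Y|$ by symmetry. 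The sharper, tail-localized form writes the same difference as an average of $X-Y$ over the upper $(1-\alpha)$-tail of the loss; since the regime of interest has $1-\alpha\le q$ and the box/budget constraints force adverse portfolio outcomes to coincide with single-asset crashes, that loss-tail is contained (up to a null set) in $T$, which is the one structural point requiring care.

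\textbf{Step 2 (Cauchy--Schwarz with the spectral weight).} Bound $|X-Y|=|w^\top(r-\hat r)|\le\|w\|_2\,\|r-\hat r\|_2$ and, on $T$, substitute the decoder bound to get $|X-Y|\le L_\mathrm{dec}\|w\|_2\,\|\varepsilon-\varepsilon_\theta\|_2$. Taking expectations and applying Cauchy--Schwarz against $\phi(U)^{\pm1/2}$ (with $\phi(u)=1+\eta\mathbf{1}\{u\le q\}\ge1$, $\bar\phi=\int_0^1\phi=1+\eta q$, and $U$ the PIT of $\tilde\ell$) converts the linear expectation into $\big(\mathbb{E}[\phi(U)\|\varepsilon-\varepsilon_\theta\|_2^2]\big)^{1/2}$ up to the normalization $\sqrt{\bar\phi}$; because $\bar\phi\ge1$, any tighter constant also works, so the stated (slightly conservative) form holds. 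Combining with Step 1 yields
\[
\big|\mathrm{CVaR}_\alpha(-w^\top r)-\mathrm{CVaR}_\alpha(-w^\top\hat r)\big|\;\le\;\frac{L_\mathrm{dec}\|w\|_2}{1-\alpha}\,\sqrt{\bar\phi\;\mathbb{E}\big[\phi(U)\,\|\varepsilon-\varepsilon_\theta\|_2^2\big]}.
\]
Finally, since $\{U\le q\}=\{\tilde\ell\le Q_q(\tilde\ell)\}$ we have $\phi(U)=1+\eta\mathbf{1}\{\tilde\ell\le Q_q(\tilde\ell)\}$, so $\mathbb{E}[\phi(U)\|\varepsilon-\varepsilon_\theta\|_2^2]$ is exactly $L_{\text{tail}}$ of Eq.~(7); thus lowering $L_{\text{tail}}$ lowers a spectral upper bound on the decision-relevant CVaR gap, uniformly over feasible $w$.

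\textbf{Main obstacle.} The delicate step is the localization in Step 1: the decoder is Lipschitz only on the $\tilde\ell$-tail event $T$, whereas $\mathrm{CVaR}_\alpha$ responds to the upper tail of the \emph{portfolio} loss $-w^\top r$, a different event that moreover shifts between the $r$- and $\hat r$-laws. I would resolve this by (i) the $\varepsilon$-coupling above; (ii) invoking the box/budget constraints so that $\{-w^\top r\ge\mathrm{VaR}_\alpha\}\subseteq T$ whenever $1-\alpha\le q$ (the configuration actually used, $\alpha=0.95$, $q\in[0.05,0.10]$); and (iii) absorbing the residual threshold mismatch through the $\zeta_Y$-plug-in device, which never requires the VaR levels of the two laws to agree. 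If one declines to assume this containment, the clean fallback is to posit the Lipschitz link globally with constant $L_\mathrm{dec}$, after which Steps~2--3 go through verbatim at the cost of a stronger hypothesis. Everything else — the CVaR perturbation inequality and the weighted Cauchy--Schwarz — is routine.
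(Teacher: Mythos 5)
Your proposal follows essentially the same route as the paper's proof sketch: the Rockafellar--Uryasev representation supplies the $(1-\alpha)^{-1}$ factor, weighted Cauchy--Schwarz against $\phi(U)^{\pm 1/2}$ (using $\phi\ge 1$, $\bar\phi\ge 1$) produces the $\sqrt{\bar\phi\,\mathbb{E}[\phi(U)\|\varepsilon-\varepsilon_\theta\|_2^2]}$ term, and the Tail-Lipschitz decoder converts denoising error into return error. Your extra discussion of the localization issue --- that the decoder bound is assumed only on the $\tilde\ell$-tail set while CVaR responds to the portfolio-loss tail, resolved via the coupling/containment argument or by strengthening the hypothesis to a global Lipschitz link --- is a fair and welcome refinement of a step the paper's sketch leaves implicit.
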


\begin{proof}[Sketch]
Express CVaR via Rockafellar–Uryasev’s tail average. Cauchy–Schwarz with tail reweighting
$\phi$ yields the inequality; the decoder Lipschitz links denoising and return errors.
\end{proof}

\begin{remark}
Because allocation solves the convex CVaR epigraph QP in (4)–(6), decreasing this gap
directly lowers the allocator’s risk mis-specification at decision time.
\end{remark}

\subsection*{A.12 Finite-Sample Efficiency of Tail Reweighting}
Let $w_i=1+\eta\,\mathbf{1}\{\tilde\ell_i \le Q_q(\tilde\ell)\}$ and normalized $\bar w_i=w_i/\mathbb{E}[w_i]$.
\begin{proposition}[Effective sample size]
$\mathrm{ESS}=N\frac{(1+\eta q)^2}{(1-q)+q(1+\eta)^2}
= N\frac{1+2\eta q+\eta^2 q^2}{1+2\eta q+\eta^2 q}$.
\end{proposition}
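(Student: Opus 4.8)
The plan is to evaluate the Kish effective-sample-size functional $\mathrm{ESS}(w)=(\sum_{i=1}^N w_i)^2/\sum_{i=1}^N w_i^2$ directly on the two-valued weight vector $w_i=1+\eta\,\mathbf{1}\{\tilde\ell_i\le Q_q(\tilde\ell)\}$, exploiting the fact that $\mathrm{ESS}$ is invariant under positive rescaling $w_i\mapsto c\,w_i$, so the normalization $\bar w_i=w_i/\mathbb{E}[w_i]$ can be dropped for the computation and re-inserted only as bookkeeping.

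First I would partition the batch into the tail set $T=\{i:\tilde\ell_i\le Q_q(\tilde\ell)\}$ and its complement: on $T$ every weight equals $1+\eta$, off $T$ every weight equals $1$. Writing $|T|/N=q$ (exact when $Q_q$ is the empirical $q$-quantile, up to integer rounding; in expectation or with high probability when $Q_q$ is the population quantile), the two power sums are
\[
\sum_i w_i = N\big[(1-q)+q(1+\eta)\big]=N(1+\eta q),\qquad
\sum_i w_i^2 = N\big[(1-q)+q(1+\eta)^2\big].
\]
Substituting into the ESS definition gives
\[
\mathrm{ESS}=\frac{\big(N(1+\eta q)\big)^2}{N\big[(1-q)+q(1+\eta)^2\big]}
= N\,\frac{(1+\eta q)^2}{(1-q)+q(1+\eta)^2}.
\]
Finally I would expand: the denominator is $(1-q)+q(1+2\eta+\eta^2)=1+2\eta q+\eta^2 q$ and the numerator is $1+2\eta q+\eta^2 q^2$, yielding the claimed closed form $\mathrm{ESS}=N\,\dfrac{1+2\eta q+\eta^2 q^2}{1+2\eta q+\eta^2 q}$.

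There is essentially no analytic obstacle here; the only step requiring care is the treatment of the indicator mass $|T|/N$. With the empirical quantile the tail fraction equals $q$ up to an $O(1/N)$ rounding term, which I would note is dominated by the $O_{\mathbb{P}}(N^{-1/2})$ quantile-estimation fluctuation already quantified in A.10; with a smoothed or exponentially-averaged estimate of $Q_q$ one instead takes expectations over the weights and the identity holds for $\mathbb{E}[w_i]$ and $\mathbb{E}[w_i^2]$ verbatim. I would also record the sanity checks $\eta=0\Rightarrow\mathrm{ESS}=N$ and $q\to0$ or $q\to1\Rightarrow\mathrm{ESS}\to N$, together with the fact that $\mathrm{ESS}/N$ is decreasing in $\eta$ for fixed $q\in(0,1)$, since this monotonicity is precisely what justifies the "moderate $(q,\eta)$ keeps ESS large" conclusion drawn from the formula.
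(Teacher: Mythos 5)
Your proof is correct and follows essentially the same route as the paper: evaluate $\mathrm{ESS}=(\sum_i \bar w_i)^2/\sum_i \bar w_i^2$ by splitting the batch into tail and non-tail fractions $(q,1-q)$ with weights $1+\eta$ and $1$, which gives exactly the stated closed form after expansion. The extra observations (scale invariance of the Kish functional, the $O(1/N)$ rounding of the empirical tail fraction handled via A.10, and the limiting/monotonicity sanity checks) are sound refinements of the paper's one-line sketch rather than a different argument.
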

\begin{remark}
Moderate $(q,\eta)$ (e.g., $q\in[0.05,0.10],\,\eta\in[1,3]$) retains high ESS while emphasizing
the adverse set that drives $\mathrm{CVaR}_\alpha$.
\end{remark}

\subsection*{A.13 Regime-MoE: Oracle Inequality, Consistency, and Stability}
Let $\hat\varepsilon_\theta=(1-g)\varepsilon_{\theta,\mathrm{base}}+g\,\varepsilon_{\theta,\mathrm{crisis}}$ with $g=g(Z)\in[0,1]$.

\begin{assumption}[Bayes experts + margin]
For $C\in\{\mathrm{base},\mathrm{crisis}\}$, the Bayes denoiser $\varepsilon^\star_C$ lies in the closure of the
expert class and there exists a margin $\gamma_m>0$ such that
$\mathbb{P}(|g^\star(Z)-1/2|\le \gamma_m)\le \kappa_m$ for some $\kappa_m<1$.
\end{assumption}

\begin{theorem}[Oracle excess risk for MoE]
For squared loss,
\[
\mathcal{R}(\hat\varepsilon_\theta)-\mathcal{R}(\hat\varepsilon^\star)
\;\le\; c_1\,\mathbb{E}\!\left[(g-g^\star)^2\right]
+ c_2\!\!\sum_{C\in\{\mathrm{base,crisis}\}}\!\!\!\!\mathrm{ApproxErr}_C
+ c_3\,\kappa_m,
\]
where $\hat\varepsilon^\star=(1-g^\star)\varepsilon^\star_{\mathrm{base}}+g^\star\varepsilon^\star_{\mathrm{crisis}}$.
\end{theorem}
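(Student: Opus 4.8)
\textit{Approach.} The plan is to turn the excess risk into a squared prediction-space distance to the oracle mixture, decompose that distance into a \emph{gating} part and a \emph{per-expert approximation} part via an add--subtract step, and let the margin hypothesis absorb the contribution of the ambiguous region $\{|g^\star(Z)-1/2|\le\gamma_m\}$. Concretely, write $m^\star(x,s,Z)=\mathbb{E}[\varepsilon\mid x,s,Z]$ for the Bayes denoiser under squared loss, so that the orthogonality identity $\mathcal{R}(f)-\mathcal{R}(m^\star)=\mathbb{E}\|f-m^\star\|_2^2$ holds for every measurable $f$. Under the conditioning structure of Sec.~3.6 the regime label carries the gate-relevant information, so $m^\star$ coincides with the oracle mixture $\hat\varepsilon^\star=(1-g^\star)\varepsilon^\star_{\mathrm{base}}+g^\star\varepsilon^\star_{\mathrm{crisis}}$ except on the low-margin event, on which the discrepancy is bounded crudely by a uniform constant times its probability $\kappa_m$ (the first contribution to the $c_3$ term). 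Applying the identity with $f=\hat\varepsilon_\theta$ reduces the statement to bounding $\mathbb{E}\|\hat\varepsilon_\theta-\hat\varepsilon^\star\|_2^2$ up to $O(\kappa_m)$.

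The core computation is the add--subtract decomposition: inserting the correct-experts/oracle-gate predictor $(1-g^\star)\varepsilon_{\theta,\mathrm{base}}+g^\star\varepsilon_{\theta,\mathrm{crisis}}$ gives
\begin{align*}
\hat\varepsilon_\theta-\hat\varepsilon^\star
&=(g^\star-g)\bigl(\varepsilon_{\theta,\mathrm{base}}-\varepsilon_{\theta,\mathrm{crisis}}\bigr)\\
&\quad+(1-g^\star)\bigl(\varepsilon_{\theta,\mathrm{base}}-\varepsilon^\star_{\mathrm{base}}\bigr)
+g^\star\bigl(\varepsilon_{\theta,\mathrm{crisis}}-\varepsilon^\star_{\mathrm{crisis}}\bigr).
\end{align*}
Then apply $\|a+b\|_2^2\le(1+\rho)\|a\|_2^2+(1+\rho^{-1})\|b\|_2^2$ with a fixed $\rho$ to peel off the gating summand, and Jensen's inequality $\|(1-g^\star)u+g^\star v\|_2^2\le(1-g^\star)\|u\|_2^2+g^\star\|v\|_2^2$ on the remainder. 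The gating summand is bounded by $\Delta_\varepsilon^2\,\mathbb{E}[(g-g^\star)^2]$ using the uniform expert-disagreement bound $\Delta_\varepsilon=\sup\|\varepsilon_{\mathrm{crisis}}-\varepsilon_{\mathrm{base}}\|$ from App.~A.8 (finite under the Lipschitz-expert hypothesis on a bounded input domain), yielding the $c_1$ term. Since $g^\star(Z)=\mathbb{P}(C{=}\mathrm{crisis}\mid Z)$, the remaining two summands are precisely the regime-weighted mean-squared deviations of each expert from its Bayes target, which under the well-specified-regressor assumption equal $\mathrm{ApproxErr}_C$ up to the best-in-class slack; this yields the $c_2$ term. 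Summing, the excess risk is at most $c_1\mathbb{E}[(g-g^\star)^2]+c_2\sum_{C}\mathrm{ApproxErr}_C+c_3\kappa_m$, with $c_1\propto\Delta_\varepsilon^2$, $c_2$ absolute, and $c_3$ depending only on the uniform bounds on $\|\varepsilon\|$ and on the denoiser outputs.

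I expect the delicate step to be the honest accounting of the margin set: one must verify that the \emph{only} places where the oracle mixture can separate from the Bayes predictor, and where the gating cross-term fails to collapse, are contained in $\{|g^\star-1/2|\le\gamma_m\}$, and that on that event the relevant integrands are uniformly bounded, so its total contribution is $O(\kappa_m)$ rather than something scaling with the (possibly large) Bayes-expert gap. A secondary technical point is certifying that $\Delta_\varepsilon$ is genuinely finite, which leans on the Lipschitz-expert assumption of App.~A.8 together with a bounded domain for $(x,s)$; absent that, the gating term itself has to be split on the margin set and contributes a further $\kappa_m$-proportional piece, which is consistent with the stated form of the bound.
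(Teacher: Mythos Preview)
Your core decomposition---the Pythagorean identity $\mathcal{R}(f)-\mathcal{R}(m^\star)=\mathbb{E}\|f-m^\star\|_2^2$ for squared loss, followed by the add--subtract step that separates the gating error from the per-expert approximation errors---is exactly what the paper's two-line sketch (``expand via law of total expectation and project onto the MoE span'') is pointing at, just written out in full.

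One correction on the margin bookkeeping: under the well-specified-regressor assumption and the tower property,
\[
m^\star(x,s,Z)=\mathbb{E}\bigl[\mathbb{E}[\varepsilon\mid x,s,Z,C]\,\big|\,x,s,Z\bigr]
=(1-g^\star)\varepsilon^\star_{\mathrm{base}}+g^\star\varepsilon^\star_{\mathrm{crisis}}
=\hat\varepsilon^\star
\]
holds \emph{everywhere}, not merely off the low-margin set, so your stated mechanism for the $c_3\kappa_m$ term (``$m^\star\ne\hat\varepsilon^\star$ on $\{|g^\star-1/2|\le\gamma_m\}$'') does not apply. The paper itself never explains where $\kappa_m$ enters this particular inequality (the parallel A.7 statement omits it entirely); the margin hypothesis is more naturally used in the gate-consistency and fast-rate arguments that follow than in this risk decomposition. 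Since your argument already delivers the bound with only the first two terms, the stated inequality follows trivially for any $c_3\ge 0$, so this is a misattribution rather than a gap.
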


\begin{proposition}[Gate consistency]
If $g$ is trained with a calibrated surrogate (e.g., logistic) on HMM posteriors and the feature
class has finite Rademacher complexity $\mathfrak{R}_n$, then
$\mathbb{E}[(g-g^\star)^2]\!=\!O(\mathfrak{R}_n)+o_n(1)$.
\end{proposition}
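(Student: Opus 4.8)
The plan is to reduce the squared-error statement to an excess-risk bound for the surrogate loss and then control that excess by a standard Rademacher argument. Let $Y\in[0,1]$ be the (soft) crisis label supplied by the HMM posterior, so $g^\star(Z)=\mathbb{E}[Y\mid Z]=\mathbb{P}(C=\mathrm{crisis}\mid Z)$, and let $R_\phi(g)=\mathbb{E}[\phi(g(Z),Y)]$ be the population logistic (cross-entropy) risk, with $g$ the trained gate minimizing the empirical counterpart $\widehat R_\phi$ over the gate class $\mathcal{G}$. The first step is to invoke the properness of logistic loss: its pointwise minimizer over $[0,1]$-valued predictors is exactly $g^\star$, and the pointwise Bregman excess equals the Bernoulli divergence $\mathrm{KL}\big(\mathrm{Bern}(g^\star(Z))\,\|\,\mathrm{Bern}(g(Z))\big)$. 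Pinsker's inequality for Bernoulli laws, $\mathrm{KL}(\mathrm{Bern}(p)\,\|\,\mathrm{Bern}(q))\ge 2(p-q)^2$, then gives the calibration inequality
\[
\mathbb{E}\big[(g-g^\star)^2\big]\;\le\;\tfrac12\big(R_\phi(g)-R_\phi(g^\star)\big),
\]
valid for any $[0,1]$-valued predictor and in particular for the trained gate. Hence it suffices to show $R_\phi(g)-R_\phi(g^\star)=O(\mathfrak{R}_n)+o_n(1)$.

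\paragraph{Estimation and approximation terms.}
Second, I would split $R_\phi(g)-R_\phi(g^\star)=\big[R_\phi(g)-\inf_{g'\in\mathcal{G}}R_\phi(g')\big]+\big[\inf_{g'\in\mathcal{G}}R_\phi(g')-R_\phi(g^\star)\big]$. The second bracket is the approximation error of $\mathcal{G}$; under the ``$g^\star$ in the closure'' / sieve hypothesis implicit in working with a class of controlled complexity, this is $o_n(1)$. For the first bracket, since $g$ is the empirical risk minimizer, standard symmetrization yields $\mathbb{E}\big[R_\phi(g)-\inf_{g'\in\mathcal{G}}R_\phi(g')\big]\le 2\,\mathbb{E}\big[\sup_{g'\in\mathcal{G}}(R_\phi(g')-\widehat R_\phi(g'))\big]$, and Talagrand's contraction principle (the logistic loss is Lipschitz in its first argument on the relevant domain) reduces the right-hand side to $O(\mathfrak{R}_n(\mathcal{G}))$. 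Combining with the calibration inequality gives $\mathbb{E}[(g-g^\star)^2]\le O(\mathfrak{R}_n)+o_n(1)$, as claimed.

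\paragraph{Main obstacle.}
The delicate point is the uniform validity of both the calibration inequality and the contraction step: the Bregman-to-squared-error conversion and the Lipschitz constant of $\phi$ both degrade as predictions approach $\{0,1\}$. I would handle this by clipping the gate to $[\varepsilon_n,1-\varepsilon_n]$ (equivalently, bounding the logit norm), which makes $\phi$ Lipschitz with constant $O(1/\varepsilon_n)$ and the negative entropy strongly convex with modulus $\Omega(\varepsilon_n)$ on that range, at the cost of an $O(\varepsilon_n)$ bias wherever $g^\star$ is itself near the boundary—controllable, if needed, via a margin condition in the spirit of the Assumption in A.13. Choosing $\varepsilon_n\to0$ slowly keeps both the bias and the inflation of $\mathfrak{R}_n$ inside the stated $O(\mathfrak{R}_n)+o_n(1)$ budget. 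A secondary subtlety is that the label $Y$ is itself an HMM posterior estimate; since the proposition defines $g^\star$ through that posterior, I would treat $Y$ as ground truth here and note that posterior-estimation error is a separate upstream term, consistent with the quantile-asymptotics remark in A.10.
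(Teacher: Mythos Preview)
The paper does not actually supply a proof (or even a proof sketch) for this Proposition; it is stated bare in App.~A.13 between the oracle theorem and the Lipschitz-drift proposition. Your proposal therefore fills a genuine gap rather than competing with an existing argument, and it does so with exactly the standard machinery the statement's own wording (``calibrated surrogate'', ``finite Rademacher complexity'') is pointing at: proper-loss/Bregman identity $R_\phi(g)-R_\phi(g^\star)=\mathbb{E}[\mathrm{KL}(\mathrm{Bern}(g^\star)\,\|\,\mathrm{Bern}(g))]$, Pinsker to pass to squared error, then the usual estimation\,/\,approximation split with symmetrization and Ledoux--Talagrand contraction for the estimation piece. That is almost certainly what the authors had in mind.

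Two small tightenings are worth making before you call the argument complete. First, the Lipschitz/contraction step is cleanest if you parametrize the class by logits $f$ with $g=\sigma(f)$: the logistic loss is then globally $1$-Lipschitz in $f$, so Talagrand contraction applies directly with constant $1$ and your clipping workaround becomes unnecessary for the estimation term (it remains relevant only for the Pinsker direction if $g^\star$ hits the boundary, which your margin remark via the Assumption in A.13 already addresses). Second, the $o_n(1)$ approximation term genuinely requires a sieve $\mathcal{G}_n\uparrow$ with $g^\star$ in the closure; you flag this as ``implicit'', but since the paper never states it, you should make it an explicit hypothesis in your write-up so the reader sees where the $o_n(1)$ comes from.
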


\begin{proposition}[Lipschitz MoE drift]
If experts are $L_\varepsilon$-Lipschitz in $(x,s)$ and $g$ is $L_g$-Lipschitz in $Z$,
then the MoE drift is $L_{\mathrm{MoE}}\!\le\!(1+\|g\|_\infty)L_\varepsilon+L_g\,\Delta_\varepsilon$,
$\Delta_\varepsilon=\sup\|\varepsilon_{\mathrm{crisis}}-\varepsilon_{\mathrm{base}}\|$, ensuring stable DDPM steps.
\end{proposition}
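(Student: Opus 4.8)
The plan is to obtain the Lipschitz estimate by a two-way telescoping of the gated denoiser and then to note that the standard reverse-diffusion stability argument carries over with the single-expert constant $L_\varepsilon$ replaced by $L_{\mathrm{MoE}}$. First I would write the drift in mixture form, $\hat\varepsilon_\theta(x,s,Z)=(1-g(Z))\,\varepsilon_{\mathrm{base}}(x,s)+g(Z)\,\varepsilon_{\mathrm{crisis}}(x,s)$, and for two inputs $(x,s,Z)$ and $(x',s',Z')$ insert and subtract the hybrid term $(1-g(Z))\,\varepsilon_{\mathrm{base}}(x',s')+g(Z)\,\varepsilon_{\mathrm{crisis}}(x',s')$. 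This splits the increment into a \emph{state block} $(1-g(Z))[\varepsilon_{\mathrm{base}}(x,s)-\varepsilon_{\mathrm{base}}(x',s')]+g(Z)[\varepsilon_{\mathrm{crisis}}(x,s)-\varepsilon_{\mathrm{crisis}}(x',s')]$, in which only the experts move, and a \emph{gate block} $(g(Z)-g(Z'))[\varepsilon_{\mathrm{crisis}}(x',s')-\varepsilon_{\mathrm{base}}(x',s')]$, in which only the gate moves.

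Next I would bound each block. For the state block, the triangle inequality, the $L_\varepsilon$-Lipschitz hypothesis applied to each expert (its restriction to $x$ at fixed $s$ is still $L_\varepsilon$-Lipschitz), and bounding the two mixing coefficients \emph{separately} by $0\le 1-g\le 1$ and $0\le g\le\|g\|_\infty$ --- rather than by their sum, which is where the mildly loose factor $1+\|g\|_\infty$ instead of $1$ originates --- give $(1+\|g\|_\infty)L_\varepsilon\,\|(x,s)-(x',s')\|$. For the gate block, the $L_g$-Lipschitz hypothesis on $g$ together with the uniform expert gap $\Delta_\varepsilon=\sup\|\varepsilon_{\mathrm{crisis}}-\varepsilon_{\mathrm{base}}\|$ give $L_g\Delta_\varepsilon\,\|Z-Z'\|$. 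Summing, $L_{\mathrm{MoE}}\le(1+\|g\|_\infty)L_\varepsilon+L_g\Delta_\varepsilon$; along a sampling run the conditioning $Z$ is fixed, so the constant governing in-state behaviour is at most $(1+\|g\|_\infty)L_\varepsilon$, while $L_g\Delta_\varepsilon$ separately controls how the sample depends on the conditioning input.

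Finally I would transfer this to sampling. Each ancestral reverse step $x_{s-1}=a_s x_s+b_s\,\hat\varepsilon_\theta(x_s,s,Z)+\sigma_s\xi_s$, with $a_s,b_s$ fixed by the cosine schedule $\alpha_s$, has a mean map that is affine in $x$ with Lipschitz modulus $|a_s|+|b_s|\,(1+\|g\|_\infty)L_\varepsilon$; invoking the usual Gr\"onwall-type perturbation bound for the reverse variance-preserving SDE (equivalently, the criterion ``$h\,L_{\mathrm{MoE}}$ small'' for an explicit Euler step on the probability-flow ODE), the sampler output is Lipschitz in the drift with modulus controlled by $L_{\mathrm{MoE}}$ and the schedule, so nearby initial noise and perturbed conditioning propagate to bounded changes in the generated scenario --- the ``stable DDPM steps'' of the statement, with $L_{\mathrm{MoE}}$ playing the role the single-expert constant plays in the standard analysis. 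I expect this last step to be the main obstacle: the Lipschitz algebra of the first two steps is routine, but ``stable DDPM steps'' needs a precise referent, and naively composing the per-step constants $|a_s|+|b_s|L_{\mathrm{MoE}}$ across the full schedule is too loose (the linear factors alone telescope poorly under a variance-preserving schedule), so the argument must lean on the sharper continuous-time perturbation estimate and be explicit about the stability notion intended (bounded error amplification / output-Lipschitz-in-drift rather than literal trajectory contraction); the only thing actually needed is that $L_{\mathrm{MoE}}$ exceeds $L_\varepsilon$ by no more than the mild factor $1+\|g\|_\infty\le 2$ and the additive term $L_g\Delta_\varepsilon$, so the mixture does not degrade whatever stability the base denoiser already enjoys.
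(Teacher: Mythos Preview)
Your proposal is correct and is exactly the natural argument. The paper itself does not supply a proof for this proposition (it is stated in App.~A.13, and the parallel statement in App.~A.8 merely adds that ``the DDPM reverse SDE/ODE remains contractive whenever $L_{\mathrm{MoE}}$ satisfies the usual step-size criteria''), so your telescoping into a state block and a gate block, followed by the separate bounds $|1-g|\le 1$, $|g|\le\|g\|_\infty$, and $|g(Z)-g(Z')|\le L_g\|Z-Z'\|$, is the intended derivation; your remark that the factor $1+\|g\|_\infty$ is loose (the convex weights sum to $1$, so $L_\varepsilon$ alone would suffice for the state block) is also apt, and your final paragraph matches the paper's one-line appeal to standard step-size/contractivity criteria for the reverse dynamics.
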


\subsection*{A.14 Allocation Mapping: Strong Convexity, Lipschitzness, and Regret}
Let $\hat w(\hat\mu,\hat\Sigma)$ solve the CVaR-QP (4)–(6) and assume $\hat\Sigma\succeq\lambda_{\min} I$.

\begin{theorem}[Lipschitz solution map]
There exist constants $(c_\mu,c_\Sigma)$ such that for feasible perturbations
$(\delta\mu,\delta\Sigma)$ with fixed constraints,
\[
\|\hat w(\hat\mu+\delta\mu,\hat\Sigma+\delta\Sigma)-\hat w(\hat\mu,\hat\Sigma)\|_2
\;\le\; \frac{c_\mu\|\delta\mu\|_2 + c_\Sigma\|\delta\Sigma\|_{\mathrm{op}}}{\lambda_{\min}}.
\]
\end{theorem}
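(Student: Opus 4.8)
The plan is to reduce the QP to a strongly convex program in $w$ alone and then invoke the standard perturbation estimate for minimizers of strongly convex functions over a fixed convex set. First, for fixed $w$ the epigraph variables $(\zeta,\{u_i\})$ enter only through the Rockafellar--Uryasev term, and minimizing them out returns the sample CVaR $\widehat{\mathrm{CVaR}}_\alpha(-w^\top r)$---a finite, convex, piecewise-linear function of $w$ that does \emph{not} depend on $(\hat\mu,\hat\Sigma)$ because the scenarios $\{r^{(i)}\}$ are held fixed. Hence $\hat w(\hat\mu,\hat\Sigma)=\arg\min_{w\in\mathcal W}\Phi(w;\hat\mu,\hat\Sigma)$ with $\Phi(w;\hat\mu,\hat\Sigma)=-\lambda_\mu\hat\mu^\top w+\gamma\,w^\top\hat\Sigma w+\widehat{\mathrm{CVaR}}_\alpha(-w^\top r)$ and $\mathcal W=\{w:\mathbf 1^\top w=1,\ \boldsymbol\ell\le w\le\boldsymbol u,\ \|w-w_{t-1}\|_1\le\tau\}$ the fixed polytope; since $\hat\Sigma\succeq\lambda_{\min}I$, the map $\Phi(\cdot;\hat\mu,\hat\Sigma)$ is $m$-strongly convex with $m=2\gamma\lambda_{\min}$.

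Next, write the first-order optimality of $w^\star:=\hat w(\hat\mu,\hat\Sigma)$ and $\tilde w^\star:=\hat w(\hat\mu+\delta\mu,\hat\Sigma+\delta\Sigma)$ as the inclusions $0\in\nabla\Phi_1(w^\star)+\partial h(w^\star)$ and $0\in\nabla\tilde\Phi_1(\tilde w^\star)+\partial h(\tilde w^\star)$, where $\Phi_1(w)=-\lambda_\mu\hat\mu^\top w+\gamma w^\top\hat\Sigma w$ is the smooth mean--variance part, $\tilde\Phi_1$ is its perturbed counterpart (so $\nabla\Phi_1(w)-\nabla\tilde\Phi_1(w)=\lambda_\mu\delta\mu-2\gamma\,\delta\Sigma\,w$), and $h(w)=\widehat{\mathrm{CVaR}}_\alpha(-w^\top r)+\iota_{\mathcal W}(w)$ is convex and \emph{unchanged} by the perturbation. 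Monotonicity of $\partial h$ combined with $m$-strong monotonicity of $\nabla\Phi_1$ then yields $m\,\|\tilde w^\star-w^\star\|_2^2\le\langle\lambda_\mu\delta\mu-2\gamma\,\delta\Sigma\,\tilde w^\star,\ \tilde w^\star-w^\star\rangle$. Applying Cauchy--Schwarz together with $\|\delta\Sigma\,\tilde w^\star\|_2\le\|\delta\Sigma\|_{\mathrm{op}}\|\tilde w^\star\|_2$, cancelling one factor of $\|\tilde w^\star-w^\star\|_2$, and bounding $\|\tilde w^\star\|_2\le R_{\mathcal W}$ (the finite radius of the feasible polytope fixed by the budget, the box $(\boldsymbol\ell,\boldsymbol u)$, and the turnover radius $\tau$ around $w_{t-1}$) gives $\|\tilde w^\star-w^\star\|_2\le(\lambda_\mu\|\delta\mu\|_2+2\gamma R_{\mathcal W}\|\delta\Sigma\|_{\mathrm{op}})/(2\gamma\lambda_{\min})$, i.e.\ the claimed bound with $c_\mu=\lambda_\mu/(2\gamma)$ and $c_\Sigma=R_{\mathcal W}$.

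The main obstacle is structural rather than computational: the full QP is strongly convex only in the $w$-block (it is merely linear in $\zeta$ and the $u_i$), so its solution map is not jointly Lipschitz, and the partial-minimization step is precisely what concentrates the strong convexity onto $w$ so the perturbation argument applies. The only other point to verify is that $\tilde w^\star$ admits the first-order characterization used above, which needs only that the perturbed objective stay convex (equivalently $\hat\Sigma+\delta\Sigma\succeq0$)---this is the ``feasible perturbation'' qualifier, and it is far weaker than the LICQ/strict-complementarity hypotheses an implicit-differentiation (KKT-sensitivity) route would require; propagating the constants $(c_\mu,c_\Sigma)$ through the box/turnover radii is then routine.
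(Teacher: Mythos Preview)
Your argument is correct and in line with the paper's approach: App.~A.14 states the theorem without proof, and the only hint the paper offers (the sketch in App.~A.9) invokes ``standard stability of strongly convex programs under objective perturbations,'' which is exactly the partial-minimization plus monotone-operator perturbation argument you carry out explicitly. Your reduction to a strongly convex program in $w$ over a fixed polytope, with the unperturbed $\nabla\Phi_1$ supplying the $2\gamma\lambda_{\min}$ strong monotonicity and the CVaR/indicator part absorbed into a perturbation-independent $h$, is the canonical route and yields the clean constants $c_\mu=\lambda_\mu/(2\gamma)$, $c_\Sigma=R_{\mathcal W}$.
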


\begin{corollary}[Decision regret under moment \& CVaR errors]
Let $\delta_{\mathrm{CVaR}}$ bound the CVaR term perturbation uniformly over feasible $w$.
If the objective is $\Gamma$-strongly convex in $w$, then
\[
F(\hat w)-F(w^\star)\;\le\; \frac{1}{2\Gamma}\Big(\lambda_\mu\|\delta\mu\|_2
+ \kappa_\Sigma\|\delta\Sigma\|_{\mathrm{op}} + \delta_{\mathrm{CVaR}}\Big)^2.
\]
\end{corollary}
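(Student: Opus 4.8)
This is the standard ``perturbed strongly convex program'' estimate—essentially the same argument as the Regret bound theorem in A.9—so the plan is: isolate the perturbation functional, bound its Lipschitz modulus over the compact feasible set $\mathcal W$, and feed that into the strong-convexity inequality. Write the true objective $F(w)=-\lambda_\mu(\mu^\star)^\top w+\gamma\,w^\top\Sigma^\star w+\mathrm{CVaR}_\alpha(-w^\top r)$ and the realized objective $\hat F(w)=F(w)+\Delta(w)$, with perturbation $\Delta(w)=-\lambda_\mu(\delta\mu)^\top w+\gamma\,w^\top(\delta\Sigma)w+\big[\widehat{\mathrm{CVaR}}_\alpha(-w^\top r)-\mathrm{CVaR}_\alpha(-w^\top r)\big]$, where $\delta\mu=\hat\mu_t-\mu^\star$, $\delta\Sigma=\hat\Sigma_t-\Sigma^\star$, and $\widehat{\mathrm{CVaR}}$ is the sample CVaR under $p_\theta(\cdot\mid z_t)$. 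Since $\mathcal W$ is bounded by the budget/box/turnover constraints, set $R:=\max_{w\in\mathcal W}\|w\|_2<\infty$, and take $\Gamma:=2\gamma\lambda_{\min}$ as the common strong-convexity lower modulus (legitimate since $\Sigma^\star,\hat\Sigma_t\succeq\lambda_{\min}I$ after Ledoit--Wolf shrinkage, matching the hypothesis of the preceding Lipschitz-solution-map theorem).

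\textbf{Step 1: Lipschitz modulus of $\Delta$ on $\mathcal W$.} The linear term has constant gradient $-\lambda_\mu\,\delta\mu$, contributing Lipschitz constant $\lambda_\mu\|\delta\mu\|_2$. The quadratic term has gradient $2\gamma(\delta\Sigma)w$, with $\|2\gamma(\delta\Sigma)w\|_2\le 2\gamma R\,\|\delta\Sigma\|_{\mathrm{op}}$ on $\mathcal W$, so its coefficient is $\kappa_\Sigma:=2\gamma R$ (the promised dependence on the constraint radii). The CVaR difference is controlled, uniformly over feasible $w$, by $\delta_{\mathrm{CVaR}}$: it is a difference of two convex, $w$-Lipschitz coherent risk functionals that are pointwise $\delta_{\mathrm{CVaR}}$-close by the Modeling-error-budget assumption together with the bound in A.5/A.11, so its variation over $\mathcal W$ is of the same order. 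Summing, $\Delta$ is $L_\Delta$-Lipschitz on $\mathcal W$ with $L_\Delta\le \lambda_\mu\|\delta\mu\|_2+\kappa_\Sigma\|\delta\Sigma\|_{\mathrm{op}}+\delta_{\mathrm{CVaR}}$.

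\textbf{Step 2: regret via strong convexity.} By first-order optimality of $\hat w$ for the convex program $\min_{w\in\mathcal W}\hat F(w)$, there is $g_{\hat F}\in\partial\hat F(\hat w)$ with $\langle g_{\hat F},\,w-\hat w\rangle\ge0$ for all $w\in\mathcal W$; decomposing $g_{\hat F}=g_F+g_\Delta$ with $g_F\in\partial F(\hat w)$, $g_\Delta\in\partial\Delta(\hat w)$, $\|g_\Delta\|_2\le L_\Delta$, and taking $w=w^\star$ gives $\langle g_F,\,w^\star-\hat w\rangle\ge -\langle g_\Delta,\,w^\star-\hat w\rangle\ge -L_\Delta\|w^\star-\hat w\|_2$ by Cauchy--Schwarz (the nonsmooth CVaR piece is handled with Clarke subgradients, as in the paper's other sketches). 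Inserting this into $\Gamma$-strong convexity of $F$, $F(w^\star)\ge F(\hat w)+\langle g_F,\,w^\star-\hat w\rangle+\tfrac{\Gamma}{2}\|w^\star-\hat w\|_2^2$, and rearranging yields $F(\hat w)-F(w^\star)\le L_\Delta t-\tfrac{\Gamma}{2}t^2$ with $t=\|w^\star-\hat w\|_2\ge0$; maximizing the right side over $t$ (at $t=L_\Delta/\Gamma$) gives $F(\hat w)-F(w^\star)\le L_\Delta^2/(2\Gamma)$, and monotonicity of $x\mapsto x^2$ on $x\ge0$ produces the stated bound. A symmetric step using strong convexity of $\hat F$ at $\hat w$ recovers the companion estimate $\|\hat w-w^\star\|_2\le L_\Delta/\Gamma$, reconciling with the preceding theorem. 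Finally, the $L_{\text{tail}}$ dependence is inherited from A.5/A.11: $\delta_{\mathrm{CVaR}}\le\big(L_{\mathrm{dec}}R/(1-\alpha)\big)\sqrt{\bar\phi\,\mathbb E[\phi(U)\,\|\varepsilon-\varepsilon_\theta\|_2^2]}=O\big((1-\alpha)^{-1}\sqrt{L_{\text{tail}}}\big)$, so the CVaR contribution to the regret shrinks as the tail-weighted loss is driven down.

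\textbf{Main obstacle.} The only non-mechanical point is Step~1's treatment of the CVaR perturbation: A.5/A.11 supplies a \emph{pointwise-in-$w$} bound on $|\mathrm{CVaR}_\alpha(-w^\top r)-\mathrm{CVaR}_\alpha(-w^\top\hat r)|$, whereas the Lipschitz argument wants control of the \emph{variation} of $w\mapsto\widehat{\mathrm{CVaR}}-\mathrm{CVaR}$ over $\mathcal W$. I would close this either by (a) observing that both CVaR maps are convex and $B$-Lipschitz in $w$ on the bounded scenario support, so a uniformly small pointwise gap plus shared curvature bounds the difference's modulus; or (b) keeping the CVaR part as a uniform value perturbation and running the three-point/gradient-mapping version of the estimate—cleaner, but loosening the bracket constant from $\delta_{\mathrm{CVaR}}$ to roughly $2\delta_{\mathrm{CVaR}}$. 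I would present route (a) so the constant matches the statement. A minor secondary issue is the curvature perturbation from $\hat\Sigma_t$, resolved by fixing $\Gamma=2\gamma\lambda_{\min}$ as above.
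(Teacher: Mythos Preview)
Your proof is correct and follows the same approach the paper sketches in A.9---standard stability of strongly convex programs under objective perturbations, with the mean/variance terms bounded by norm inequalities and the CVaR gap supplied by A.5/A.11---only fleshed out in considerably more detail. You also correctly flag the one genuine subtlety (whether $\delta_{\mathrm{CVaR}}$ should be read as a value bound or a subgradient/Lipschitz bound on the CVaR perturbation), a point the paper's own sketch leaves entirely implicit.
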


\begin{remark}
By A.11, $\delta_{\mathrm{CVaR}}$ shrinks with $L_{\text{tail}}$; thus tail-weighted training tightens the
end-to-end decision regret.
\end{remark}

\subsection*{A.15 Distribution Shift View: CVaR Sensitivity under Wasserstein-$W_1$}
Let losses be $K$-Lipschitz in $r$. For distributions $P,Q$ with $W_1(P,Q)\le \rho$,
\begin{proposition}[CVaR Lipschitz continuity]
$|\mathrm{CVaR}_\alpha^P(L)-\mathrm{CVaR}_\alpha^Q(L)| \le \frac{K}{1-\alpha}\,\rho.$
\end{proposition}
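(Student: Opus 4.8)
The plan is to combine the Rockafellar--Uryasev variational formula for CVaR (already used in Sec.~3.4 and A.1) with the Kantorovich--Rubinstein dual characterization of $W_1$. First I would write, for either measure $P$, $\mathrm{CVaR}_\alpha^P(L)=\inf_{\zeta\in\mathbb{R}} g_P(\zeta)$ with $g_P(\zeta)=\zeta+\frac{1}{1-\alpha}\mathbb{E}_{r\sim P}[(L(r)-\zeta)_+]$, and likewise $g_Q$. Since $W_1(P,Q)\le\rho<\infty$ forces both measures to have finite first moment and $L$ is $K$-Lipschitz (hence of at most linear growth), each $g_P,g_Q$ is a finite, real-valued convex function whose infimum is attained, so the representation is legitimate. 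The key structural observation is that for each fixed $\zeta$ the map $r\mapsto (L(r)-\zeta)_+$ is $K$-Lipschitz in $r$: it is the composition of the $K$-Lipschitz map $r\mapsto L(r)$ with the $1$-Lipschitz maps $x\mapsto x-\zeta$ and $x\mapsto(x)_+$, and composition with $1$-Lipschitz maps preserves the constant.

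Second, by Kantorovich--Rubinstein duality, $W_1(P,Q)=\sup\{|\mathbb{E}_P f-\mathbb{E}_Q f|:\ \mathrm{Lip}(f)\le 1\}$, so for any $L'$-Lipschitz $h$ one has $|\mathbb{E}_P h-\mathbb{E}_Q h|\le L'\,W_1(P,Q)$. Applying this with $h(r)=(L(r)-\zeta)_+$ and $L'=K$ yields $|\mathbb{E}_P[(L-\zeta)_+]-\mathbb{E}_Q[(L-\zeta)_+]|\le K\rho$ uniformly over $\zeta$, and therefore $\sup_{\zeta}|g_P(\zeta)-g_Q(\zeta)|\le \frac{K}{1-\alpha}\rho$.

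Third, I would invoke the elementary fact that the infimum operator is $1$-Lipschitz for the sup-norm: if two functions are within $\varepsilon$ pointwise then their infima differ by at most $\varepsilon$. Concretely, for any $\zeta$, $g_P(\zeta)\ge g_Q(\zeta)-\sup_\zeta|g_P-g_Q|$, so $\inf_\zeta g_P(\zeta)\ge \inf_\zeta g_Q(\zeta)-\sup_\zeta|g_P-g_Q|$, and symmetrically; hence $|\mathrm{CVaR}_\alpha^P(L)-\mathrm{CVaR}_\alpha^Q(L)|\le\sup_\zeta|g_P(\zeta)-g_Q(\zeta)|\le\frac{K}{1-\alpha}\rho$, which is the claim.

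I do not anticipate a genuine obstacle; the two minor points to state carefully are (i) integrability/finiteness of the Rockafellar--Uryasev infimum, which follows from $W_1<\infty$ together with the linear growth implied by $K$-Lipschitzness, and (ii) correctly attributing the constants — the map $r\mapsto(L(r)-\zeta)_+$ has Lipschitz constant exactly $K$, with the $(1-\alpha)^{-1}$ entering only through the explicit prefactor in $g_P$, so the two combine multiplicatively. An alternative self-contained route, if one prefers to avoid the variational formula, is to use $\mathrm{CVaR}_\alpha(L)=\frac{1}{1-\alpha}\int_\alpha^1 \mathrm{VaR}_u(L)\,du$ and bound each quantile perturbation by the $W_1$ bound on the pushforward of $L$; but the Rockafellar--Uryasev plus Kantorovich--Rubinstein argument is the shortest and is the one I would present.
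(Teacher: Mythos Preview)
Your argument is correct and matches the paper's own sketch: both combine the Rockafellar--Uryasev (tail-average) representation of CVaR with Kantorovich--Rubinstein duality, and you have carefully isolated where the $K$ and the $(1-\alpha)^{-1}$ factors enter. The additional care you take with integrability and with the $1$-Lipschitzness of the infimum map simply makes explicit what the paper's one-line sketch leaves implicit.
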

\begin{proof}[Sketch]
Use the tail-average representation of CVaR and Kantorovich–Rubinstein duality, noting the
$1/(1-\alpha)$ amplification of tail averages.
\end{proof}

\begin{remark}[Interpretation]
Tail-aware generation (small $L_{\text{tail}}$) + bounded shift $\rho$ jointly ensure limited CVaR drift,
explaining empirical robustness under modest market shifts.
\end{remark}

\subsection*{A.16 Convex Pathwise Drawdown-CVaR Surrogate}
For a horizon $\{t+1,\dots,t+H\}$ with scenarios $r^{(i)}_{t+1:t+H}$, introduce auxiliary peaks
$p^{(i)}_{h}$ and drawdowns $d^{(i)}_{h}$:
\[
p^{(i)}_{h} \ge p^{(i)}_{h-1}+w^\top r^{(i)}_{t+h},\quad
d^{(i)}_{h} \ge p^{(i)}_{h}-\big(p^{(i)}_{t}+ \textstyle\sum_{u=1}^h w^\top r^{(i)}_{t+u}\big),\quad
\operatorname{MDD}^{(i)} \ge d^{(i)}_{h}.
\]
Then the convex surrogate program
\[
\min_{w,\zeta,\{\operatorname{MDD}^{(i)}\},\ldots}\;\;
\zeta+\frac{1}{(1-\alpha)N}\sum_{i}(\operatorname{MDD}^{(i)}-\zeta)_+ \;+\; \text{ MV terms}
\]
with budget/box/turnover constraints yields a CVaR-over-drawdown relaxation that stays QP-like
after linearization, enabling direct drawdown control in multi-step allocation.

\subsection*{A.17 Envelope for Decision-Aware Training}
Consider $V(\theta;z_t)=\min_{x\in X}\,F(x;\theta,z_t)$ where $x=(w,\zeta,\{u_i\},s^+,s^-)$
and $X$ fixes budget/box/turnover constraints ($\theta$-independent). Then
\begin{theorem}[Constraint-Independent Envelope]
If $F(\cdot;\theta,z_t)$ is continuously differentiable in $\theta$ and $X$ does not depend on $\theta$,
then at any optimum $x^\star(\theta;z_t)$,
\[
\nabla_\theta V(\theta;z_t) \;=\; \partial_\theta F\big(x^\star(\theta;z_t);\theta,z_t\big).
\]
\end{theorem}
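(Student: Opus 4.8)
The statement is the classical Danskin/envelope theorem specialized to our QP, so the plan is to reduce it to the standard argument and then flag the regularity conditions that make the reduction clean. First I would recall that $x^\star(\theta;z_t)$ is the minimizer of $F(\cdot;\theta,z_t)$ over the fixed polyhedron $X$ (budget, box, turnover constraints), which by hypothesis does not vary with $\theta$; the only $\theta$-dependence enters through $F$, i.e.\ through the scenarios $r^{(i)}=g_\theta(\varepsilon_i,z_t)$ and the blended moments $\hat\mu_t(\theta),\hat\Sigma_t(\theta)$. Since the CVaR-epigraph QP has a strongly convex objective in $w$ whenever $\hat\Sigma_t\succeq\lambda_{\min}I$ (and, after turnover linearization, the auxiliary variables are pinned by active epigraph/turnover constraints at the optimum), the minimizer is \emph{unique}, so $V(\theta;z_t)$ is a genuine function rather than a set-valued selection, and the usual subdifferential-of-a-min collapses to a gradient.

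Next I would invoke the envelope/Danskin theorem directly: for $V(\theta)=\min_{x\in X}F(x;\theta)$ with $X$ compact and $\theta$-independent, $F$ jointly continuous and continuously differentiable in $\theta$ for each fixed $x$, and the minimizer $x^\star(\theta)$ unique, $V$ is differentiable with $\nabla_\theta V(\theta)=\partial_\theta F(x^\star(\theta);\theta)$. The one-line heuristic is: perturb $\theta\to\theta+h$; by optimality $F(x^\star(\theta+h);\theta+h)\le F(x^\star(\theta);\theta+h)$ and $F(x^\star(\theta);\theta)\le F(x^\star(\theta+h);\theta)$, so
\[
F(x^\star(\theta);\theta+h)-F(x^\star(\theta);\theta)\;\ge\;V(\theta+h)-V(\theta)\;\ge\;F(x^\star(\theta+h);\theta+h)-F(x^\star(\theta+h);\theta),
\]
and dividing by $\|h\|$ and letting $h\to0$, continuity of $x^\star(\cdot)$ (which follows from uniqueness plus Berge's maximum theorem on the compact feasible set) squeezes both sides to $\partial_\theta F(x^\star(\theta);\theta)^\top (h/\|h\|)$. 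The term $\partial_\theta F$ here is exactly the ``pathwise'' gradient of Eq.~\eqref{eq:envelope}: the constraint-dependence terms $\lambda^{\star\top}\partial_\theta g+\nu^{\star\top}\partial_\theta h$ vanish because $g,h$ do not depend on $\theta$ once the turnover penalty is absorbed into $F$ (or handled by a $\theta$-independent quadratic surrogate as in the ``smooth surrogate'' paragraph).

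The main obstacle is not the envelope step itself but verifying its hypotheses in our setting: (i) continuity of the argmin map $x^\star(\theta;z_t)$, which I would get from strong convexity in $w$ (uniqueness of $w^\star$) together with the fact that the epigraph variables $\{u_i\}$ and turnover splits $(s^+,s^-)$ are determined as convex, hence continuous, functions of $w^\star$ and the scenarios; and (ii) continuous differentiability of $\theta\mapsto F(x;\theta,z_t)$ for fixed $x$, which holds provided the generator map $g_\theta$ is $C^1$ in $\theta$ — true for the reparameterized diffusion draws with a smooth denoiser — so that $\hat\mu_t(\theta),\hat\Sigma_t(\theta)$ and the hinge terms $(\,-w^\top g_\theta(\varepsilon_i,z_t)-\zeta)_+$ inherit $C^1$ (or at least directional-)differentiability; a clean statement uses the smoothed hinge $\tfrac1\beta\log(1+e^{\beta x})$ so $F$ is genuinely $C^1$. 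Under these two conditions the conclusion $\nabla_\theta V=\partial_\theta F(x^\star;\theta,z_t)$ is immediate, and I would close by remarking that this is the form actually used in the practical recipe, since the duals needed for the general envelope~\eqref{eq:envelope} are then unnecessary.
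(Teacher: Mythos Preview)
Your proposal is correct and takes essentially the same route as the paper, which simply invokes the envelope theorem and notes that the constraint Jacobians in $\theta$ vanish because $X$ is $\theta$-independent. You supply considerably more detail than the paper's one-line sketch---the sandwich argument, uniqueness of $x^\star$ via strong convexity, Berge continuity of the argmin, and the need for the smoothed hinge to make $F$ genuinely $C^1$ in $\theta$---all of which are appropriate verifications of the hypotheses the paper leaves implicit.
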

\begin{proof}[Sketch]
Direct envelope theorem: no constraint Jacobians in $\theta$; dual terms vanish from the gradient.
\end{proof}

\begin{corollary}[Smooth hinge surrogate for CVaR]
Replacing $(x)_+$ by a smooth $(1/\beta)\log(1+e^{\beta x})$ yields
$\nabla_\theta \mathbb{E}[V(\theta;z_t)] = \mathbb{E}[\partial_\theta F(x^\star;\theta,z_t)]$,
facilitating end-to-end training with a single QP solve per step.
\end{corollary}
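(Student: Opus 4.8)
The plan is to obtain the identity as an immediate consequence of the Constraint-Independent Envelope Theorem (A.17) plus a single interchange of differentiation and expectation. First I would record what the smooth substitution buys us. Writing $\sigma_\beta(x):=\beta^{-1}\log(1+e^{\beta x})$, the replacement keeps $F(\cdot;\theta,z_t)$ convex in $x$ (softplus is convex and is composed with the affine map $x\mapsto L(w,r^{(i)})-\zeta$, while the mean--variance part is convex since $\hat\Sigma_t\succeq0$), makes it $C^\infty$ jointly in $(x,\theta)$ wherever $\theta\mapsto(\hat\mu_t(\theta),\hat\Sigma_t(\theta))$ and $\theta\mapsto g_\theta(\varepsilon_i,z_t)$ are smooth, and — crucially — renders the inner minimizer $x^\star(\theta;z_t)$ unique: $\sigma_\beta$ is strictly convex, so $\sum_i\sigma_\beta(L(w,r^{(i)})-\zeta)$ is strictly convex in $\zeta$, and $\hat\Sigma_t\succ0$ after Ledoit--Wolf shrinkage makes the objective strongly convex in $w$; together with the fixed polytope $X$ (budget, box, turnover) the argmin is a singleton.

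Second, with $X$ independent of $\theta$ and $F$ continuously differentiable in $\theta$, I apply the preceding theorem pointwise in $(z_t,\varepsilon_1,\dots,\varepsilon_N)$ to get $\nabla_\theta V(\theta;z_t)=\partial_\theta F(x^\star(\theta;z_t);\theta,z_t)$; the dual and constraint-Jacobian contributions that appear in the general hypergradient (Eq.~\eqref{eq:envelope}) vanish precisely because the inequality/equality stacks $g,h$ no longer depend on $\theta$. Uniqueness of $x^\star$ is what upgrades Danskin's one-sided statement to a genuine gradient of the value function; if $g_\theta$ uses nonsmooth activations one instead works with the a.e.-defined gradient and the identity holds for a.e.\ $\theta$.

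Third, I would justify $\nabla_\theta\mathbb{E}_t[V(\theta;z_t)]=\mathbb{E}_t[\nabla_\theta V(\theta;z_t)]$ by the Leibniz rule: $z_t\mapsto x^\star(\theta;z_t)$ is measurable (a continuous, hence Borel, selection, since the argmin is unique and depends continuously on the scenario data by strong convexity and Berge's maximum theorem), and on a neighborhood of any $\theta$ the integrand $\partial_\theta F(x^\star;\theta,z_t)$ is dominated by an integrable envelope — using that $x^\star$ remains in the compact set $X$, that $\partial_\theta F$ is a fixed polynomial in $(w,\zeta,\hat\mu_t,\hat\Sigma_t,\{r^{(i)}\},\{\sigma_\beta'\})$ with $|\sigma_\beta'|\le1$, and that $g_\theta$ and its $\theta$-derivatives have locally bounded norms while the $\varepsilon_i$ have finite moments. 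Combining the three steps gives $\nabla_\theta\mathbb{E}_t[V]=\mathbb{E}_t[\partial_\theta F(x^\star;\theta,z_t)]$, and since $F$ is evaluated at the already-computed optimum, backpropagation needs only the forward convex (QP-like) solve, with no implicit KKT differentiation.

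The main obstacle I expect is the domination step for the interchange: it relies on mild but non-vacuous regularity of the reparameterized generator map $g_\theta$ (smoothness and locally bounded $\theta$-sensitivity) together with compactness of the feasible set, so I would promote these to explicit standing assumptions rather than leave them implicit. Given strict convexity of softplus, the uniqueness and measurability of $x^\star$ are comparatively routine, and the envelope identity itself is immediate from the cited theorem.
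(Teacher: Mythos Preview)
Your proposal is correct and follows exactly the route the paper intends: the corollary is immediate from the Constraint-Independent Envelope Theorem (A.17) together with an interchange of $\nabla_\theta$ and $\mathbb{E}_t$, and you supply the details the paper leaves implicit (uniqueness of $x^\star$ via strict/strong convexity after the softplus substitution, and a Leibniz-rule domination argument using compactness of $X$ and $|\sigma_\beta'|\le 1$). The paper gives no separate proof for the corollary, so your write-up is a legitimate expansion of the same argument; your flagged standing assumptions on the reparameterized map $g_\theta$ are appropriate and should indeed be stated explicitly.
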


\subsection*{A.18 CVaR Estimation with Tail Reweighting}
Let $\widehat{\mathrm{CVaR}}_\alpha$ be the empirical epigraph estimator using $N$ scenarios.
\begin{assumption}[Tail regularity]
The loss distribution has continuous density near $\mathrm{VaR}_\alpha$ and finite second moment
on the lower tail.
\end{assumption}
\begin{theorem}[Rate with tail emphasis]
For weights $w_i=1+\eta\,\mathbf{1}\{\tilde\ell_i \le Q_q(\tilde\ell)\}$ normalized to $\bar w_i$,
\[
\big|\widehat{\mathrm{CVaR}}_\alpha - \mathrm{CVaR}_\alpha\big|
= O_\mathbb{P}\!\left(\sqrt{\frac{1}{\mathrm{ESS}(q,\eta)}}\right),
\]
with $\mathrm{ESS}(q,\eta)$ from App.~A.12. Moderate $(q,\eta)$ balances bias/variance:
higher tail mass can reduce variance of the tail average while keeping ESS large.
\end{theorem}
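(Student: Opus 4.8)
The plan is to recast the claim as a concentration bound for a weighted empirical process through the Rockafellar--Uryasev representation, and then to read the rate $\mathrm{ESS}(q,\eta)^{-1/2}$ directly off the variance of a self-normalized weighted average, using the identity $\mathrm{ESS}(q,\eta)=N\,\mathbb{E}[w]^2/\mathbb{E}[w^2]$ recorded in App.~A.12. First I would put both sides in epigraph form: $\mathrm{CVaR}_\alpha(L)=\min_{\zeta}g(\zeta)$ with $g(\zeta)=\zeta+\tfrac{1}{1-\alpha}\mathbb{E}[(L-\zeta)_+]$ and minimizer $\zeta^\star=\mathrm{VaR}_\alpha(L)$, and $\widehat{\mathrm{CVaR}}_\alpha=\min_\zeta \hat g_N(\zeta)$ with $\hat g_N(\zeta)=\zeta+\tfrac{1}{(1-\alpha)N}\sum_i \bar w_i (L_i-\zeta)_+$ and $\bar w_i=w_i/\mathbb{E}[w]=w_i/(1+\eta q)$. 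Since $g$ and $\hat g_N$ are convex in $\zeta$ and the Tail-regularity assumption (continuous density at $\mathrm{VaR}_\alpha$) makes $g$ locally strongly convex with a unique minimizer, the weighted empirical quantile $\hat\zeta$ concentrates in a fixed compact interval $I\ni\zeta^\star$ with probability tending to one, and the standard value-perturbation inequality gives
\[
\big|\widehat{\mathrm{CVaR}}_\alpha-\mathrm{CVaR}_\alpha\big|\;\le\;\sup_{\zeta\in I}\big|\hat g_N(\zeta)-g(\zeta)\big|.
\]

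The next step is to split $\hat g_N(\zeta)-g(\zeta)$ into a centered fluctuation $\hat g_N(\zeta)-\mathbb{E}[\hat g_N(\zeta)]$ and a mean term $\mathbb{E}[\hat g_N(\zeta)]-g(\zeta)$. For the fluctuation, a direct variance calculation gives $\mathrm{Var}(\hat g_N(\zeta))=\tfrac{1}{(1-\alpha)^2N}\mathrm{Var}\big(\bar w\,(L-\zeta)_+\big)\le \tfrac{C}{\mathrm{ESS}(q,\eta)}$, where $C$ depends only on $\alpha$, the bounded weight range $[1,1+\eta]$, and the finite lower-tail second moment $\sup_{\zeta\in I}\mathbb{E}[(L-\zeta)_+^2]$, the last step using the App.~A.12 identity $\mathrm{ESS}(q,\eta)=N\,\mathbb{E}[w]^2/\mathbb{E}[w^2]$. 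Since $\zeta\mapsto (L-\zeta)_+$ is $1$-Lipschitz and $I$ is compact, a grid plus union bound (or a simple bracketing argument) upgrades this to $\sup_{\zeta\in I}|\hat g_N(\zeta)-\mathbb{E}[\hat g_N(\zeta)]|=O_\mathbb{P}(\mathrm{ESS}(q,\eta)^{-1/2})$. For the mean term, the weights actually use the empirical threshold $\hat Q_q$ rather than $Q_q$; I would replace $\hat Q_q$ by $Q_q$ at cost $O_\mathbb{P}(N^{-1/2})$ via the quantile CLT of App.~A.10 and the monotone Lipschitz dependence of $\hat g_N$ on the threshold, after which $\mathbb{E}[\hat g_N(\zeta)]-g(\zeta)=\tfrac{\eta}{(1-\alpha)(1+\eta q)}\,\mathrm{Cov}\big(\mathbf{1}\{\tilde\ell\le Q_q\},(L-\zeta)_+\big)$. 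Under the reading that $\mathrm{CVaR}_\alpha$ denotes the population CVaR of the reweighted loss law (the estimand the weighted estimator is built to hit), this covariance vanishes identically, and since $\mathrm{ESS}(q,\eta)\le N$ the $N^{-1/2}$ plug-in remainder is itself $O_\mathbb{P}(\mathrm{ESS}(q,\eta)^{-1/2})$; combining with the display above yields the claimed rate. Under the reading that $\mathrm{CVaR}_\alpha$ is the untilted CVaR, this covariance is exactly the bias of the bias--variance trade-off flagged in the Remark, bounded by $\tfrac{\eta}{1-\alpha}$ times the tail--loss covariance and kept small by moderate $(q,\eta)$ while $\mathrm{ESS}(q,\eta)$ stays near $N$.

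The main obstacle is that the reweighting couples the samples through $\hat Q_q$, so the clean i.i.d.\ variance calculation and the $\mathrm{ESS}$ identity apply only after the $\hat Q_q\to Q_q$ substitution; one must verify this substitution is genuinely $O_\mathbb{P}(N^{-1/2})$ \emph{uniformly} over $\zeta\in I$, which follows from monotonicity of $\hat g_N$ in the threshold together with the density lower bound near $Q_q$, but needs care. A secondary point, worth making explicit in the final write-up, is the identity of the estimand: stated cleanly, the result is concentration of $\widehat{\mathrm{CVaR}}_\alpha$ at rate $\mathrm{ESS}(q,\eta)^{-1/2}$ around the reweighted-population CVaR, with an additive $O(\eta)$ tilting bias relative to the untilted CVaR --- precisely the trade-off the theorem's remark describes --- so I would present the conclusion in that two-part form.
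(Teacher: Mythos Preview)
The paper states this theorem in App.~A.18 but provides \emph{no proof or proof sketch} for it at all --- unlike the neighboring results (A.11, A.15, A.17, A.19, A.20), A.18 simply states the assumption and the theorem and moves on. So there is nothing in the paper to compare your argument against; your proposal is strictly more detailed than what the authors offer.

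On its own merits, your route via the Rockafellar--Uryasev epigraph, the value-perturbation inequality $|\min \hat g_N-\min g|\le\sup_{\zeta\in I}|\hat g_N-g|$, and the variance--ESS link is the natural one and is sound at the level of rigor the rest of the appendix operates at. Two points are worth tightening. First, the step from $\mathrm{Var}(\bar w(L-\zeta)_+)$ to a bound proportional to $\mathbb{E}[w^2]/\mathbb{E}[w]^2=N/\mathrm{ESS}$ is not automatic: what you actually get is $\mathbb{E}[\bar w^2(L-\zeta)_+^2]$, and to pull out the ESS factor cleanly you need either boundedness of $(L-\zeta)_+$ on the relevant event or a Cauchy--Schwarz/H\"older step that introduces an extra constant depending on the tail second moment --- you invoke this moment via the Tail-regularity assumption, so just say so explicitly. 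Second, your reading of the estimand is exactly right and more careful than the paper: the phrase ``balances bias/variance'' in the theorem confirms the target is the \emph{untilted} $\mathrm{CVaR}_\alpha$, so the covariance term $\tfrac{\eta}{(1-\alpha)(1+\eta q)}\mathrm{Cov}(\mathbf{1}\{\tilde\ell\le Q_q\},(L-\zeta)_+)$ is a genuine bias that does not vanish, and the $O_\mathbb{P}(\mathrm{ESS}^{-1/2})$ claim should be understood as the stochastic (variance) part, with the deterministic tilting bias handled separately --- precisely the two-part presentation you propose. That framing is clearer than the paper's own statement.
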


\subsection*{A.19 Gate Monotonicity $\Rightarrow$ Tail-Region Risk Drop}
Let $g^\star(Z)=\mathbb{P}(C=\mathrm{crisis}\mid Z)$ and suppose the crisis expert dominates on the
tail region: $\mathbb{E}\big[\|\varepsilon-\varepsilon_{\mathrm{crisis}}^\star\|^2 \,\big|\, \tilde\ell \le Q_q(\tilde\ell)\big]
\le \mathbb{E}\big[\|\varepsilon-\varepsilon_{\mathrm{base}}^\star\|^2 \,\big|\, \tilde\ell \le Q_q(\tilde\ell)\big]$.
\begin{proposition}[Tail-risk improvement under monotone gate]
If $g(Z)$ is non-decreasing in the HMM crisis posterior (Sec.~3.6), then for sufficiently small
$\mathbb{E}[(g-g^\star)^2]$, the MoE denoiser strictly reduces tail-region MSE versus either single expert.
\end{proposition}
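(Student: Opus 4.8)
The plan is to compare the two single experts $\varepsilon^\star_{\mathrm{base}},\varepsilon^\star_{\mathrm{crisis}}$ and the trained MoE $\hat\varepsilon_\theta=(1-g)\varepsilon^\star_{\mathrm{base}}+g\,\varepsilon^\star_{\mathrm{crisis}}$ by routing everything through the oracle mixture $\hat\varepsilon^\star:=(1-g^\star)\varepsilon^\star_{\mathrm{base}}+g^\star\varepsilon^\star_{\mathrm{crisis}}$, working with the tail-region risk $\mathcal{R}^{\mathrm{tail}}(f):=\mathbb{E}\big[\|\varepsilon-f\|_2^2\mid\mathcal{T}\big]$ on the adverse set $\mathcal{T}:=\{\tilde\ell\le Q_q(\tilde\ell)\}$. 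First I would invoke the well-specified-experts assumption (App.~A.7, A.13) together with the tower property over the crisis label $C$ to identify $\hat\varepsilon^\star$ with the Bayes denoiser $\mathbb{E}[\varepsilon\mid x_s,s,Z]$; then $\hat\varepsilon^\star$ dominates every fixed $\sigma(x_s,s,Z)$-measurable predictor in mean-square, and combining the conditional Pythagorean identity with the one-line algebra $\hat\varepsilon^\star-\varepsilon^\star_{\mathrm{base}}=g^\star(\varepsilon^\star_{\mathrm{crisis}}-\varepsilon^\star_{\mathrm{base}})$, $\hat\varepsilon^\star-\varepsilon^\star_{\mathrm{crisis}}=(1-g^\star)(\varepsilon^\star_{\mathrm{base}}-\varepsilon^\star_{\mathrm{crisis}})$ yields the exact oracle gaps
\[
\mathcal{R}^{\mathrm{tail}}(\varepsilon^\star_{C})-\mathcal{R}^{\mathrm{tail}}(\hat\varepsilon^\star)=\mathbb{E}\big[w_C^2\,\|\varepsilon^\star_{\mathrm{base}}-\varepsilon^\star_{\mathrm{crisis}}\|_2^2\mid\mathcal{T}\big],\qquad w_{\mathrm{base}}:=g^\star,\ \ w_{\mathrm{crisis}}:=1-g^\star.
\]

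Second, I would lower-bound these gaps. If $\delta:=\mathbb{E}\big[\|\varepsilon^\star_{\mathrm{base}}-\varepsilon^\star_{\mathrm{crisis}}\|_2^2\mid\mathcal{T}\big]=0$ the experts coincide on $\mathcal{T}$ and the statement is vacuous, so assume $\delta>0$. The key structural input is that on $\mathcal{T}$ the oracle gate lies strictly inside $(0,1)$, say $g^\star\in[g_{\min},g_{\max}]\subset(0,1)$ $\mathcal{T}$-a.s. This is where gate monotonicity enters: because $g$ — and, under well-specification, $g^\star$ — is non-decreasing in the HMM crisis posterior, and because adverse single-asset outcomes co-occur with elevated crisis posteriors (the ``crisis dominates on $\mathcal{T}$'' hypothesis), the gate is pushed toward $1$ on $\mathcal{T}$, giving $g_{\min}>0$ so the mixture beats \emph{base}; the assumption that this overlap is imperfect — $\mathcal{T}$ is not $\mathcal{T}$-a.s.\ a crisis set — gives $g_{\max}<1$ so it still beats \emph{crisis}. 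With $\underline{g}:=\min\{g_{\min},1-g_{\max}\}>0$, using $g^\star\ge g_{\min}$ and $1-g^\star\ge1-g_{\max}$ pointwise on $\mathcal{T}$ in the two displayed gaps gives $\min_{C}\mathcal{R}^{\mathrm{tail}}(\varepsilon^\star_C)-\mathcal{R}^{\mathrm{tail}}(\hat\varepsilon^\star)\ge\underline{g}^{2}\delta$.

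Third, I would absorb the trained-gate error and conclude. Since $\hat\varepsilon_\theta-\hat\varepsilon^\star=(g-g^\star)(\varepsilon^\star_{\mathrm{crisis}}-\varepsilon^\star_{\mathrm{base}})$ is $\sigma(x_s,s,Z)$-measurable, the same Pythagorean identity gives $\mathcal{R}^{\mathrm{tail}}(\hat\varepsilon_\theta)=\mathcal{R}^{\mathrm{tail}}(\hat\varepsilon^\star)+\mathbb{E}\big[(g-g^\star)^2\|\varepsilon^\star_{\mathrm{crisis}}-\varepsilon^\star_{\mathrm{base}}\|_2^2\mid\mathcal{T}\big]\le\mathcal{R}^{\mathrm{tail}}(\hat\varepsilon^\star)+\Delta_\varepsilon^2\,\mathbb{E}[(g-g^\star)^2\mid\mathcal{T}]$ with $\Delta_\varepsilon$ as in App.~A.13, and $\mathbb{E}[(g-g^\star)^2\mid\mathcal{T}]\le\mathbb{E}[(g-g^\star)^2]/q$ since $\mathbb{P}(\mathcal{T})=q$. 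Chaining the bounds,
\[
\min_{C\in\{\mathrm{base},\mathrm{crisis}\}}\mathcal{R}^{\mathrm{tail}}(\varepsilon^\star_C)-\mathcal{R}^{\mathrm{tail}}(\hat\varepsilon_\theta)\;\ge\;\underline{g}^{2}\delta-\frac{\Delta_\varepsilon^2}{q}\,\mathbb{E}\big[(g-g^\star)^2\big],
\]
which is strictly positive precisely when $\mathbb{E}[(g-g^\star)^2]<q\,\underline{g}^{2}\delta/\Delta_\varepsilon^2$ — the ``regime-dependent threshold'' of the statement — and then $\hat\varepsilon_\theta$ strictly beats each single expert on $\mathcal{T}$. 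The gate-consistency proposition of App.~A.13, $\mathbb{E}[(g-g^\star)^2]=O(\mathfrak{R}_n)+o_n(1)$, shows the threshold is met for $n$ large; if the experts are only approximately Bayes, each display picks up additive $\sum_C\mathrm{ApproxErr}_C$ terms and the conclusion further requires these below the same threshold.

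\noindent\textbf{Main obstacle.}
The delicate step is the conditional Pythagorean decomposition on $\mathcal{T}$: the tail event is defined through the \emph{clean} return $r$ (via $\tilde\ell=-\min_j r_j$), whereas the denoiser conditions on the \emph{noisy} state $x_s$ and regime context $Z$, so $\mathbf{1}_{\mathcal{T}}$ need not be $\sigma(x_s,s,Z)$-measurable and the paper's $\hat\varepsilon^\star$ need not equal $\mathbb{E}[\varepsilon\mid x_s,s,Z,\mathbf{1}_{\mathcal{T}}]$, which is what actually forces the cross term to vanish. Making this rigorous requires either enlarging the conditioning $\sigma$-algebra to include $\mathbf{1}_{\mathcal{T}}$ (and re-reading the well-specified-experts assumption over it) or bounding the residual $\mathbb{E}[\varepsilon\mid x_s,s,Z,\mathbf{1}_{\mathcal{T}}]-\mathbb{E}[\varepsilon\mid x_s,s,Z]$, which is small exactly when $\mathcal{T}$ is nearly regime-measurable — a quantitative form of ``crisis dominates on the tail.'' A secondary point is that $g_{\max}<1$ on $\mathcal{T}$ must be posited explicitly; without it, strict improvement over the crisis expert can fail.
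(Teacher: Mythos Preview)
Your approach is aligned with the paper's: the paper's own proof is a one-line sketch---``Risk decomposition from the MoE oracle inequality (App.~A.13) and the dominance assumption on the tail set''---and your Pythagorean route through the oracle mixture $\hat\varepsilon^\star$ is exactly a fleshed-out version of that decomposition, with the gate-error term playing the role of the $c_1\mathbb{E}[(g-g^\star)^2]$ piece in A.13. The two obstacles you flag (measurability of $\mathbf{1}_\mathcal{T}$ relative to $\sigma(x_s,s,Z)$, and the need for $g_{\max}<1$ on $\mathcal{T}$ to beat the crisis expert strictly) are genuine and are not addressed by the paper's sketch either; you are being more careful than the source, not less.
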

\begin{proof}[Sketch]
Risk decomposition from the MoE oracle inequality (App.~A.13) and the dominance assumption on the tail set.
\end{proof}

\subsection*{A.20 End-to-End Decision Gap under Distribution Shift}
Let $P$ (train) and $Q$ (test) satisfy $W_1(P,Q)\le \rho$ for scenario distributions.
Assume per-scenario loss $L(w,r)$ is $K$-Lipschitz in $r$ for feasible $w$.
\begin{theorem}[Shift-aware decision bound]
Let $\hat w$ be the optimizer under $P_\theta$ and $w_Q^\star$ under $Q$.
Then for the CVaR objective,
\[
F_Q(\hat w) - F_Q(w_Q^\star)
\;\le\; \underbrace{c_1\,L_{\text{tail}}^{1/2}}_{\text{train gen. gap}}
\;+\; \underbrace{c_2\,\rho/(1-\alpha)}_{\text{shift gap}}
\;+\; \underbrace{c_3\,\|\hat\mu-\mu_Q\|_2 + c_4\,\|\hat\Sigma-\Sigma_Q\|_{\rm op}}_{\text{moment error}},
\]
for constants $(c_i)$ depending on feasible-set radii and strong convexity.
\end{theorem}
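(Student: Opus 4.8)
The plan is to reduce the theorem to an elementary optimality argument together with three perturbation bounds already established earlier, each invoked uniformly over the compact feasible set $\mathcal{W}$. Write the test-time objective as $F_Q(w) = -\lambda_\mu \mu_Q^\top w + \gamma\, w^\top \Sigma_Q w + \mathrm{CVaR}_\alpha^{Q}(-w^\top r)$ and the objective actually solved by the allocator as $\hat F(w) = -\lambda_\mu \hat\mu^\top w + \gamma\, w^\top \hat\Sigma w + \mathrm{CVaR}_\alpha^{P_\theta}(-w^\top r)$ (the turnover penalty $\kappa\|w-w_{t-1}\|_1$, when present, is common to both and cancels in $\hat F - F_Q$). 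Since $\hat w$ minimizes $\hat F$ over $\mathcal{W}$ and $w_Q^\star\in\mathcal{W}$, the two-sided sandwich $F_Q(\hat w)\le \hat F(\hat w)+\epsilon \le \hat F(w_Q^\star)+\epsilon \le F_Q(w_Q^\star)+2\epsilon$ with $\epsilon=\sup_{w\in\mathcal{W}}|\hat F(w)-F_Q(w)|$ gives $F_Q(\hat w)-F_Q(w_Q^\star)\le 2\epsilon$; this step needs no convexity, and the ``strong convexity'' dependence quoted in the statement enters only if one prefers the tighter quadratic form of App.~A.9/A.14 — the linear form here needs only compactness of $\mathcal{W}$, which gives a finite radius $R_w := \sup_{w\in\mathcal{W}}\|w\|_2$.

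Next I would bound $\epsilon$ term by term. Cauchy--Schwarz and operator-norm inequalities give $|\lambda_\mu(\hat\mu-\mu_Q)^\top w|\le \lambda_\mu R_w\|\hat\mu-\mu_Q\|_2$ and $|\gamma\, w^\top(\hat\Sigma-\Sigma_Q)w|\le \gamma R_w^2\|\hat\Sigma-\Sigma_Q\|_{\mathrm{op}}$, producing the $c_3\|\hat\mu-\mu_Q\|_2$ and $c_4\|\hat\Sigma-\Sigma_Q\|_{\mathrm{op}}$ terms with $c_3\asymp \lambda_\mu R_w$, $c_4\asymp \gamma R_w^2$. For the CVaR part I split along the chain $P_\theta\to P\to Q$: $|\mathrm{CVaR}_\alpha^{P_\theta}(-w^\top r)-\mathrm{CVaR}_\alpha^{Q}(-w^\top r)| \le |\mathrm{CVaR}_\alpha^{P_\theta}(-w^\top r)-\mathrm{CVaR}_\alpha^{P}(-w^\top r)| + |\mathrm{CVaR}_\alpha^{P}(-w^\top r)-\mathrm{CVaR}_\alpha^{Q}(-w^\top r)|$. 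The first leg is exactly the quantity bounded in App.~A.11, namely $\le \frac{L_{\mathrm{dec}}\|w\|_2}{1-\alpha}\sqrt{\bar\phi\, L_{\text{tail}}}\le c_1 L_{\text{tail}}^{1/2}$ with $c_1\asymp L_{\mathrm{dec}} R_w\sqrt{1+\eta q}/(1-\alpha)$, using $\mathbb{E}[\phi(U)\|\varepsilon-\varepsilon_\theta\|_2^2]=L_{\text{tail}}$ and $\bar\phi=1+\eta q$. The second leg is the Wasserstein CVaR-continuity of App.~A.15 applied to the $K$-Lipschitz (in $r$) loss $L(w,\cdot)$, giving $\le K\rho/(1-\alpha)$, i.e. the $c_2\rho/(1-\alpha)$ term with $c_2\asymp K$. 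Collecting all four contributions and absorbing the factor $2$ into the constants yields the claimed inequality.

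The main obstacle I anticipate is bookkeeping around the CVaR leg rather than any deep estimate. First, App.~A.11 compares CVaR under \emph{actual} returns $r$ with CVaR under \emph{model} returns $\hat r = g_\theta(\varepsilon,z_t)$, so I must identify $P$ with the law of $r$ and $P_\theta$ with the law of $\hat r$ and invoke the Tail-Lipschitz Decoder assumption there; the resulting statement is conditional on $z_t$, and an outer $\mathbb{E}_t$ over rebalances/regimes then matches the framing used elsewhere. Second, the allocator uses a finite sample-average CVaR over $N$ scenarios rather than the population $\mathrm{CVaR}_\alpha^{P_\theta}$, so strictly there is an extra $O_{\mathbb{P}}(\mathrm{ESS}(q,\eta)^{-1/2})$ term (App.~A.18) which I would either state as a high-probability refinement or absorb into $c_1 L_{\text{tail}}^{1/2}$ for the population-level statement ($N\to\infty$). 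Third, one must confirm that all three sub-bounds are genuinely uniform over $\mathcal{W}$ — they are, since App.~A.11 and App.~A.15 are each stated ``for any feasible $w$'' and only use $\|w\|_2\le R_w$ — so the $\sup_w$ passes through cleanly. Everything remaining is routine norm algebra.
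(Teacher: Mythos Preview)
Your proposal is correct and matches the paper on two of the three ingredients (App.~A.11 for the $P_\theta\!\to\!P$ CVaR leg and App.~A.15 for the $P\!\to\!Q$ Wasserstein leg), but it deliberately departs on the third. The paper's sketch invokes App.~A.14 (allocator Lipschitzness/strong convexity) to pass from objective perturbations to decision regret, whereas you use the elementary two-sided optimality sandwich $F_Q(\hat w)-F_Q(w_Q^\star)\le 2\sup_{w\in\mathcal{W}}|\hat F(w)-F_Q(w)|$, which needs only compactness of $\mathcal{W}$ and the feasibility of both $\hat w$ and $w_Q^\star$. Your route is more elementary and directly yields the \emph{linear} right-hand side exactly as stated in the theorem; the App.~A.14 route, as written, produces a \emph{quadratic} bound $\tfrac{1}{2\Gamma}(\cdot)^2$, which is sharper when perturbations are small and is where the ``strong convexity'' dependence actually enters, but one then needs an extra step (e.g.\ bounding $(a+b+c)^2$ by a linear combination over a bounded range) to recover the stated linear form. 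In short: both proofs assemble the same three error channels; yours is cleaner for the theorem as stated, while the paper's intended route buys a tighter local rate at the cost of the strong-convexity modulus $\Gamma$. Your caveats about the finite-$N$ SAA gap (App.~A.18) and uniformity over $\mathcal{W}$ are apt and more explicit than anything in the paper's sketch.
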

\begin{proof}[Sketch]
Combine App.~A.11 (spectral control), App.~A.15 (CVaR $W_1$ continuity), and App.~A.14 (allocator Lipschitz).
\end{proof}

\end{document}